\def\eqref#1{equation~\ref{#1}}
\def\1{\bm{1}}
\DeclareMathAlphabet{\mathsfit}{\encodingdefault}{\sfdefault}{m}{sl}
\SetMathAlphabet{\mathsfit}{bold}{\encodingdefault}{\sfdefault}{bx}{n}
\DeclareMathOperator*{\argmin}{arg\,min}
\newtheorem{theorem}{Theorem}[section]
\newtheorem{lemma}[theorem]{Lemma}
\newtheorem{corollary}[theorem]{Corollary}
\title{Prioritized Soft Q-Decomposition for\\Lexicographic Reinforcement Learning}
\author{%
Finn Rietz\thanks{Corresponding author: \texttt{finn.rietz@oru.se}} \\
Örebro University\\
Sweden \\
\And
Erik Schaffernicht \\
Örebro University\\
Sweden \\
\And
Stefan Heinrich \\
IT University of Copenhagen\\
Denmark \\
\And
Johannes A. Stork \\
Örebro University\\
Sweden \\
}
\begin{document}

\maketitle

\begin{abstract}
Reinforcement learning (RL) for complex tasks remains a challenge, primarily due to 
the difficulties of engineering scalar reward functions 
and the inherent inefficiency of training models from scratch.
Instead, it would be better to specify complex tasks in terms of elementary subtasks and to reuse subtask solutions whenever possible.
In this work, we 
address 
continuous space lexicographic multi-objective RL problems, consisting of prioritized subtasks, which are notoriously difficult to solve.
We show that these can be scalarized with a subtask transformation and then solved incrementally using value decomposition.
Exploiting this insight, we propose \emph{prioritized soft Q-decomposition} (PSQD), 
a novel algorithm for learning and 
adapting subtask solutions under lexicographic priorities in continuous state-action spaces.
PSQD offers the ability to reuse previously learned subtask solutions
in a zero-shot composition, followed by an adaptation step.
Its ability to use retained subtask training data for offline learning eliminates the need for new environment interaction during adaptation.
We demonstrate the efficacy of our approach by presenting successful learning, reuse, and adaptation results for both low- and high-dimensional simulated robot control tasks, as well as offline learning results. 
In contrast to baseline approaches, PSQD does not trade off between conflicting subtasks or priority constraints and satisfies subtask priorities during learning.
PSQD provides an intuitive framework for tackling complex RL problems,
offering insights into the inner workings of the subtask composition.
\end{abstract}

\section{Introduction}
\label{sec:introduction}
%
Reinforcement learning (RL) for complex tasks is challenging because as practitioners we must design a scalar-valued reward function that induces the desired behavior~\citep{sac_lagrangian, tessler2019rcpo, rlblogpost} and we must expensively learn each task from scratch. 
Instead, it might be more suitable to model complex tasks as several simpler subtasks~\citep{gabor1998multi, roijers2013survey, nguyen2020multi, hayes2022practical}, and more efficient to reuse or adapt subtask solutions that were learned previously~\citep{arnekvist2019vpe, hausman2018learning, finn2017model}. To prevent potentially conflicting subtasks from deteriorating the overall solution, we can impose lexicographic priorities on the subtasks~\citep{lexi_morl, gabor1998multi, zhang2022lexicographic}. This means ordering all subtasks by priority and forbidding lower-priority subtasks from worsening the performance of higher-priority subtasks, which is similar to null-space projection approaches for complex multi-task control problems~\citep{dietrich2015overview}. 
In RL, this setting is referred to as a \emph{lexicographic} multi-objective reinforcement learning (MORL) problem.

However, 
state-of-the-art RL algorithms for continuous state and action spaces like Soft Q-Learning (SQL) \citep{haarnoja2017reinforcement}, Soft Actor-Critic (SAC) \citep{haarnoja2018soft}, or Proximal Policy Optimization (PPO) \citep{schulman2017ppo} do not support lexicographic MORL problems out of the box and obvious extensions still trade off conflicting subtasks, leading to suboptimal solutions, as we demonstrate in our experiments.
Current lexicographic MORL algorithms like \citep{Li2019Urban, zhang2022lexicographic} enumerate actions and are therefore limited to discrete problems 
or rely on constrained optimization methods that trade-off between optimizing subtask and priority objectives \citep{lexi_morl}.
All these lexicographic algorithms learn a monolithic policy, precluding 
any interpretation or reuse of previously learned subtask solutions.
Furthermore, it is unknown what overall reward function these approaches actually optimize, which hampers principled understanding of lexicographic MORL algorithms.
In this paper, we are the first to show that
%
such complex, lexicographic MORL problems can be scalarized~\citep{roijers2013survey} and
we 
propose \emph{prioritized soft Q-decomposition} (PSQD), a learning algorithm that solves lexicographic MORL tasks in a decomposed fashion~\citep{russell2003q}. 
%
For this, we proposed a subtask transformation that recovers the Q-function of the lexicographic MORL problem as the sum of Q-functions of the transformed subtasks and so provides the overall scalar reward function.
%
%
%
PSQD computes this sum incrementally by learning and transforming subtask Q-functions, beginning at the highest-priority subtask and 
step-by-step learning and adding Q-functions for lower-priority subtasks.
%
%
In every step, an arbiter ensures that the lower-priority subtasks do not worsen already solved higher-priority subtasks and restricts the lower-priority subtask to solutions within the so-called \emph{action indifference space} of the higher-priority subtasks. 
%
%
PSQD provides a safe learning and exploration framework, which is crucial for many real-world scenarios, and allows reuse and adaptation of separately learned subtask solutions to more complex, lexicographic MORL task, which saves data and computation costs. 
The decomposable Q-function benefits interpretability of the learned behavior by allowing analysis and inspection of subtask solutions and their action indifference spaces.

The contributions of this paper are:
%
(1) We show that lexicographic MORL problems can be scalarized using our proposed subtask transformation. 
(2) We show how to reuse and adapt separately learned subtask solutions to more complex, lexicographic MORL problems, starting from a priority-respecting zero-shot composition solution.
%
(3) We provide PSQD, an algorithm for lexicographic MORL problems with continuous state-action spaces, that reuses both subtask solutions and retained offline data.

\section{Problem definition and preliminaries}
\label{sec:prelim}

We formalize the complex lexicographic MORL problem~\citep{lexi_morl, gabor1998multi} as a Markov decision process (MDP) 
$\mathcal{M} \equiv (\mathcal{S, A}, \mathbf{r}, p_s, \gamma, \succ)$ consisting of state space $\mathcal{S}$, action space $\mathcal{A}$, reward function $\mathbf{r} \colon \mathcal{S} \times \mathcal{A} \to \mathbb{R}^n$, and discount factor $\gamma \in [0, 1]$. The subtasks are numbered $i = 1, 2, \dots  n$ and ranked with the transitive priority relation $1 \succ 2 \succ \dots n$ with task $1$ having the highest and task $n$ having the lowest priority. 
Each dimension of the vector-valued reward corresponds to one scalar subtask reward function $[\mathbf{r}(\mathbf{s}, \mathbf{a})]_i = r_i(\mathbf{s}, \mathbf{a})$. The discrete-time dynamics are given by a conditional distribution $p_s(\mathbf{s}_{t+1} \mid \mathbf{s}_{t}, \mathbf{a}_{t})$ with $\mathbf{s}_{t}, \mathbf{s}_{t+1} \in \mathcal{S}$ and $\mathbf{a}_{t} \in \mathcal{A}$. We use $(\mathbf{s}_t, \mathbf{a}_t)\sim\rho_\pi$ to denote the state-action marginal induced by a policy $\pi(\mathbf{a}_t \mid \mathbf{s}_t )$.


Lexicographic subtask priorities can be defined by constraining the policy in each subtask $i$ to stay close to optimal (i.e., $\varepsilon$-optimal) for the next higher-priority task $i-1$  \citep{lexi_morl, zhang2022lexicographic}. Formally, such an $\varepsilon$-lexicographic MORL problem implies a set of allowed policies $\Pi_i$ for each subtask~$i$,
\begin{equation}
	\Pi_{i} = \{ \pi \in \Pi_{i-1} \mid \underset{\pi' \in \Pi_{i-1}}{\max} J_{i-1}(\pi') - J_{i-1}(\pi) \le \varepsilon_{i-1} \}
	\label{eq:general-lexicographic-constraint}
	,
\end{equation} 
where $J_i$ are performance criteria like value-functions, and $0 \leq \varepsilon_i \in \mathbb{R}$ are thresholds for each subtask. 
However, it is not practical to compute the sets $\Pi_i$ explicitly when solving $\varepsilon$-lexicographic MORL problems.



%
%
%
Current approaches solve $\varepsilon$-lexicographic MORL problems with constrained RL or incremental action selection in discrete domains \citep{lexi_morl, zhang2022lexicographic} and it is unclear what equivalent scalar reward function these methods actually optimize~\citep{lexi_morl, gabor1998multi, vamplew2022scalar}.
In contrast, we transform the individual subtasks and their Q-functions to implement the priority constraints in \eqref{eq:general-lexicographic-constraint}. 
This leads to an equivalent scalar reward function for the complex $\varepsilon$-lexicographic MORL problem. Our definition of the performance criteria $J_i$ in Sec.~\ref{sec:constraint-policy-transform} is key to our incremental and decomposed learning approach with this reward function. 

\subsection{Q-decomposition}
\label{sec:q-decomposition}
Our goal is to reuse and adapt subtask solutions that were learned previously to new and more complex $\varepsilon$-lexicographic MORL problems. For this, we require a decomposed learning algorithm that models individual subtask solutions and combines them to the solution of the complex $\varepsilon$-lexicographic MORL problem.

%
%
%
\emph{Q-decomposition}~\citep{russell2003q} is such an algorithm for MORL problems where the complex task's reward function is the sum of subtask rewards, $r_\Sigma(\mathbf{s}_t, \mathbf{a}_t) = \sum_{i=1}^n [\mathbf{r}(\mathbf{s}_t, \mathbf{a}_t)]_i$. 
Instead of learning the sum Q-function, $Q_\Sigma$, 
the algorithm learns one Q-function $Q_i$ for each subtask and submits them to a central \emph{arbiter}. 
The arbiter combines the Q-functions, $Q_\Sigma(\mathbf{s}, \mathbf{a}) = \sum_{i=1}^n Q_i(\mathbf{s}, \mathbf{a})$, and selects the maximizing action for its policy $\pi_\Sigma$. 
Q-decomposition uses on-policy SARSA-style \citep{sutton2018reinforcement, rummery1994line} updates with the arbiter policy $\pi_\Sigma$ to avoid \emph{illusion of control}~\citep{russell2003q} or the \textit{attractor phenomenon}~\cite{Seijen2017advisor, Seijen2017hra} in subtask learning. 

Summing and scaling subtask rewards alone does not result in lexicographic priorities because low-priority subtasks with sufficiently large reward scales can still dictate over high-priority subtasks.
Therefore, we cannot directly use the Q-decomposition algorithm for lexicographic MORL. 
Our algorithm builds on Q-decomposition but we transform subtasks to ensure that reward- and Q-functions can be summed up \emph{whilst} implementing lexicographic priorities. 
To emphasize this difference, we denote transformed subtask Q-functions by $Q_{\succ i}$, which estimate the long-term value of transformed subtask rewards $r_{\succ i}$, such that the Q-function of the lexicographic MORL problem corresponds to $Q_{1 \succ 2 \succ ... n}(\mathbf{s}, \mathbf{a}) = \sum_{i=1}^n Q_{\succ i}(\mathbf{s}, \mathbf{a})$, or $Q_\succ$ for short. $Q_{i}$ and $r_i$ refer to the untransformed subtask Q-functions and rewards in the lexicographic MORL problem.

\subsection{Maximum entropy reinforcement learning} 
\label{sec:max-ent-rl}

We aim to solve $\varepsilon$-lexicographic MORL problems with continuous states and actions, but instead of explicitly computing \eqref{eq:general-lexicographic-constraint}, we focus on identifying as many \emph{near optimal actions} as possible, such that lower-priority subtasks have more alternative actions to choose from.
For this reason, we use maximum entropy reinforcement learning (MaxEnt RL)~\citep{ziebart2008maximum, haarnoja2018soft, haarnoja2017reinforcement} for learning subtask policies $\pi_i$  and subtask Q-functions $Q_i$. 
MaxEnt RL maximizes the reward signal augmented by the Shannon entropy of the policy's action distribution, $\mathcal{H}( \mathbf{a}_t \mid \mathbf{s}_t) = \mathbb{E}_{\mathbf{a}_t \sim \pi(\mathbf{a}_t \mid \mathbf{s}_t)}[- \log \pi( \mathbf{a}_t \mid \mathbf{s}_t)]$, i.e.,
\begin{equation}
	\pi^*_\text{MaxEnt} 
	= 
	\arg \max_\pi  
	\sum_{t=1}^\infty 
	\mathbb{E}_{(\mathbf{s}_t, \mathbf{a}_t) \sim \rho_\pi} 
	\big[
	\gamma^{t-1} 
	\big(
	r(\mathbf{s}_t, \mathbf{a}_t) 
	+ 
	\alpha \mathcal{H}( \mathbf{a}_t \mid \mathbf{s}_t)
	\big)
	\big],
	\label{eq:objective-maxent}
\end{equation}
where the scaler $\alpha$ can be used to determine the relative importance of the entropy term. This improves exploration in multimodal problems but also prevents overly deterministic policies, which is key for identifying many near-optimal actions. Following \citet{haarnoja2017reinforcement}, we drop $\alpha$ because it can be replaced with reward scaling. 
Importantly, the entropy term in the objective leads to the following energy-based Boltzmann distribution as the optimal MaxEnt policy,
\begin{equation}
	\pi^*_{\text{MaxEnt}}( \mathbf{a}_t \mid \mathbf{s}_t)
	=
	\exp{
		(
		Q^*_{\text{soft}} (\mathbf{s}_t, \mathbf{a}_t) 
		-
		V^*_{\text{soft}} (\mathbf{s}_t) 
		),
	}
	\label{eq:maxent-policy}
\end{equation}
where $Q^*_{\text{soft}} (\mathbf{s}_t, \mathbf{a}_t)$ acts as the negative energy and $V^*_{\text{soft}} (\mathbf{s}_t)$ is the log-partition function. 
The \emph{soft} Q-function $Q^*_{\text{soft}}$ and  \emph{soft} value function $V^*_{\text{soft}}$ in \eqref{eq:maxent-policy} are given by
\begin{equation}
	Q^*_\text{soft}(\mathbf{s}_t, \mathbf{a}_t) 
	= 
	r(\mathbf{s}_t, \mathbf{a}_t) 
	+ 
	\mathbb{E}_{(\mathbf{s}_{t+1}, \dots) \sim \rho_\pi} 
	\bigg[ 
	\sum_{l=1}^\infty 
	\gamma^l 
	\big(
	r(\mathbf{s}_{t+l}, \mathbf{a}_{t+l}) 
	+ 
	\mathcal{H}(\mathbf{a}_{t+l} \mid \mathbf{s}_{t+l})
	\big) 
	\bigg]
	,
	\label{eq:soft-q}
\end{equation}
%
%
\begin{equation}
	\label{eq:soft_value_function}
	V^*_\text{soft}(\mathbf{s}_t) 
	= 
	\log
	\int_\mathcal{A}
	\exp
	(
	Q^*_\text{soft}(\mathbf{s}_t, \mathbf{a}^\prime) 
	)
	\,
	d \mathbf{a}^\prime
	,
\end{equation}
where the entropy in \eqref{eq:soft-q} is based on the optimal MaxEnt policy $\pi^*_{\text{MaxEnt}}$. For sampling actions, we can disregard the log-partition function in \eqref{eq:maxent-policy} and directly exploit the proportionality relationship,
\begin{equation}
	\pi^*_{\text{MaxEnt}}( \mathbf{a}_t \mid \mathbf{s}_t)
	\propto
	\exp{
		(
		Q^*_{\text{soft}} (\mathbf{s}_t, \mathbf{a}_t) 
		),
	}
	\label{eq:policy-proportion}
\end{equation}

between the policy and the soft Q-function, e.g., with Monte Carlo methods~\citep{metropolis1953equation, hastings1970, HMC1987}. 
We drop the $_\text{MaxEnt}$ and $_\text{soft}$ subscripts for the remainder of the paper to avoid visual clutter.
The \emph{soft Q-learning} \citep{haarnoja2017reinforcement} algorithm implements a \emph{soft} Bellman contraction operator and in practice learns a model for sampling the complex Boltzmann distribution in \eqref{eq:policy-proportion}.
Our algorithm, PSQD, builds on soft Q-learning and combines it with $\varepsilon$-lexicographic priorities as well as Q-decomposition from above. 

\section{Prioritized soft Q-decomposition}
\label{sec:method}
In this section, we explain our approach for solving $\varepsilon$-lexicographic MORL problems.
In Sec.~\ref{sec:constraint-policy-transform}, we explain our subtask transformation for modeling $\varepsilon$-lexicographic priorities and show how it scalarizes $\varepsilon$-lexicographic MORL problems.
In Sec.~\ref{sec:learning}, we derive our decomposed learning algorithm, PSQD, for $\varepsilon$-lexicographic MORL problems and describe a practical version of this algorithm for continuous spaces in Sec.~\ref{sec:practical-algorithm}.
%

\subsection{Subtask transformation for lexicographic priorities in MaxEnt RL}
\label{sec:constraint-policy-transform}
Our goal is to learn a MaxEnt (arbiter) policy $\pi_\succ( \mathbf{a} \mid \mathbf{s}) \propto \exp(Q_\succ (\mathbf{s}, \mathbf{a}))$ for the $\varepsilon$-lexicographic MORL problem that fulfills the constraint in \eqref{eq:general-lexicographic-constraint}. However, instead of explicitly representing the sets $\Pi_i$, we 
focus on the action space and
define a state-based version of the $\varepsilon$-optimality constraint on the policy $\pi_\succ$ using the on-arbiter subtask Q-functions $Q_i$
%
\begin{equation}
	\max_{\mathbf{a}^\prime \in \mathcal{A}} Q_i(\mathbf{s}, \mathbf{a}^\prime)
	-
	Q_i(\mathbf{s}, \mathbf{a})
	\leq \varepsilon_i
	,
	\forall  \mathbf{a} \sim \pi_\succ 
	,
	\,
	\forall \mathbf{s} \in \mathcal{S}
	,
	\forall i \in \{1, \dots, n - 1\}
	.
	\label{eq:general-lexicograpic-constraints}
\end{equation}
This restricts $\pi_\succ$ in each state to actions that are close to the optimal for each subtask $i < n$. 
As discussed in Sec.~\ref{sec:q-decomposition}, summing up subtask Q-functions and using the proportionality relationship in \eqref{eq:policy-proportion} to directly define an arbiter policy $\pi_\Sigma$ will in general \emph{not} satisfy the $\varepsilon$-lexicographic priority constraints in \eqref{eq:general-lexicograpic-constraints}.
Therefore, we propose a transformation of the subtasks and their Q-functions which provides that the constraints are satisfied when the \textit{transformed} subtask Q-functions are summed up to $Q_\succ$.
%
%
%
Our transformation relies on a constraint indicator function $c_i \colon \mathcal{S} \times \mathcal{A} \to \{ 0, 1\}$ that shows whether an action $\mathbf{a}$ in state $\mathbf{s}$ is permitted by the priority constraint for subtasks $i$,
%
\begin{equation}
	c_i(\mathbf{s}, \mathbf{a}) 
	=
	\begin{cases}
		1,
		& 
		\max_{\mathbf{a}^\prime \in \mathcal{A}} Q_i(\mathbf{s}, \mathbf{a}^\prime)
		-
		Q_i(\mathbf{s}, \mathbf{a})
		\leq \varepsilon_i
		\\
		0, 
		& \text{otherwise}
	\end{cases}
	.
	\label{eq:indicator-function}
\end{equation}

We refer to the set of permitted actions by subtask $i$ in state $\mathbf{s}$, $\mathcal{A}_{\succ i}(\mathbf{s}) = \{ \mathbf{a} \in \mathcal{A} \mid  c_i (\mathbf{s}, \mathbf{a}) = 1 \}$, as the \emph{action indifference space} of subtask $i$.
The intuition is that subtask $i$ is indifferent as to which $\varepsilon$-optimal action in $\mathcal{A}_{\succ i}(\mathbf{s})$ is selected because they are all close enough to optimal.
Analogously, the intersection $\mathcal{A}_\succ (\mathbf{s}) = \cap_{i=1}^{n-1} A_{\succ i} (\mathbf{s})$ is the global action indifference space and $\bar{\mathcal{A}}_{\succ}(\mathbf{s}) = \mathcal{A} \setminus \mathcal{A}_\succ(\mathbf{s})$ is the set of forbidden actions. As mentioned above, keeping $\mathcal{A}_{\succ i}(\mathbf{s})$ large is crucial because it provides more actions for optimizing the arbiter policy. 

Using $Q_i$ and $c_i$ as defined above, we can now write down the Q-function of the $\varepsilon$-lexicographic MORL problem, which shows that constraint-violating actions are forbidden by their Q-values,
\begin{equation}
	Q_{\succ}(\mathbf{s},\mathbf{a}) 
	= 
	\sum_{i=1}^{n - 1}
	\ln(c_i(\mathbf{s},\mathbf{a})) + Q_n(\mathbf{s},\mathbf{a})
	,
	\label{eq:q-projection}	
\end{equation}
where we use $\ln(1) = 0$ and $ \lim_{x \rightarrow 0^+} \ln(x) = - \infty$. 
Here, the $\ln(c_i)$ terms are the transformed Q-functions for subtasks $i < n$ in the view of Q-decomposition.
Moreover, $Q_{\succ}$ is equal to $Q_n$ for all $\mathbf{a} \in \mathcal{A}_{\succ}(\mathbf{s})$ and $Q_\succ(\mathbf{s}, \mathbf{a}) = -\infty$ otherwise. Our MaxEnt arbiter policy $\pi_\succ( \mathbf{a} \mid \mathbf{s}) \propto \exp(Q_\succ (\mathbf{s}, \mathbf{a}))$ has a product structure,
\begin{equation}
	\label{eq:arbiter-policy}
	\begin{split}
		\pi_\succ( \mathbf{a} \mid \mathbf{s}) 
		&\propto 
		\exp \left(
		\sum_{i=1}^{n-1}
		\ln(c_i(\mathbf{s}, \mathbf{a}))
		+
		Q_n (\mathbf{s}, \mathbf{a})
		\right)
		=
		\left(
		\prod_{i=1}^{n-1}
		c_i(\mathbf{s}, \mathbf{a})
		\right)
		\exp Q_n (\mathbf{s}, \mathbf{a}),
	\end{split}
\end{equation}
confirming that $\pi_\succ$ has likelihood $0$ for actions outside the global indifference space whilst selecting actions inside the global indifference spaces proportional to $\exp Q_n$.

A close inspection of \eqref{eq:q-projection} shows that the corresponding transformed subtask reward functions are $r_{\succ i}(\mathbf{s}, \mathbf{a}) =  \ln(c_i (\mathbf{s}, \mathbf{a}))$ and $r_{\succ n} = r_n$, further revealing $r_\succ = \sum_{i=1}^n r_{\succ i}$ as the scalar reward function of the $\varepsilon$-lexicographic MORL problem.
For a detailed derivation, see supplementary material in Sec.~\ref{app:lexi_morl_task}.
The Q-decomposition view suggests a decomposed learning algorithm with the transformed rewards $r_{\succ i}$.  
However, because the transformed rewards $r_{\succ i}$ are defined as the transformed Q-functions $Q_{\succ i}$, our decomposed algorithm learns $Q_i$ and obtains $Q_{\succ i}$ by first computing $c_i$ for \eqref{eq:q-projection}. 
Furthermore, $Q_i$ are learned on-policy for the arbiter, to avoid illusion of control in the subtasks~\citep{russell2003q, Seijen2017advisor, Seijen2017hra}.

\subsection{Incremental and decomposed learning with lexicographic priorities}
\label{sec:learning}
We want to exploit the structure of the Q-function $Q_\succ$ and policy $\pi_\succ$ of the $\varepsilon$-lexicographic MORL problem as described in Sec.~\ref{sec:constraint-policy-transform} to define an incremental and decomposed learning approach that can reuse and adapt subtask solutions. Our algorithm incrementally includes more subtasks ordered by their priority:
In the first step, we learn subtask 1 (with highest priority) without any restrictions.
In the following steps, we learn subtask $1 < i \leq n$ restricted to the action indifference spaces of subtasks with higher priority than $i$ and reuse the results of higher-priority subtasks, i.e. $Q_1$ to $Q_{i-1}$ for the components of $Q_\succ$. Subtasks with priority lower than $i$ do not affect the learning of subtask $i$.
%
In this and the following sections, we describe the process generically for the last subtask $i=n$ because it aligns with the notation in \eqref{eq:q-projection}. 

In the Q-decomposition view, we are optimizing the arbiter policy  $\pi_\succ$ with Q-function $Q_\succ$ for the MORL problem with (transformed) rewards $r_{\succ 1}, \dots, r_{\succ n-1}$ and we are given on-arbiter-policy estimates $Q_i^{\pi_{\succ}}$ for subtasks $1 \leq i < n$. As seen in \eqref{eq:q-projection}, our means for improving the arbiter policy $\pi_\succ$ is maximizing the term $Q_n$ for subtask $n$. 
For this, we perform a \emph{policy evaluation step} under the arbiter policy to get a new estimate of the subtask Q-function $Q_n^{\pi_\succ}$, which also changes our estimate of the global arbiter Q-function $Q_\succ$---we call this \emph{arbiter policy evaluation}. 
Next, we perform a \emph{policy improvement step} by softly maximizing $Q_n^{\pi_\succ}$, which again also changes $Q_\succ$---we call this \emph{arbiter policy improvement}. 
This algorithm closely resembles the Q-decomposition algorithm in Sec.~\ref{sec:q-decomposition} because $Q_n$ is updated on-policy with the global arbiter policy $\pi_\succ$.
We theoretically analyze a \emph{subtask} view of this algorithm below and provide an additional \textit{arbiter} view analysis and interpretation of this algorithm in supplementary material Sec.~\ref{app:globa_view}.


\paragraph{Subtask view theory.} In the subtask view, the algorithm optimizes subtask $n$ with soft Q-learning in an MDP $\mathcal{M}_{\succ n}$ that is like $\mathcal{M}$ in Sec.~\ref{sec:prelim} but uses scalar reward $r_n$ and has the action space restricted to the global action indifference space, $\mathcal{A}_{\succ}(\mathbf{s})$, in every state.
Therefore, the action space of $\mathcal{M}_{\succ n}$ satisfies the lexicographic priority constraints by construction.  
This allows us to perform off-policy soft Q-learning in $\mathcal{M}_{\succ n}$ to directly obtain $Q_{\succ n}^*$, while still respecting all priority constraints.
We do this with an \emph{off-policy, soft Bellman backup operator} $\mathcal{T}$
\begin{equation}\label{eq:psqd_backup_operator}
	\mathcal{T}Q(\mathbf{s}, \mathbf{a}) \triangleq r(\mathbf{s}, \mathbf{a}) 
	+ 
	\gamma \mathbb{E}_{\mathbf{s}^\prime \sim p} 
	\bigg[
	\underbrace{\log \int_{\mathcal{A}_\succ (\mathbf{s}^\prime)} \exp \big( Q(\mathbf{s}^\prime, \mathbf{a}^\prime)\big) d \mathbf{a}^\prime}_{V(\mathbf{s}^\prime)}
	\bigg]
	,
\end{equation}
which is the update step from soft Q-learning \citep{haarnoja2017reinforcement} and where $\mathbf{a} \in \mathcal{A}_\succ(\mathbf{s})$.

\begin{theorem}[Prioritized Soft Q-learning]
	Consider the soft Bellman backup operator $\mathcal{T}$, and an initial mapping $Q^0: \mathcal{S} \times \mathcal{A}_\succ \to \mathbb{R}$ with $|\mathcal{A}_\succ| < \infty$ and define $Q^{l+1} = \mathcal{T}Q^{l}$, then the sequence of $Q^l$ converges to $Q^*_\succ$, the soft Q-value of the optimal arbiter policy $\pi_\succ^*$, as $ l \to \infty$.
	\label{col:soft_q_itearation}
\end{theorem}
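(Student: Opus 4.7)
The plan is to mirror the convergence argument for standard soft Q-learning (Theorem 3 of \citet{haarnoja2017reinforcement}) and show that the only modification, namely restricting the inner log-sum-exp to the state-dependent set $\mathcal{A}_\succ(\mathbf{s})$, preserves the contraction property. Since the action indifference spaces are determined by the already-fixed higher-priority Q-functions $Q_1,\dots,Q_{n-1}$, the operator $\mathcal{T}$ in \eqref{eq:psqd_backup_operator} is a well-defined self-map on the Banach space $\mathcal{B}=\{Q:\mathcal{S}\times\mathcal{A}_\succ\to\mathbb{R}\}$ equipped with the $\|\cdot\|_\infty$ norm; finiteness of $\mathcal{A}_\succ$ (and, implicitly, boundedness of $r_n$) keeps the soft value $V$ finite.

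First I would establish the core lemma: for any two bounded functions $f,g$ on $\mathcal{A}_\succ(\mathbf{s}')$ and any positive measure on this set, the log-partition functional is non-expansive, i.e.
\[
\Bigl|\log\!\int_{\mathcal{A}_\succ(\mathbf{s}')}\!\exp f(\mathbf{a}')\,d\mathbf{a}' \;-\; \log\!\int_{\mathcal{A}_\succ(\mathbf{s}')}\!\exp g(\mathbf{a}')\,d\mathbf{a}'\Bigr| \;\le\; \|f-g\|_\infty.
\]
This follows from the standard identity obtained by writing $f=g+(f-g)$, factoring out $\exp\|f-g\|_\infty$ as upper/lower bounds inside the integral, and taking the log. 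Crucially, the inequality does not depend on \emph{which} subset we integrate over, so the state dependence of $\mathcal{A}_\succ(\mathbf{s}')$ is immaterial.

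Next, applying this pointwise to $\mathcal{T}Q_1-\mathcal{T}Q_2$ and using that the reward cancels and the expectation over $\mathbf{s}'\sim p$ is a convex combination, I obtain
\[
\|\mathcal{T}Q_1-\mathcal{T}Q_2\|_\infty \;\le\; \gamma\,\|Q_1-Q_2\|_\infty,
\]
so $\mathcal{T}$ is a $\gamma$-contraction on $(\mathcal{B},\|\cdot\|_\infty)$. By the Banach fixed-point theorem, the iterates $Q^{l+1}=\mathcal{T}Q^l$ converge to a unique fixed point $Q^\star\in\mathcal{B}$. Finally, I would identify $Q^\star$ with $Q^*_\succ$: by the subtask-view construction in Sec.~\ref{sec:constraint-policy-transform}, $\mathcal{T}$ is precisely the soft Bellman optimality operator for the MDP $\mathcal{M}_{\succ n}$ obtained from $\mathcal{M}$ by restricting the action set to $\mathcal{A}_\succ(\mathbf{s})$ and using reward $r_n$; its fixed point is, by the standard soft-MDP characterization, the optimal soft Q-function, which coincides with $Q^*_\succ$ via \eqref{eq:q-projection} since the $\ln c_i$ terms contribute $0$ on $\mathcal{A}_\succ$ and $-\infty$ off it.

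The main obstacle I anticipate is not the contraction step itself but the bookkeeping needed to rigorously identify the fixed point with $Q^*_\succ$ (as opposed to merely ``the optimal Q of the restricted MDP''). In particular one must argue that (i) the restricted MDP is non-degenerate, i.e.\ $\mathcal{A}_\succ(\mathbf{s})\neq\emptyset$ for every reachable state, which holds because the maximizer of $Q_i$ is always admissible and the priority thresholds $\varepsilon_i\ge 0$; and (ii) the resulting arbiter policy $\pi_\succ^*\propto\exp(Q_\succ^*)$ indeed satisfies the $\varepsilon$-lexicographic constraints of \eqref{eq:general-lexicograpic-constraints}, which is immediate from the product form \eqref{eq:arbiter-policy}. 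Everything else is a direct adaptation of the finite-action soft Q-learning proof.
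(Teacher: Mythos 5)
Your proposal follows essentially the same route as the paper's own proof: the key step in both is the non-expansiveness of the log-sum-exp functional (which, as you note, is insensitive to the state-dependent integration domain $\mathcal{A}_\succ(\mathbf{s}')$), giving a $\gamma$-contraction in the sup norm, combined with identifying the fixed point as $Q^*_\succ$ by observing that the integral over the forbidden set $\bar{\mathcal{A}}_\succ(\mathbf{s})$ vanishes since $Q_\succ = -\infty$ there and the $\ln c_i$ terms contribute $0$ on $\mathcal{A}_\succ(\mathbf{s})$. The only small caveat is your justification of non-degeneracy in point (i): admissibility of each subtask's maximizer shows each $\mathcal{A}_{\succ i}(\mathbf{s})$ is non-empty but not that their \emph{intersection} is, which the paper instead guarantees through its incremental adaptation order (lower-priority indifference spaces are constructed inside the already-fixed higher-priority ones).
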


\begin{proof}
	Proof and detailed derivation in supplementary material Sec.~\ref{app:soft_q_iteration}.
\end{proof}

While the described backup operator $\mathcal{T}$ yields the optimal global solution in tabular settings, it is intractable in large and continuous state and action spaces.
In the next section, we convert $\mathcal{T}$ and Theorem~\ref{col:soft_q_itearation} into a stochastic optimization problem that can be solved approximately via parametric function approximation.
\subsection{Practical learning and adaptation algorithm for continuous spaces}
\label{sec:practical-algorithm}
%
%
%
For PSQD, we implement the subtask view described in Sec.~\ref{sec:learning}, building on the soft Q-learning (SQL) algorithm~\citep{haarnoja2017reinforcement}.
Learning subtask $n$ in the MDP $\mathcal{M}_{\succ n}$ requires an efficient way for sampling in the global action indifference space $\mathcal{A}_\succ(\mathbf{s})$, which becomes increasingly difficult in higher dimensions due to the curse of dimensionality. 
However, we can exploit the incrementally learned subtask policies $\pi_i( \mathbf{a} \mid \mathbf{s})  \propto \exp Q_i( \mathbf{a}, \mathbf{s})$ to efficiently sample in subtask action indifference spaces because the high-probability regions of subtask policies \emph{are} the subtask indifference spaces. 
For rejecting action samples that are outside of the global action indifference space, $\mathbf{a} \notin \mathcal{A}_{\succ} (\mathbf{s})$, we evaluate the constraint indicator functions $c_i$ in \eqref{eq:indicator-function}. 
The remaining action samples are used in importance sampling proportionally to  $\exp Q_n( \mathbf{a}, \mathbf{s})$.
 
Technically, we learn $\pi_n$ and $Q_n$ for the subtask $n$ by integrating the sampling process from above in SQL, which corresponds to performing soft Q-learning in $\mathcal{M}_{\succ n}$. 
We model the subtask Q-function by a DNN $Q_n$ with parameters $\theta$ and minimizes the temporal-difference loss
\begin{equation}
	\label{eq:q_approx_objective}
	J_Q(\theta) = 
	\mathbb{E}_{\mathbf{s}_t, \mathbf{a}_t \sim \mathcal{D}} 
	\Bigg[ 
	\frac{1}{2} 
	\Big ( 
	Q^\theta_{n}(\mathbf{s}_t, \mathbf{a}_t) 
	- r_n(\mathbf{s}_t, \mathbf{a}_t) 
	+ \gamma  
	\mathbb{E}_{\mathbf{s}_{t+1} \sim p} 
	\big[ 
	V_n^{\bar{\theta}}(\mathbf{s}_{t+1})
	\big] 
	\Big)^2 
	\Bigg ],
\end{equation}
where $\bar\theta$ is a target parameter obtained as exponentially moving average of $\theta$ and $V_n^{\bar{\theta}}$ is the empirical approximation of \eqref{eq:soft_value_function}. The subtask policy is represented by a state-conditioned, stochastic DNN $f_n$ with parameters $\phi$, such that $\mathbf{a}_t = f^\phi_n(\zeta_t, \mathbf{s}_t)$, where $\zeta_t$ represents noise from an arbitrary distribution. 
The parameter $\phi$ is updated by minimizing the (empirical approximation of the) Kullback-Leibler divergence between $f^\phi_n$ and the optimal MaxEnt arbiter policy in \eqref{eq:maxent-policy}:
\begin{equation}
	\begin{split}
		J_\pi(\phi) &= \text{D}_\text{KL} 
		\Big( 
		f^\phi_n(\cdot | \mathbf{s}_t, \zeta_t) 
		\Big|\Big| 
		\exp \big(
		Q^\theta_{\succ} (\mathbf{s}_t, \cdot) 
		-
		V^\theta_{\succ} (\mathbf{s}_t)
		\big)
		\Big) \\ 
	\end{split}.
	\label{eq:pi_objective}
\end{equation}
%
The objectives $J_\pi(\phi)$ and $J_Q(\theta)$ can be computed offline, given a replay buffer of saved transitions. 
Thus, PSQD can optimize the $n$-th subtask offline, for example reusing the transitions collected during independent and isolated training for subtask $n$ in the subsequent adaptation step for subtask $n$ as part of the $\varepsilon$-lexicographic MORL problem, as detailed in supplementary material \ref{app:offline-adaptation}.
A pictographic overview of our method as well as pseudocode can be found in supplementary material \ref{app:algo_details}.

\section{Experiments}
\label{sec:experiments}

We evaluate PSQD in an instructive 2D navigation environment and a high-dimensional control problem (details in Sec.~\ref{app:experiment_details} in the supplementary material). 
In Sec.~\ref{sec:zeroshot_interpretability_analysis} and Sec.~\ref{sec:adaptation_experiment}, we qualitatively analyze zero-shot and offline adaptation results, respectively. 
In Sec.~\ref{sec:baseline_exp}, we empirically compare our method with a number of baseline methods. 
Finally, in Sec.~\ref{sec:high_dim_experiment}, we demonstrate PSQD's efficacy in a high-dimensional, continuous control environment. 
For all experiments, we first learn the (untransformed) subtasks individually such that we have access to DNN approximations of $Q_i^*$ and $\pi_i^*$ for reuse and adaptation. 
We use the hat symbol, e.g., $\hat{\pi},$ to denote DNN approximations. 

%
%
%
\subsection{Zero-shot reuse of previously learned subtasks solutions}
\label{sec:zeroshot_interpretability_analysis}
%
%
%
In this experiment, we demonstrate that our approach provides solutions that satisfy lexicographic priorities even in the zero-shot setting without interacting with the new environment. 
We design a 2D navigation environment (see Fig.~\ref{fig:2d_env_(-6,-6)}), where the agent is initialized at random positions and has to reach the goal area at the top while avoiding a $\cap$-shaped obstacle.
For the zero-shot solution, we simply apply our transformation (Sec.~\ref{sec:constraint-policy-transform}) to the already learned subtask Q-functions $\hat{Q}_1^*, \hat{Q}_2^*$ to obtain the Q-function $Q_{1\succ2} = \ln(c_1) + \hat{Q}_2^*$.
The modular structure of $Q_{1\succ2}$ allows us to inspect the components of the zero-shot solution: Fig.~\ref{fig:q0_obst_greedy} reveals that $\hat{Q}_{1}^{*}$ learns the obstacle avoidance subtask with large negative values inside the obstacle and inspecting the $\ln(c_1)$ component shows our transformation and how it implements the priority constraint. In Fig.~\ref{fig:q0_constraint_(-6,-6)}, we plot $\ln (c_1((\mathbf{s}, \mathbf{a})),\ \forall \mathbf{a} \in \mathcal{A}$, with the agent placed at the red dot in Fig.~\ref{fig:2d_env_(-6,-6)} to show the action indifference space at that state.
Our transformation excludes actions that would lead to collisions, allowing us to visually infer the agent's behavior. This is in stark contrast to prior zero-shot methods, e.g. \citep{haarnoja2018composable}, where the behavior of the zero-shot solution is unpredictable and potentially arbitrary sub-optimal, as noted by \citet{hunt2019composing}.
In Fig.~\ref{fig:zeroshot_trajectories}, we visualize the policy $\pi_{1\succ2} \propto Q_{1\succ2}$ with sample trajectories and project $\ln(c_1)$ onto the image to illustrate the global action indifference space.
Our zero-shot solution never collides with the obstacle due to the priority constraint without any training on the lexicographic MORL problem.
However, our zero-shot is suboptimal for the lexicographic MORL problem, because it sometimes gets trapped and fails to reach the goal for starting positions in the middle. In Sec.~\ref{app:experiment_details} (supplementary material), we show more detailed zero-shot results.
%
\begin{figure}[t]
	\centering
	\begin{subfigure}[b]{0.24\textwidth}
		\centering
		\includegraphics[width=.97\textwidth]{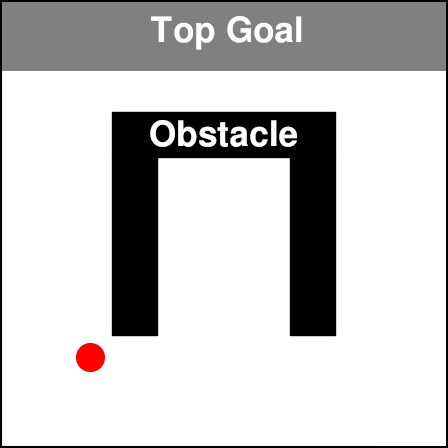}
		\caption{2D environment.}
		\label{fig:2d_env_(-6,-6)}
	\end{subfigure}
	\hfill
	\begin{subfigure}[b]{0.24\textwidth}
		\centering
		\includegraphics[width=0.955\textwidth]{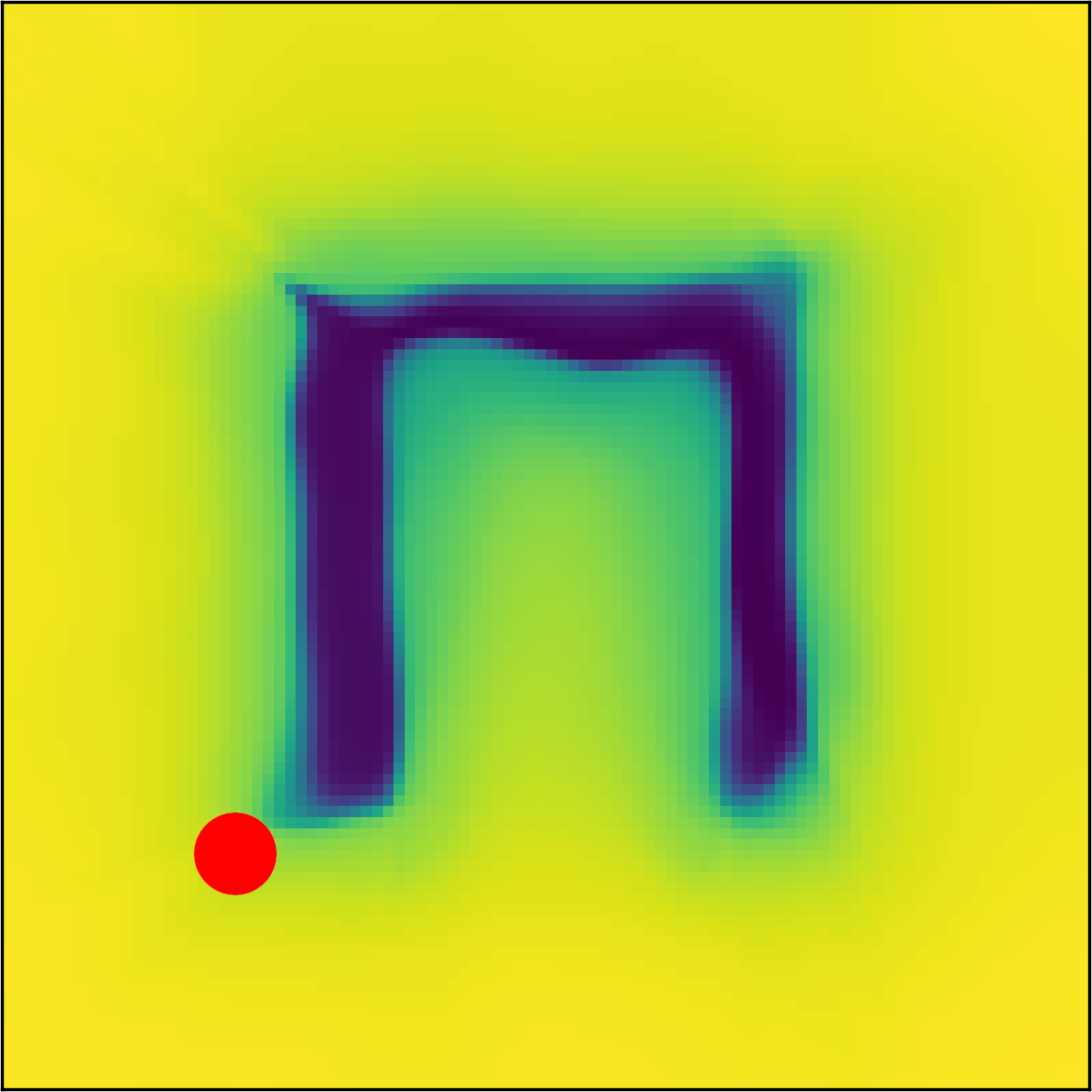}
		\caption{$\int_\mathbf{a}\hat{Q}_1^*(\mathbf{s}, \mathbf{a}) d \mathbf{a}$.}
		\label{fig:q0_obst_greedy}
	\end{subfigure}
	\hfill
	\begin{subfigure}[b]{0.24\textwidth}
		\centering
		\begin{tikzpicture}
			\node[inner sep=0pt] at (0,0.4)
			{\includegraphics[width=.95\textwidth]{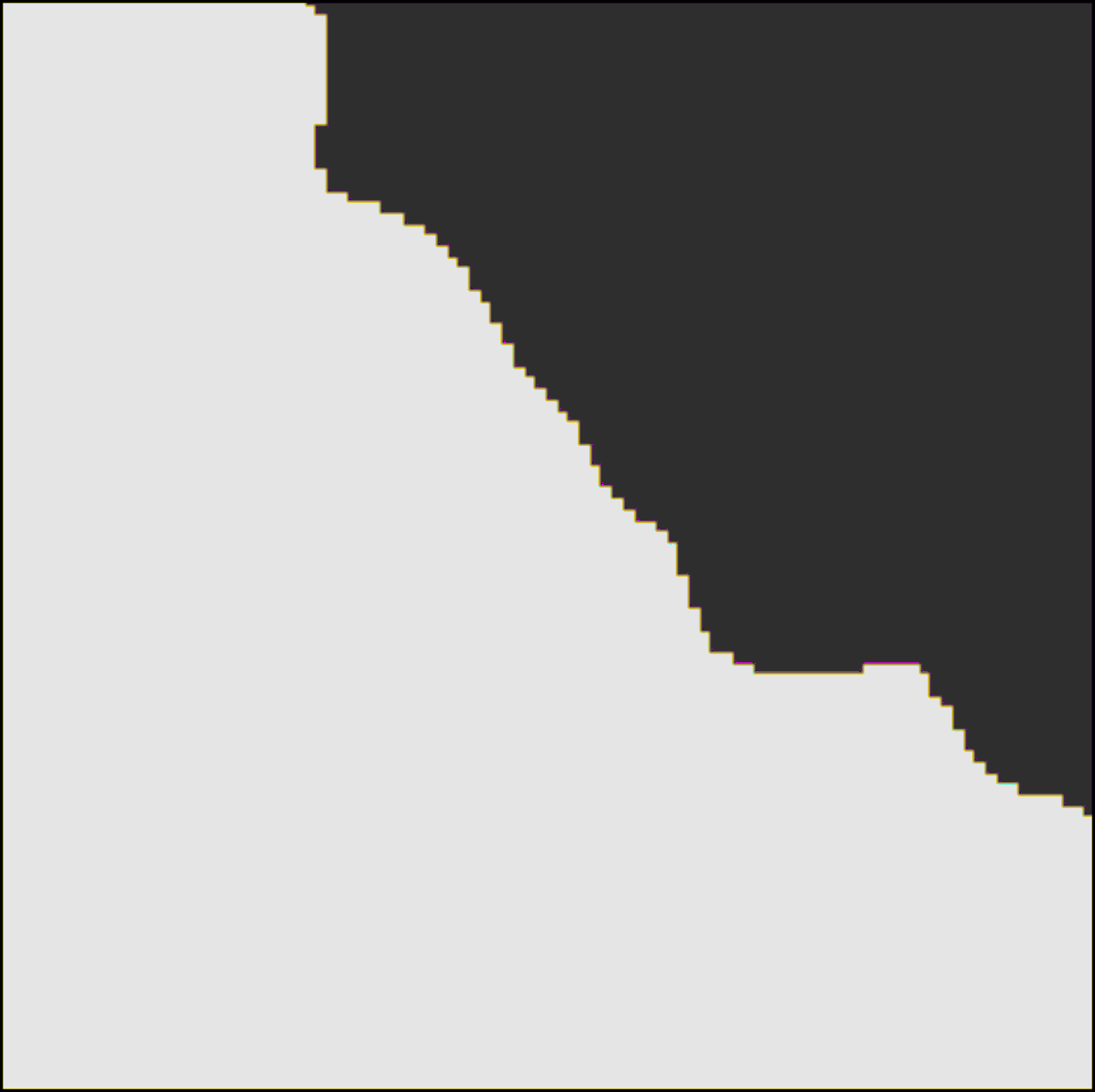}};
			\node at (0, 0) {$\mathcal{A}_{\succ 1}$};  
			\node at (1, 0.8) {\textcolor{white}{$\bar{\mathcal{A}}_{\succ 1}$}};
			\node[align=right] at (0.8, 1.6) {\small \textcolor{white}{\textsf{Forbidden}} \\ \small \textcolor{white}{\textsf{actions}}};
			\node at (0, -1.05) {\tiny \textcolor{black}{left $\leftrightarrow$ right}};
			\node at (0.2, -0.55) {\small \textcolor{black}{\textsf{Permitted actions}}};
			\node[align=left] at (-1.2, 0.2) {\tiny up \\ \tiny $\updownarrow$ \\ \tiny down};
		\end{tikzpicture}
		\caption{$\ln(c_1)$ in action space.}
		\label{fig:q0_constraint_(-6,-6)}
	\end{subfigure}
	\begin{subfigure}[b]{0.24\textwidth}
		\centering
		\begin{tikzpicture}
			\node[inner sep=0pt] at (0,0.4)
			{\includegraphics[width=\textwidth]{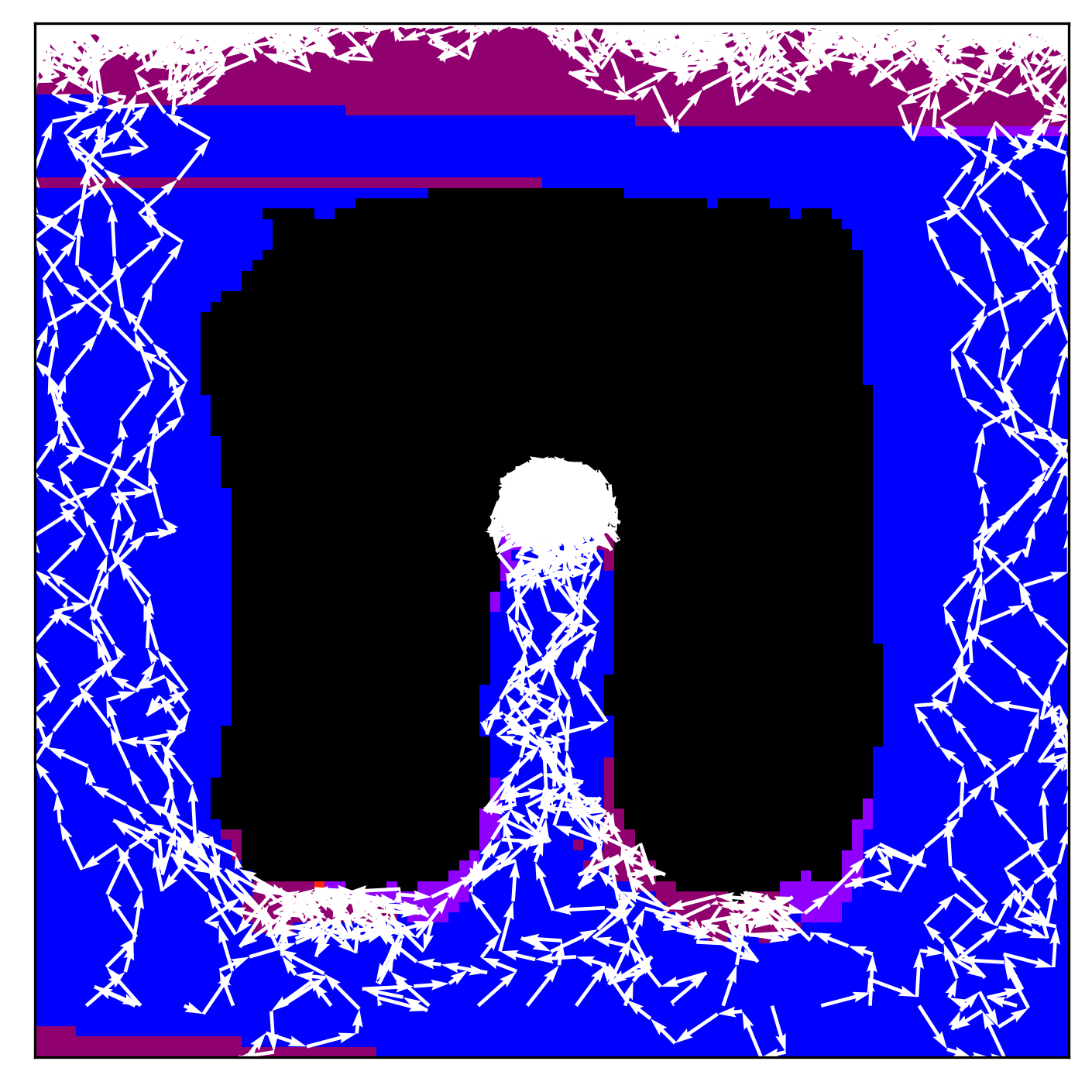}};
			\node at (0, 1) {\textcolor{white}{$\bar{\mathcal{A}}_{\succ 1}$}};
		\end{tikzpicture}
		\caption{Zero-shot trajectories.}
		\label{fig:zeroshot_trajectories}
	\end{subfigure}
	\caption{\textbf{Zero-shot experiment} in the 2D navigation environment. $\hat{Q}_1^*$ in~\ref{fig:q0_obst_greedy} (brighter hues indicate higher value) and its transformed version in~\ref{fig:q0_constraint_(-6,-6)} (evaluated at red dot) forbid actions that lead to obstacle collisions. 
		Sample traces in~\ref{fig:zeroshot_trajectories} (larger version in Fig.~\ref{fig:larger_trajectory_figure}) show navigation towards the goal at the top, sometimes getting stuck but without colliding with the obstacle.
		The background in~\ref{fig:zeroshot_trajectories} is colored according to discretized angles of the policy \includegraphics[height=0.7\baselineskip]{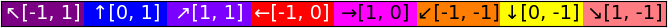}.
	}
	\label{fig:2d-zeroshot-experiment}
	\vspace{-0.25cm}
\end{figure}

\subsection{Offline adaptation and data reuse}
\label{sec:adaptation_experiment}
\begin{figure}[t]
	\centering
	\begin{subfigure}[b]{0.24\textwidth}
		\centering
		\includegraphics[width=\textwidth]{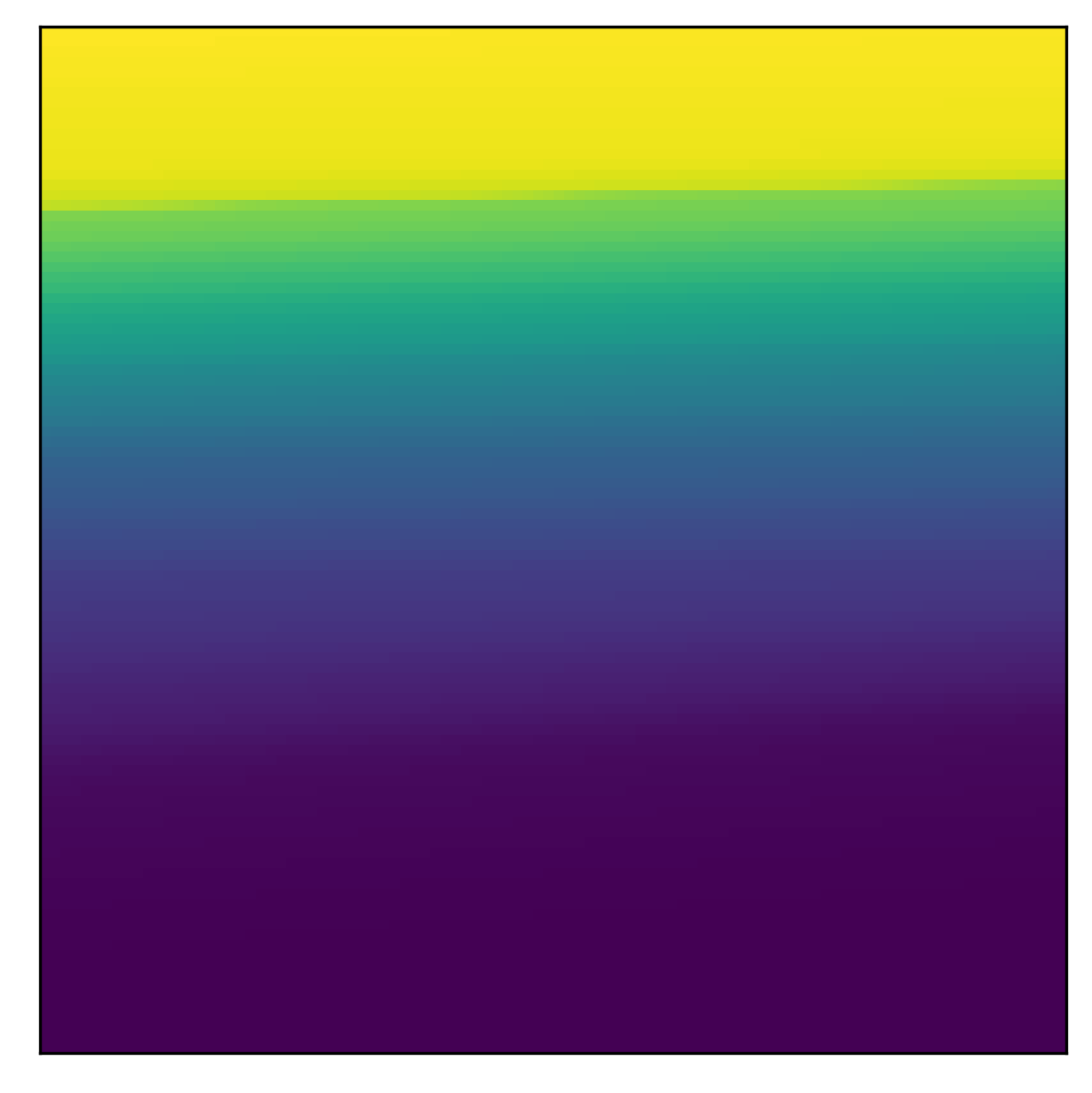}
		\vspace{-1.3em}
		\caption{$\int_\mathbf{a}\hat{Q}_2^*(\mathbf{s}, \mathbf{a}) d \mathbf{a}$.}
		\label{fig:q1_top_greedy}
	\end{subfigure}
	\hfill
	\begin{subfigure}[b]{0.24\textwidth}
		\centering
		\includegraphics[width=\textwidth]{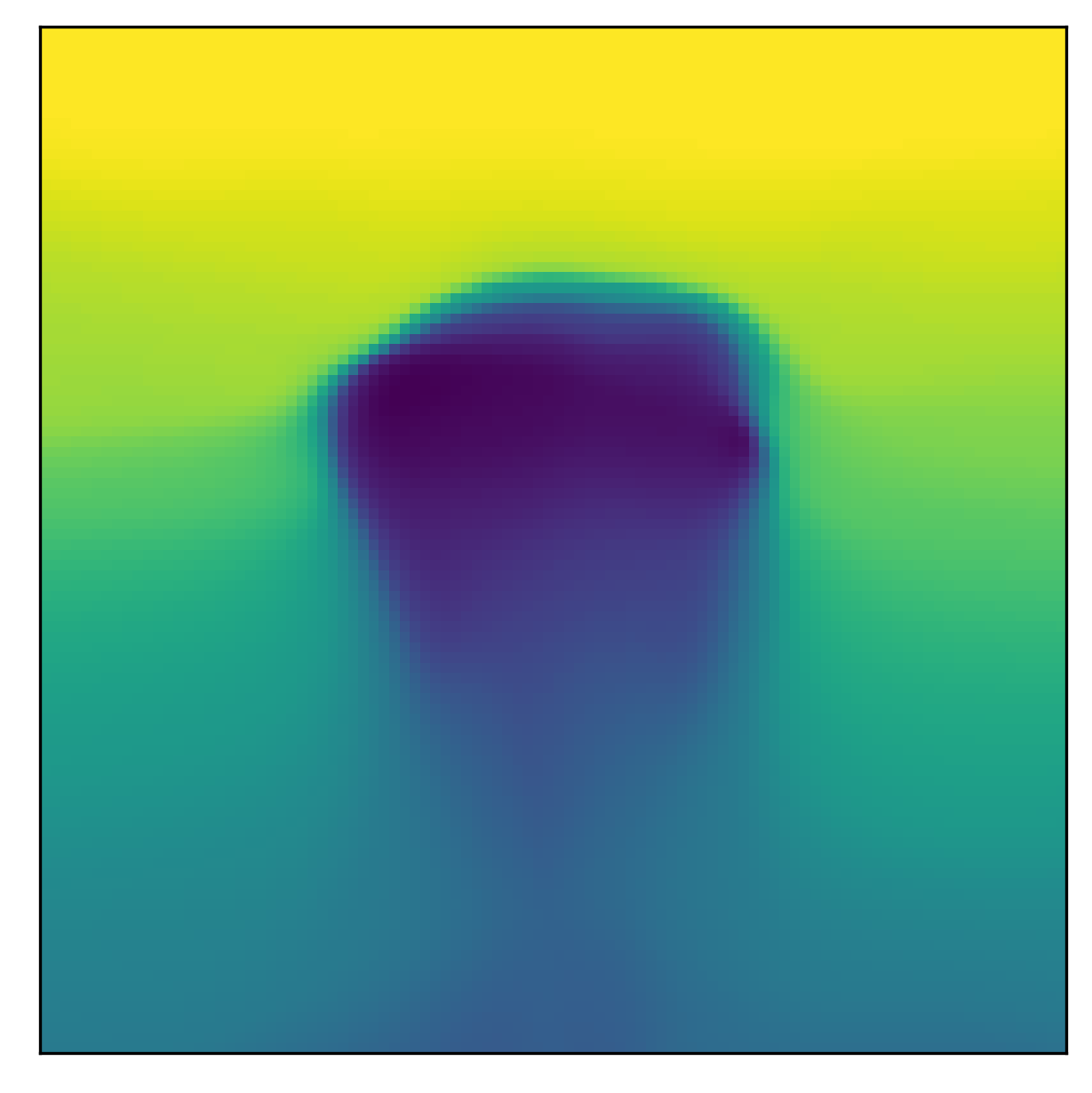}
		\vspace{-1.3em}
		\caption{$\int_\mathbf{a}\hat{Q}_2^{\pi_\succ}(\mathbf{s}, \mathbf{a}) d \mathbf{a}$.}
		\label{fig:q1_top_adapted}
	\end{subfigure}
	\begin{subfigure}[b]{0.24\textwidth}
		\centering
		\begin{tikzpicture}
			\node[inner sep=0pt] at (0,0.4)
			{\includegraphics[width=\textwidth]{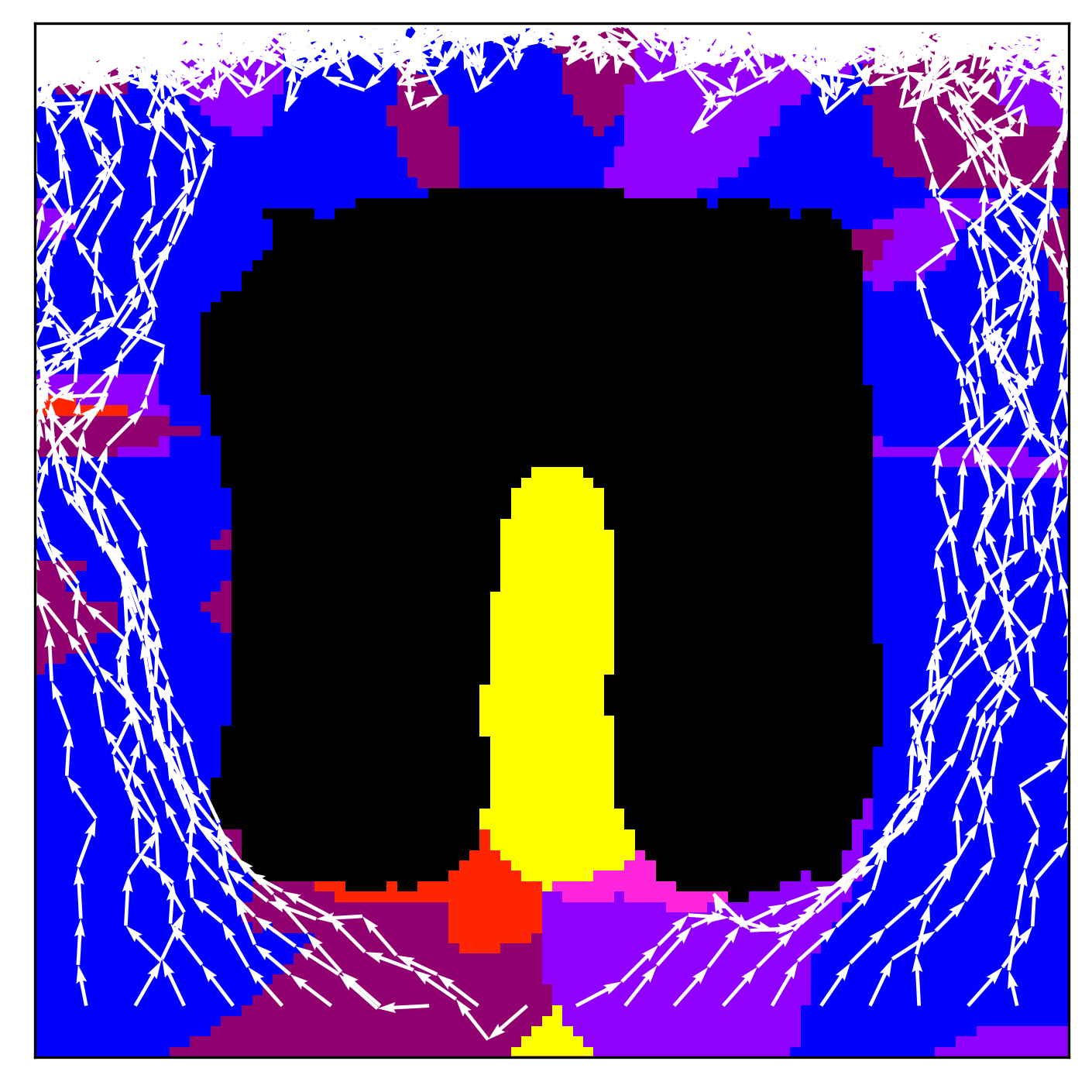}};
			\node at (0, 1) {\textcolor{white}{$\bar{\mathcal{A}}_{\succ 1}$}};
		\end{tikzpicture}
		\caption{Adapted trajectories.}
		\label{fig:adapted_trajectories}
	\end{subfigure}
	\begin{subfigure}[b]{0.24\textwidth}
		\centering
		\includegraphics[width=\textwidth]{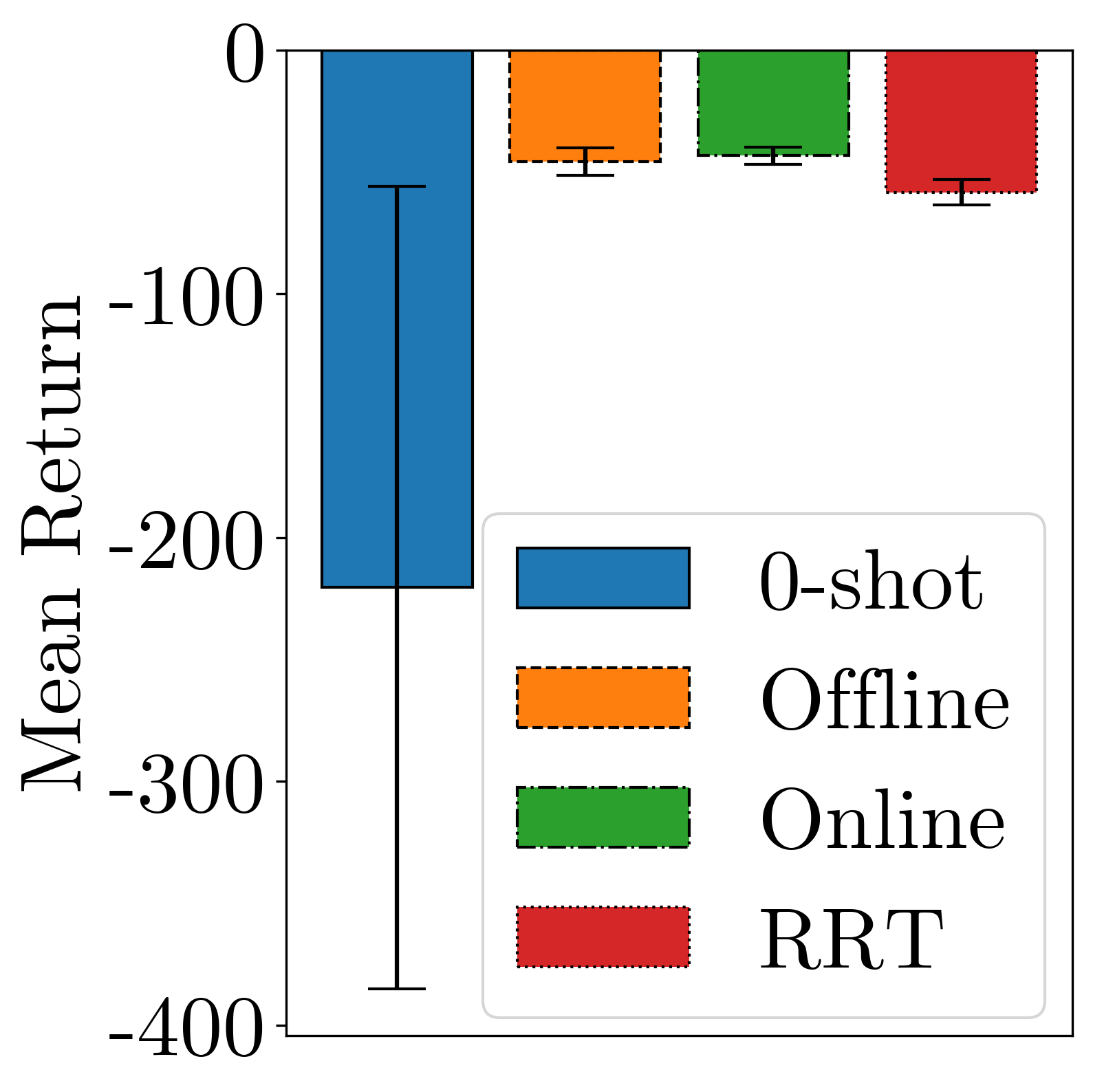}
		\caption{Return comparison.}
		\label{fig:2d-reward-comparison}
	\end{subfigure}
	\caption{\textbf{Offline adaptation experiment} in 2D navigation environment. Our learning algorithm adapts the pre-trained $\hat{Q}_2^*$ in~\ref{fig:q1_top_greedy} to $\hat{Q}_2^{\pi_\succ}$ in~\ref{fig:q1_top_adapted} (brighter hues indicate higher value), reflecting the long-term value of $r_2$ under the arbiter policy. The adapted agent has learned to drive out of and around the obstacle, as shown in~\ref{fig:adapted_trajectories} (larger version in Fig.~\ref{fig:larger_trajectory_figure}). The background in~\ref{fig:adapted_trajectories} is colored in the same way as \ref{fig:zeroshot_trajectories}.
		Both online and offline adaptation improve upon the zero-shot agent considerably, as shown in ~\ref{fig:2d-reward-comparison}.}
	\label{fig:2d-adaptation-experiment}
	\vspace{-0.5cm}
\end{figure}

In this experiment, we demonstrate that PSQD can adapt previously learned subtask solutions to the lexicographic MORL problem using only the training data from the previous, isolated learning process without environment interaction.
After adapting $\hat{Q}_2^*$ with the offline data, $\hat{Q}_2^{\pi_\succ}$ no longer reflects the greedy policy $\pi^*_2$ that gets the zero-shot solution stuck, but instead reflects the arbiter agent $\pi_{\succ}^*$ and accounts for the long-term outcome. 
Comparing $\hat{Q}_2^*$ in Fig.~\ref{fig:q1_top_greedy} with $\hat{Q}_2^{\pi_\succ} $ in Fig.~\ref{fig:q1_top_adapted} shows that the adaptation process leads to lower values inside the $\cap$-shape such that our solution 
escapes and avoids that region. This optimal behavior is illustrated with sample traces in Fig.~\ref{fig:adapted_trajectories}. 
In Fig.~\ref{fig:2d-reward-comparison} we compare the mean return of the zero-shot, offline, and online adapted agents and an RRT~\citep{lavalle2001randomized, karaman2010incremental} oracle. This shows that the zero-shot agent, which gets stuck, performs considerably worse than the adapted agents, which instead are on par with the RRT oracle.
We continue this analysis for the more complex tasks $r_{1 \succ 2 \succ 3}$ and $r_{1\succ 3 \succ 2}$ in supplementary material Sec.~\ref{app:experiment_details}.

\subsection{Empirical comparison against baselines}
\label{sec:baseline_exp}
In this experiment, we compare PSQD to a set of five baseline methods which all try to implement lexicographic priorities in MORL problems.
As, to the best of our knowledge, PSQD is the first method for solving lexicographic MORL problems with continuous action spaces, we compare against obvious extensions of established RL algorithms and ablations of PSQD:
Naive subtask-priority implementation for SAC~\citep{haarnoja2018soft} and PPO~\citep{schulman2017ppo}, a modified version of Q-Decomposition~\citep{russell2003q} for continuous action spaces that we call \textit{soft Q-decomposition} (SQD), as well as the SQL composition method proposed by \citet{haarnoja2018composable}.
For the ablation of PSQD, instead of transforming subtasks, we introduce additional negative reward for priority constraint violations based on the constraint indicator functions $c_i$.
For SAC and PPO we naively implement subtask-priorities by augmenting the learning objective to additionally include the KL-divergence between the current and the previously learned higher-priority policy. 
SQD concurrently learns the on-arbiter-policy Q-functions for all subtasks. The SQL composition \citep{haarnoja2018composable} also learns solutions for all subtasks concurrently, however off-policy and thereby with illusion of control. See Sec.~\ref{app:baseline_details} for more details on the baseline methods.

For SAC, PPO, and PSQD (incl. ablation), we first train all algorithms on the obstacle avoidance subtask 1, where all methods converge to an approximately optimal solution. Subsequently, we reuse the obtained $Q_1$ and train each method on subtask 2, recording the returns for both subtasks separately in Fig.~\ref{fig:2d-baseline-experiments}. PSQD is the only method that maintains zero return for subtask 1 (i.e.\ zero collisions) while learning for subtask 2, outperforming all other baselines. As seen in the right panel of Fig.~\ref{fig:2d-baseline-experiments}, PSQD obtains less reward for subtask 2 than other baselines. However, this is because the priority constraint prevents PSQD from navigating through the obstacle, while some baselines accept obstacle collisions to reach the goal quicker. In contrast to PSQD, none of the baseline methods converge to a solution that satisfies the lexicographic priorities and maintains the optimal performance in the top-priority collision avoidance subtask that they initially achieved.
\begin{figure}[]
	\includegraphics[width=0.7\textwidth]{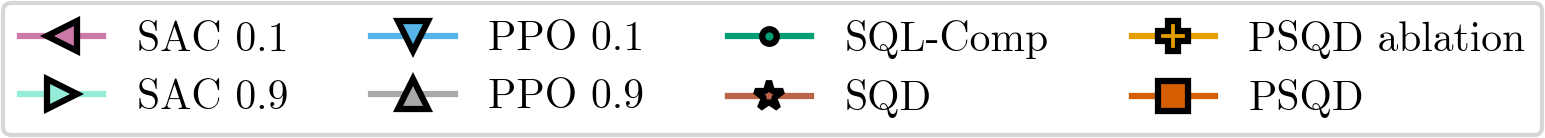}
	\centering
	\begin{subfigure}[b]{0.49\textwidth}
		\centering

		\includegraphics[width=\textwidth]{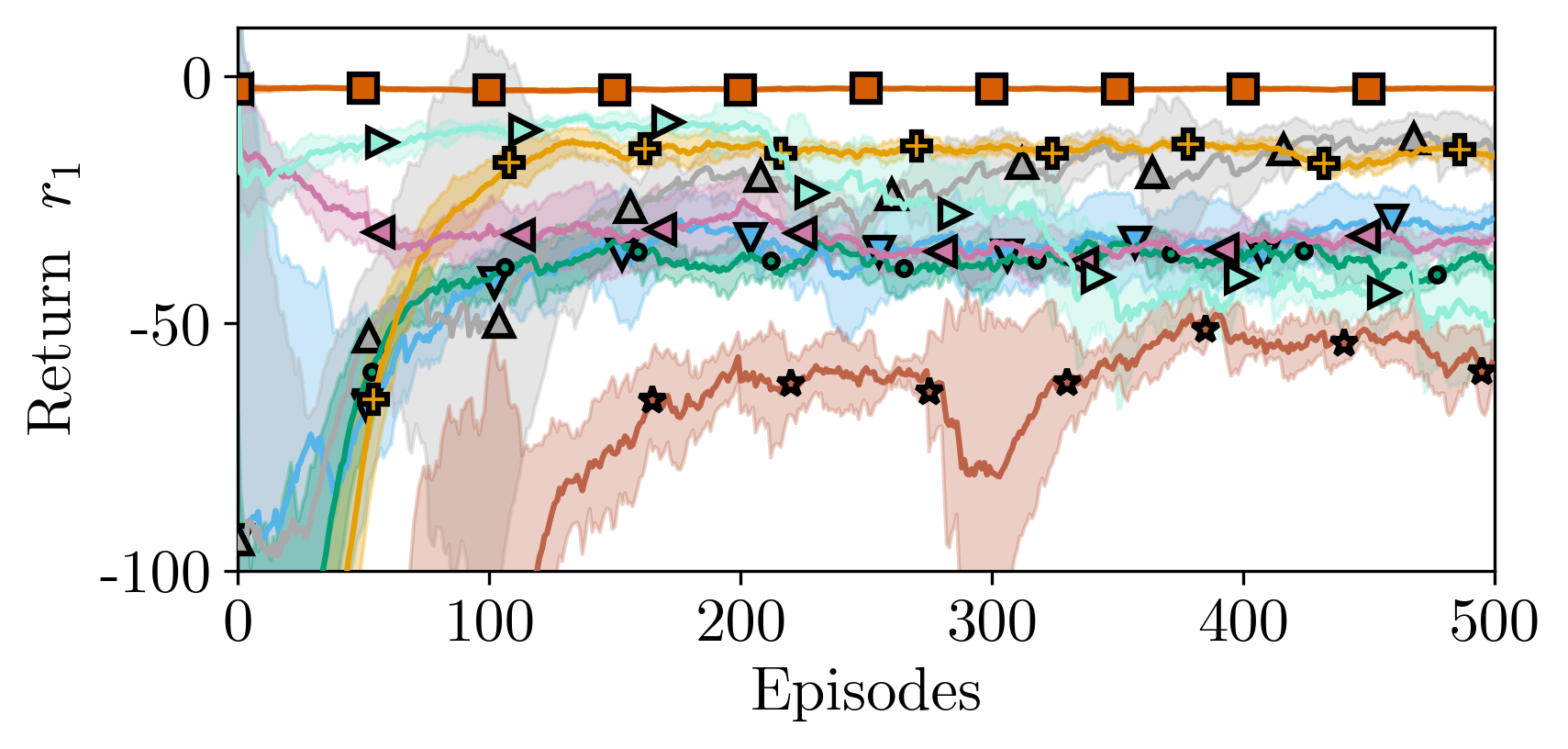}
		\label{fig:2d_env_r1_while_r2}
	\end{subfigure}
	\hfill
	\begin{subfigure}[b]{0.49\textwidth}
		\centering
		\includegraphics[width=\textwidth]{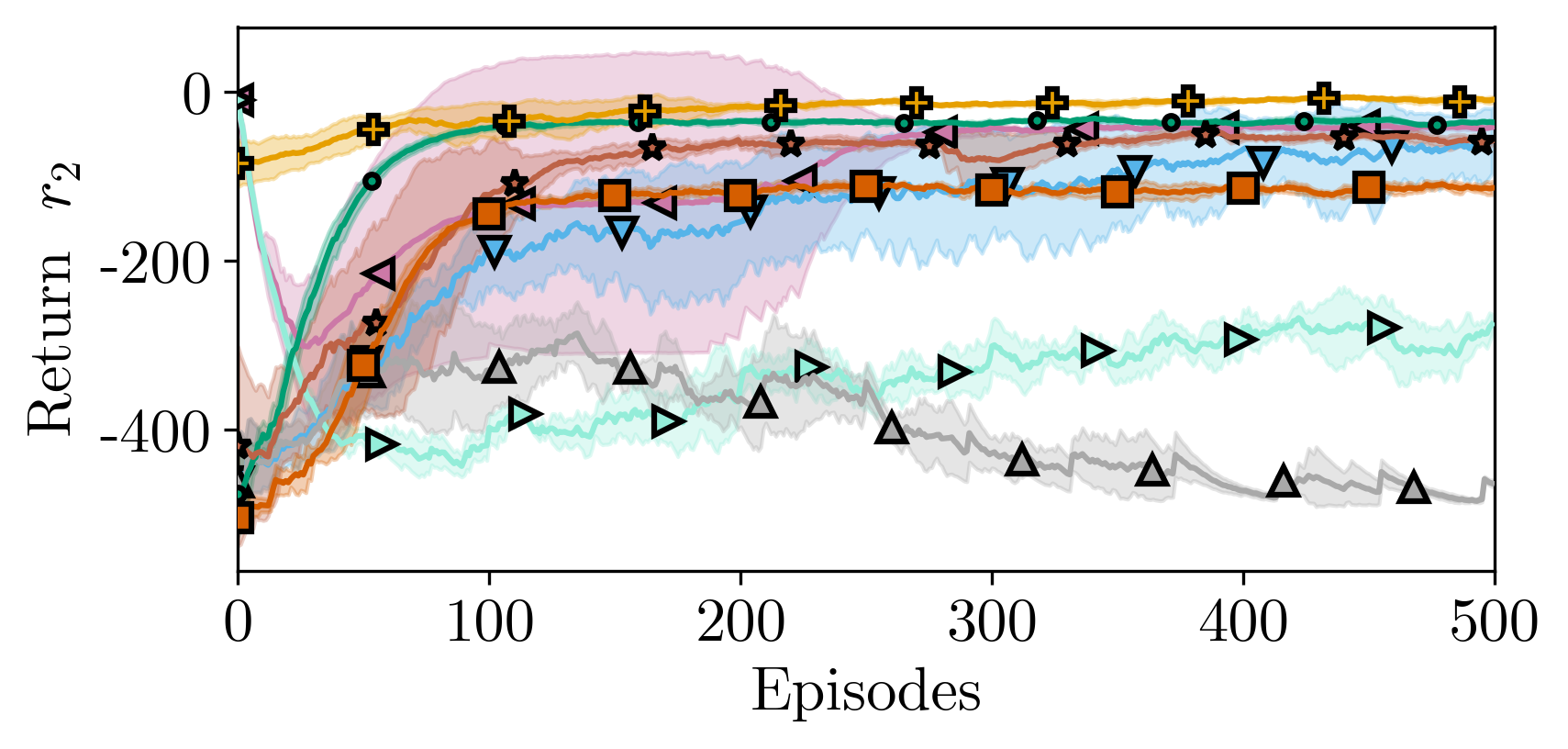}
		\label{fig:2d_env_r2_adaptation}
	\end{subfigure}
	\hfill
	\vspace{-0.7cm}
	\caption{\textbf{Baseline comparison} in the 2D navigation environment. \textbf{Left}: Cost of the high-priority obstacle avoidance subtask during learning of the lower priority task. \textbf{Right}: Lower-priority navigation cost. 
		For SAC and PPO, the scalar in the legend refers to the weight $\beta$ of the convex policy objective, where $\beta=1$ places all weight on the KL term.}
	\label{fig:2d-baseline-experiments}
	\vspace{-0.5cm}
\end{figure}

\subsection{High-dimensional control}
\label{sec:high_dim_experiment}
In this experiment, we demonstrate that PSQD scales well to high-dimensional action spaces while maintaining subtask priorities, unlike traditionally composed multi-objective agents. 
We simulated an Franka Emika Panda joint-control task with a 9-dimensional action space (Fig~\ref{fig:franka}, left), where the high-priority subtasks wants to avoid the red cube while the low-priority subtask want to reach the green sphere with the end-effector.
We start with the zero-shot Q-function $Q_{1\succ2} = \ln(c_1) +\hat{Q}_2^*$ and sample from $\pi_{\succ}$ as described in Sec.~\ref{sec:practical-algorithm}. 
We subsequently adapt the zero-shot solution to obtain $Q_{1\succ2}^* = \ln(c_1) + \hat{Q}_2^{\pi_\succ}$.
We also include the SQL-Comp. method~\citep{haarnoja2018composable} as a baseline, by composing $Q_{1+2} = \hat{Q}_1^* + \hat{Q}_2^*$.
Representative trajectories from our adapted solution and the baseline are shown in Fig.~\ref{fig:franka} (left). Starting from $t=0$ in the leftmost panel, our adapted agent moves around the obstacle. In contrast, the baseline violates the priority and moves through the forbidden part of the workspace. More quantitatively, we show mean returns for the different agents in Fig.~\ref{fig:franka} (right). Our solution avoids the volume even in the zero-shot setting and is improved considerably through the adaptation process, while the baseline induces high costs in $r_1$ because it moves through the forbidden part of the workspace. 
We note that \citet{haarnoja2018composable} do not claim that their method solves lexicographic MORL problems, we use it to illustrate that priorities are generally difficult to express in traditional multi-objective tasks.
\begin{figure}[]
	\centering
	\begin{subfigure}[b]{0.7\textwidth}
		\centering
		\includegraphics[width=\textwidth]{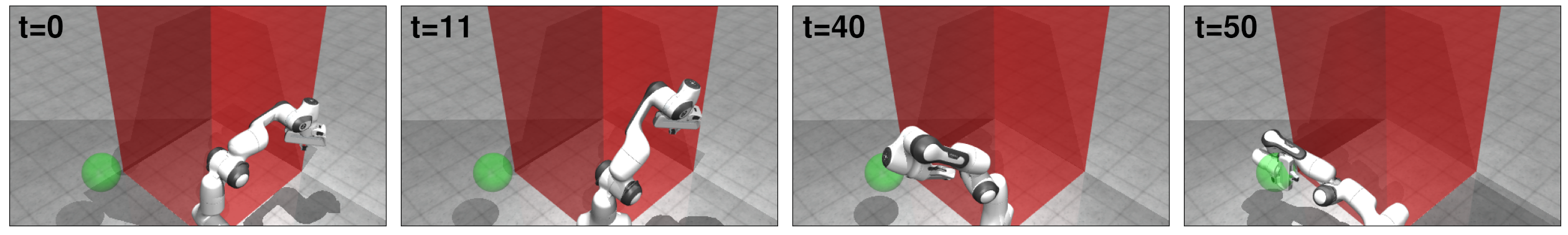}
		\includegraphics[width=\textwidth]{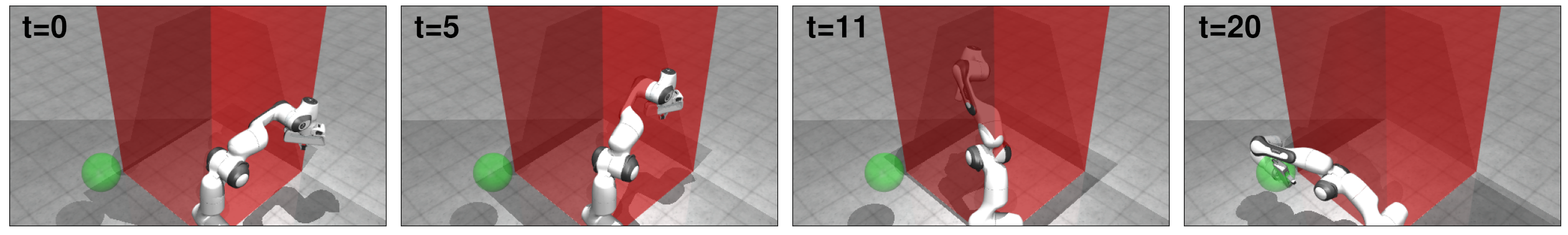}
		\label{fig:arm_trajectories}
	\end{subfigure}
	\begin{subfigure}[b]{0.22\textwidth}
		\centering
		\includegraphics[width=\textwidth]{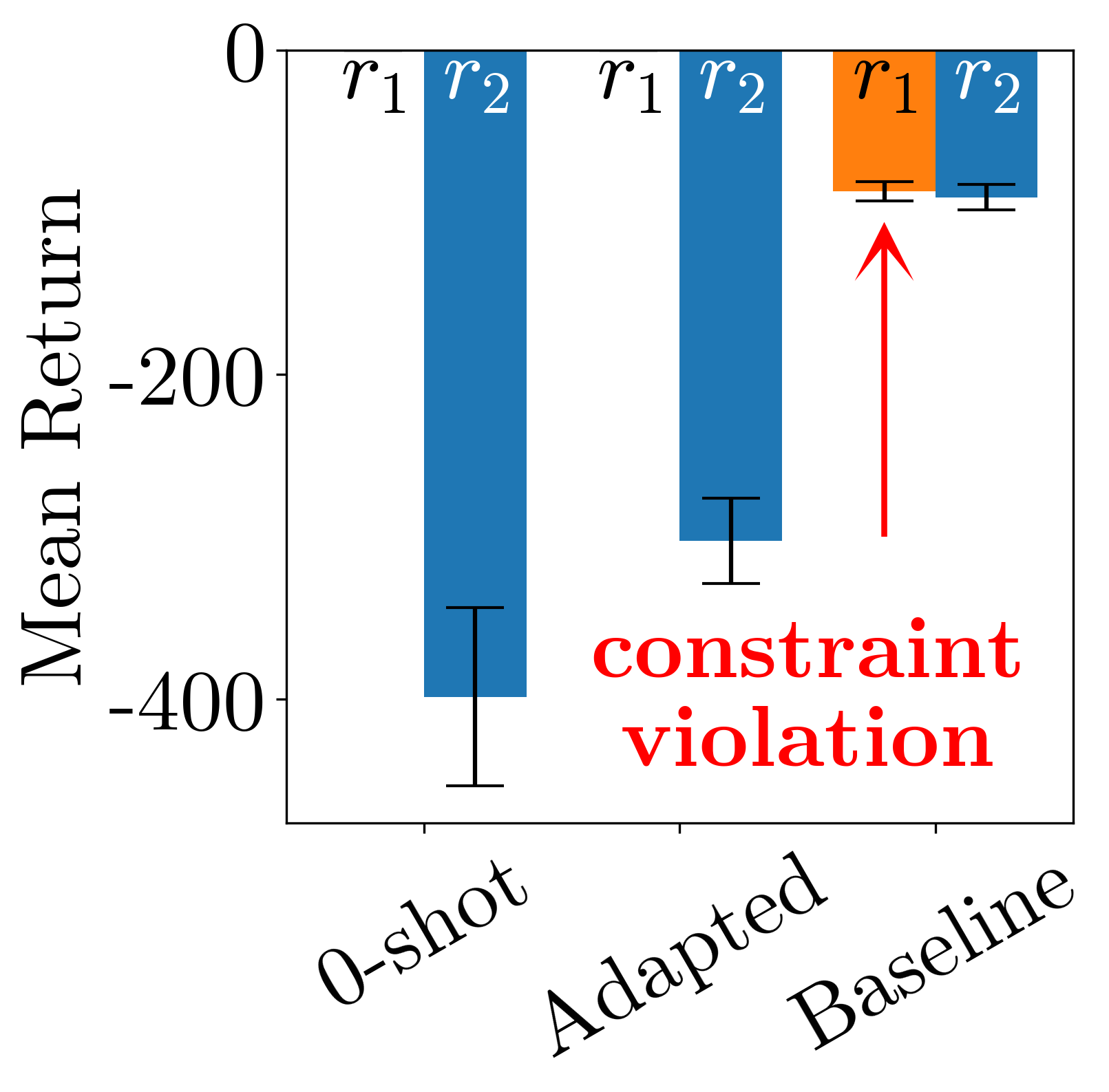}
		\vspace{-0.1cm}
		\label{fig:franka_baseline_comparison}
	\end{subfigure}
	\vspace{-0.5cm}
	\caption{\textbf{High-dimensional joint control} using a simulated Franka Emika Panda robot. \textbf{Top left}: Our prioritized agent respects constraints even in the zero-shot setting and is improved by our learning algorithm. \textbf{Bottom left}: A multi-objective baseline \citet{haarnoja2018composable} does not respect lexicographic priorities and generates high costs in $r_1$ (collision) in favor of fewer costs in $r_2$ (trajectory length), moving through the forbidden part of the workspace.}
	\label{fig:franka}
	\vspace{-0.5cm}
\end{figure}
\section{Related work}
\label{sec:related-work}
\paragraph{Task priorities for continuous action-spaces.}
To the best of our knowledge, PSQD is the first method to solve general lexicographic MORL tasks with continuous action-spaces. 
The following prior works consider task priorities in RL: \citet{yang2021null} combine null-space control and RL by constraining policy search to the null-space of higher priority constraints. 
However, \citet{yang2021null}'s method is a special case of PSQD since it requires access to the task-space Jacobian, while PSQD exploits given Q-functions.
\citet{lexi_morl, zhang2022lexicographic, Li2019Urban} also model lexicographic task priorities, however only in discrete settings and without revealing the reward function of the lexicographic task. PSQD generalizes these methods to continuous action-spaces and maximizes a known reward function.
Furthermore, none of these works emphasize composition and reuse of subtask solutions, which is central to our method. 

\paragraph{Composition for knowledge transfer.} 
Composition for transfer between tasks in the MaxEnt framework has initially been demonstrated  by~\citet{haarnoja2018composable}, however, only in a zero-shot setting without providing a learning algorithm to improve the composed agent and without priorities. 
\citet{hunt2019composing} build upon~\citet{haarnoja2018composable} and propose a learning algorithm that learns the divergence between the subtask Q-functions and obtains the optimal solution to convex task compositions.
Thus, similar to our method, these works facilitate knowledge transfer between tasks via composition, however, they offer no way for modeling strict task priorities.

\paragraph{Task priorities for safety.} 
Task priorities can be used to implement safety.
Prior works mostly implement (safety-) constrained RL via constrained MDPs (CMDPs)~\citep{altman1999constrained}, where policy search is constrained by an \emph{unordered} set of cost functions. 
CMDPs can be solved by optimizing Lagrangian objectives 
and dual gradient descent techniques~\cite{boyd2004convex}. 
This way, \citep{sac_lagrangian, tessler2019rcpo, chow2017risk} focus on CMDPs with one handcrafted cost function, while \citep{achiam2017constrained, liang2018accelerated} provide methods for solving CMPDs with multiple constraints. 
While these methods yield constraint-satisfying agents, unlike our method, they do not facilitate reuse and adaptation via composition, since they optimize monolithic policies from scratch.
An exception w.r.t reuse is \emph{safety critic}~\citep{finn_safety_critic}, which pre-trains a transferable estimator of expected future safety incidents.  However, safety-critic can only incorporate one such critic and thus only transfer knowledge from one pre-training task, while our method can exploit any number of pre-trained Q-functions.

\section{Limitations}

\label{sec:limitations}
Our method depends on the manual selection of $\varepsilon_1, \dots, \varepsilon_{n-1}$ thresholds.
These scalars are on the scale of subtask Q-functions $Q_i$, which might be hard to estimate when function approximators are employed and depend on domain-specific reward functions, user preferences, and task semantics.
There are, however, a number of informed ways for finding adequate $\varepsilon_i$ scalars. 
In practice, one can analyze the Q-function in key-states and select $\varepsilon_i$ such that undesired actions are excluded. 
This works even when the action space is of high dimensionality, since approximate Q-function mean, min, max, and percentiles can be computed via sampling. 

\section{Conclusion}
The main contribution of this paper is a principled framework for solving lexicographic MORL problems with continuous action spaces in a decomposed fashion.
In this framework, we can zero-shot interpretable agents that respect task priority constraints in low- and high-dimensional settings. 
Our learning algorithm, PSQD, facilitates reuse of subtask solutions by adapting them to solve the lexicographic MORL problem optimally.

%
%
%
%
%
%
%

\ifbool{includeacknowledgments}{%
\subsubsection*{Acknowledgments}
This work was partially supported by the Wallenberg AI, Autonomous Systems and Software Program (WASP) funded by the Knut and Alice Wallenberg Foundation.
}{%
}

\bibliography{iclr2024_conference}
\bibliographystyle{iclr2024_conference}
\clearpage

\appendix
\section*{Supplementary Material}
\section{Lexicographic multi-objective tasks}
\label{app:lexi_morl_task}
Here, we describe in detail how we model lexicographic task-priority constraints as scalarizable multi-objective RL tasks. To recap, we have multiple reward functions $r_1, \dots, r_n$, and we want to find a global arbiter policy $\pi_\succ$ that maximizes $r_n$, while behaving almost optimal w.r.t. all higher priority tasks $r_1, \dots, r_{n-1}$. This is expressed with the constraint
\begin{equation}
	\max_{\mathbf{a}^\prime \in \mathcal{A}} Q_i(\mathbf{s}, \mathbf{a}^\prime)
	-
	Q_i(\mathbf{s}, \mathbf{a})
	\leq \varepsilon_i
	,
	\forall  \mathbf{a} \sim \pi 
	,
	\,
	\forall \mathbf{s} \in \mathcal{S}
	,
	\forall i \in \{1, \dots, n-1\}
\end{equation}
on the performance measure $J(\pi)$, which in this case is the Q-value. We note again the indicator functions
\begin{equation}
	c_i(\mathbf{s}, \mathbf{a}) 
	=
	\begin{cases}
		1,
		& 
		\max_{\mathbf{a}^\prime \in \mathcal{A}} Q_i(\mathbf{s}, \mathbf{a}^\prime)
		-
		Q_i(\mathbf{s}, \mathbf{a})
		\leq \varepsilon_i
		\\
		0, 
		& \text{otherwise}
	\end{cases}
	,
\end{equation}
and that $\prod_{i=1}^{n-1} c_i (\mathbf{s}, \mathbf{a}) = 1$ when an action $\mathbf{a}$ is allowed in state $\mathbf{s}$ according to each of the higher priority constraints (and thus in the global action indifference space $\mathcal{A}_\succ (\mathbf{s})$). 
The key idea now is to transform the higher-priority Q-functions $Q_i$ into new reward functions $r_{\succ i}$ by applying the natural logarithm to the constraint indicator functions, such that $r_{\succ i}(\mathbf{s}, \mathbf{a}) =  \ln(c_i (\mathbf{s}, \mathbf{a}))$. Based on these transformed Q-functions and new rewards, we now define a multi-objective, vector-valued reward function that reflects strict task priorities
\begin{equation}
	\mathbf{r}_\succ(\mathbf{s},\mathbf{a}) 
	=
	\begin{bmatrix} r_{\succ 1}(\mathbf{s},\mathbf{a}) \\ 
		r_{\succ 2}(\mathbf{s},\mathbf{a}) \\ 
		\vdots \\ 
		r_{\succ n-1}(\mathbf{s},\mathbf{a}) \\ 
		r_n(\mathbf{s},\mathbf{a})
	\end{bmatrix}
	= 
	\begin{bmatrix} \ln (c_1(\mathbf{s},\mathbf{a})) \\ 
		\ln (c_2(\mathbf{s},\mathbf{a})) \\ 
		\vdots \\ 
		\ln (c_{n-1}(\mathbf{s},\mathbf{a})) \\ 
		r_n(\mathbf{s},\mathbf{a})
	\end{bmatrix} \;
	.
\end{equation}
We define the global reward function that the arbiter maximizes as sum of transformed rewards
\begin{equation}
	\label{eq:global_task}
	r_\succ(\mathbf{s},\mathbf{a}) 
	= \sum_{i=1}^n [\mathbf{r}(\mathbf{s},\mathbf{a})]_i
	= \underbrace{\sum_{i=1}^{n-1} \ln c_i(\mathbf{s},\mathbf{a})}_{\text{constraint indication}} + r_n(\mathbf{s},\mathbf{a}),
\end{equation}
such that we are in the Q-decomposition setting~\cite{russell2003q}. 
This allows us to decompose the learning problem into $n$ subtask Q-functions $Q_1, \dots, Q_n$ and one global policy $\pi_\succ$. The global Q-function $Q_{\succ}$ for the prioritized task $r_\succ$ corresponds to the sum of subtask Q-functions, $Q_{\succ} = \sum_{i=1}^{n-1} Q_{\succ i} + Q_n$, where the first $n-1$ higher-priority Q-functions are for the transformed rewards, while the $n$-th Q-function is for the ordinary reward function $r_n$.
Because of MaxEnt RL the global arbiter policy $\pi_\succ$ satisfies the proportional relationship $\pi_\succ (\mathbf{a} | \mathbf{s}) \propto \exp( Q_{\succ} (\mathbf{s}, \mathbf{a}))$,  as described in Sec.~\ref{sec:max-ent-rl}.

The global reward function in \eqref{eq:global_task} allows us to infer certain properties of $Q_{\succ}$ and $\pi_\succ$. 
Firstly, by noting that $\ln(1) = 0$ we see that the constraint indication summation in \eqref{eq:global_task} is zero when action $\mathbf{a}$ is allowed according to all higher priority tasks and in the global action indifference space $\mathcal{A}_\succ(\mathbf{s})$.
This means that $r_\succ(\mathbf{s},\mathbf{a}) = r_n(\mathbf{s},\mathbf{a}), \forall \mathbf{s} \in \mathcal{S}, \forall \mathbf{a} \in \mathcal{A}_\succ (\mathbf{s})$, i.e. maximizing the global task corresponds to maximizing the lowest priority tasks in the global indifference space, which is a property we exploit extensively. 

Next, by defining $\lim_{x \rightarrow 0^+} \ln(x) = - \infty$, we see that $r_\succ(\mathbf{s},\mathbf{a}) = -\infty$ when any number of higher priority constraints are violated by $\mathbf{a}$ in $\mathbf{s}$. 
Furthermore, by noting that $-\infty \pm \mathbb{R} = -\infty$ and $-\infty \cdot \mathbb{R}_{\ge0} = -\infty$, we see that $Q_{\succ}(\mathbf{s}, \mathbf{a}) = -\infty, \forall \mathbf{s} \in \mathcal{S}, \forall \mathbf{a} \in \bar{\mathcal{A}_\succ} (\mathbf{s})$, i.e. the value of any constraint-violating action is $-\infty$ under the global task $r_\succ$. Importantly, due to the proportional relationship $\pi_\succ (\mathbf{a} | \mathbf{s}) \propto \exp (Q_{\succ} (\mathbf{s}, \mathbf{a}))$, it follows that such constraint-violating actions have zero probability under our arbiter policies. This also means that the expected future constraint-violation value of arbiter policies is zero, since the arbiter policy can not select constraint-violating actions in the future. With this in mind, for the global Q-function of arbiter policies, we see that

\begin{equation}
	\begin{split}
		Q_{\succ}(\mathbf{s}, \mathbf{a}) &= \sum_{i=1}^n Q_{\succ i}(\mathbf{s}, \mathbf{a}) \\
		&= \sum_{i=1}^{n-1} Q_ {\succ i}(\mathbf{s}, \mathbf{a}) + Q_n(\mathbf{s}, \mathbf{a}) \\
		&= \sum_{i=1}^{n-1} r_{\succ i}(\mathbf{s}, \mathbf{a}) + r_n(\mathbf{s}, \mathbf{a})
		+  \gamma \mathbb{E}_{\mathbf{s}_{t+1} \sim p, \mathbf{a}_{t+1} \sim \rho_{\pi_\succ}} \big 
		[
		\overbrace{Q_{\succ i}(\mathbf{s}_{t+1}, \mathbf{a}_{t+1})}^{0\ \text{due to}\ \pi_\succ} + Q_n(\mathbf{s}_{t+1}, \mathbf{a}_{t+1})
		]  \\
		&= \sum_{i=1}^{n-1} r_{\succ i}(\mathbf{s}, \mathbf{a}) + r_n(\mathbf{s}, \mathbf{a})
		+  \gamma \mathbb{E}_{\mathbf{s}_{t+1} \sim p, \mathbf{a}_{t+1} \sim \rho_{\pi_\succ}} \big 
		[
		Q_n(\mathbf{s}_{t+1}, \mathbf{a}_{t+1})
		]  \\
		&= \sum_{i=1}^{n-1} r_{\succ i}(\mathbf{s}, \mathbf{a}) + Q_n(\mathbf{s}, \mathbf{a}) 
		= \sum_{i=1}^{n-1} \ln c_i(\mathbf{s}, \mathbf{a}) + Q_n(\mathbf{s}, \mathbf{a}).	
	\end{split}
\end{equation}

This shows that we only have to learn the $n$-th subtask Q-function for $r_\succ(\mathbf{s},\mathbf{a})$ during the adaptation step, because the transformed Q-values of all $n-1$ higher-priority subtask Q-functions are know by construction.

Lastly, since we are using the MaxEnt framework we have the proportional relationship
\begin{equation}
	\begin{split}
		\pi_\succ( \mathbf{a} \mid \mathbf{s}) 
		&\propto 
		\exp \big(
		Q_\succ (\mathbf{s}, \mathbf{a})
		\big)
		\\
		&=
		\exp \left(
		\sum_{i=1}^{n-1}
		\ln(c_i(\mathbf{s}, \mathbf{a}))
		+
		Q_n (\mathbf{s}, \mathbf{a})
		\right)
		\\
		&=
		\left(
		\prod_{i=1}^{n-1}
		c_i(\mathbf{s}, \mathbf{a})
		\right)
		\exp Q_n (\mathbf{s}, \mathbf{a})
	\end{split}
\end{equation}
and can see that 
\begin{equation}
	\pi_\succ(\mathbf{a} \mid \mathbf{s}) \propto
	\begin{cases}
		\exp Q_n(\mathbf{s},\mathbf{a}) & \text{if } \prod_{i=1}^{n-1} c_i(\mathbf{s},\mathbf{a}) = 1, \\
		0              & \text{otherwise}.
	\end{cases}
\end{equation}
This shows that $\pi_\succ$ softly maximizes $r_n(\mathbf{s},\mathbf{a})$ in the global action indifference space $\mathcal{A}_\succ (\mathbf{s})$, as the policy is proportional to $\exp(Q_n(\mathbf{s},\mathbf{a}))$ and has zero probability for constraint-violating actions.

\section{Arbiter view theory} \label{app:globa_view}
Here we provide an analysis of our algorithm from the perspective of the arbiter agent for the lexicographic MORL task.
In this view, the algorithm optimizes the arbiter policy with policy evaluation and policy improvement steps in a transformed MORL problem using on-policy updates for subtask $n$, like Q-decomposition.
Now we show that this learning scheme converges by considering a fixed arbiter policy $\pi_\succ$ with the \emph{on-policy soft Bellman backup operator} $\mathcal{T}^{\pi_\succ}$ defined as 
\begin{equation}
	\mathcal{T}^{\pi_\succ} Q_n(\mathbf{s}_t, \mathbf{a}_t) 
	\triangleq
	r_n(\mathbf{s}_t, \mathbf{a}_t) + \gamma \mathbb{E}_{\mathbf{s}_{t+1}\sim p}[V_n^{\pi_\succ}(\mathbf{s}_{t+1})],
\end{equation}
with
\begin{equation}
	V_n^{\pi_\succ}(\mathbf{s}_{t}) = \mathbb{E}_{\mathbf{a}_t \sim \pi_\succ} 
	[ 
	Q_n(\mathbf{s}_t, \mathbf{a}_t) - \underbrace{\log(\pi_\succ(\mathbf{a}_t \mid\mathbf{s}_t))}_{\mathcal{H}}
	],
\end{equation}
where $V_n^{\pi_\succ}(\mathbf{s}_{t})$ is the soft, on-policy value function~\citep{haarnoja2018soft}.

\begin{theorem}[Arbiter policy evaluation] 
	Consider the soft Bellman backup operator $\mathcal{T}^{\pi_\succ}$ and an initial mapping $Q^0_n: \mathcal{S} \times \mathcal{A} \to \mathbb{R}$ with $|\mathcal{A}| < \infty$ and define $Q_n^{l+1} = \mathcal{T}^{\pi_\succ} Q_n^{l}$. The sequence of $Q^l_n$ converges to $Q_n^{\pi_\succ}$, the soft Q-value of $\pi_\succ$, as $ \to \infty$.
	\label{thm:soft-q-decomposition}
\end{theorem}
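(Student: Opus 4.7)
The plan is to mimic the standard convergence proof for soft policy evaluation \citep{haarnoja2018soft} but with the fixed arbiter policy $\pi_\succ$ playing the role of the sampling distribution for the next action. The main idea is to rewrite the on-policy soft Bellman backup in a form that makes it manifestly a $\gamma$-contraction on bounded real-valued functions on $\mathcal{S}\times\mathcal{A}$, and then invoke the Banach fixed point theorem.

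First, I would absorb the policy-entropy term into an \emph{entropy-augmented reward}. Concretely, define
\begin{equation}
\tilde r_n(\mathbf{s}_t,\mathbf{a}_t) \;\triangleq\; r_n(\mathbf{s}_t,\mathbf{a}_t) \;+\; \gamma\,\mathbb{E}_{\mathbf{s}_{t+1}\sim p}\,\mathbb{E}_{\mathbf{a}_{t+1}\sim \pi_\succ(\cdot\mid\mathbf{s}_{t+1})}\!\bigl[-\log\pi_\succ(\mathbf{a}_{t+1}\mid\mathbf{s}_{t+1})\bigr].
\end{equation}
Because $|\mathcal{A}|<\infty$ and $\pi_\succ$ is fixed, the inner entropy $\mathcal{H}(\pi_\succ(\cdot\mid\mathbf{s}_{t+1}))$ is a bounded function of $\mathbf{s}_{t+1}$, so $\tilde r_n$ is a bounded function on $\mathcal{S}\times\mathcal{A}$. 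With this rewriting the operator becomes
\begin{equation}
(\mathcal{T}^{\pi_\succ}Q_n)(\mathbf{s}_t,\mathbf{a}_t) \;=\; \tilde r_n(\mathbf{s}_t,\mathbf{a}_t) \;+\; \gamma\,\mathbb{E}_{\mathbf{s}_{t+1}\sim p,\ \mathbf{a}_{t+1}\sim\pi_\succ}\bigl[\,Q_n(\mathbf{s}_{t+1},\mathbf{a}_{t+1})\,\bigr],
\end{equation}
i.e.\ an ordinary (non-soft) on-policy Bellman evaluation operator with the reward $\tilde r_n$.

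The second step is the standard contraction argument: for any two candidates $Q_n^{(1)},Q_n^{(2)}$, the $\tilde r_n$ terms cancel and
\begin{equation}
\bigl\lVert \mathcal{T}^{\pi_\succ}Q_n^{(1)} - \mathcal{T}^{\pi_\succ}Q_n^{(2)} \bigr\rVert_\infty \;\le\; \gamma\,\bigl\lVert Q_n^{(1)} - Q_n^{(2)} \bigr\rVert_\infty,
\end{equation}
because expectations are non-expansive in sup-norm. Since $\gamma\in[0,1)$ (the boundary case $\gamma=1$ would need to be excluded or handled by the usual discounted-MDP caveat), $\mathcal{T}^{\pi_\succ}$ is a $\gamma$-contraction on the Banach space of bounded functions on $\mathcal{S}\times\mathcal{A}$ under $\lVert\cdot\rVert_\infty$.

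Applying Banach's fixed point theorem then gives a unique fixed point and convergence $Q_n^l\to Q_n^\ast$ geometrically. The final step is to identify this fixed point with $Q_n^{\pi_\succ}$. Substituting the fixed-point equation and unrolling gives the usual geometric series in $\gamma$, which matches the definition of the soft on-policy Q-function of $\pi_\succ$ for reward $r_n$ (equivalently, the ordinary on-policy Q-function for $\tilde r_n$); by uniqueness of the fixed point, $Q_n^\ast = Q_n^{\pi_\succ}$.

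The main obstacle I anticipate is purely bookkeeping: checking that the entropy absorption is valid (it is, because $\pi_\succ$ is fixed and the action space is finite, so $-\log\pi_\succ$ is bounded wherever $\pi_\succ>0$, and actions with $\pi_\succ(\mathbf{a}\mid\mathbf{s})=0$ are excluded from the expectation under the usual convention $0\log 0 = 0$). Once that reduction is in place the proof is a direct application of the classical argument and the statement follows.
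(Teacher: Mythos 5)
Your proposal is correct and follows essentially the same route as the paper's proof: both absorb the policy entropy of the fixed arbiter policy $\pi_\succ$ into an entropy-augmented reward (bounded because $|\mathcal{A}|<\infty$), reducing $\mathcal{T}^{\pi_\succ}$ to an ordinary on-policy Bellman evaluation operator and then invoking the standard convergence result. The only difference is that you spell out the sup-norm contraction and Banach fixed-point argument that the paper delegates to a citation of \citet{sutton2018reinforcement}, and you place a $\gamma$ in front of the entropy term where the paper (following \citet{haarnoja2018soft}) omits it; neither affects the substance.
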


\begin{proof}
	See supplementary material Sec.~\ref{app:arbiter_policy_evaluation}.
\end{proof}



\begin{theorem}[Arbiter policy improvement] 
	Given an arbiter policy $\pi_\succ$, define a new arbiter policy as $\pi_\succ'(\cdot | \mathbf{s}) \propto  \exp (Q^{\pi_\succ}_\succ(\mathbf{s}, \cdot)),\ \forall \mathbf{s} \in \mathcal{S}$. Then $Q^{\pi'_\succ}_\succ(\cdot , \mathbf{s}) \ge Q_\succ^{\pi_\succ}(\cdot , \mathbf{s}),\ \forall \mathbf{s}, \mathbf{a}$.
	\label{thm:policy-improvement}
\end{theorem}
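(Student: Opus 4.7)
The plan is to adapt the standard soft policy improvement argument of Haarnoja et al.\ (SAC) to the arbiter Q-function $Q_\succ^{\pi_\succ}$, exploiting the fact that $\pi'_\succ$ is precisely the Boltzmann distribution that minimises the KL divergence against $\exp(Q_\succ^{\pi_\succ}(\mathbf{s},\cdot))$. The core observation is the variational characterisation of the log-partition function: for any policy $\pi$ absolutely continuous on $\mathcal{A}_\succ(\mathbf{s})$,
\begin{equation}
\mathbb{E}_{\mathbf{a}\sim\pi}\bigl[Q_\succ^{\pi_\succ}(\mathbf{s},\mathbf{a}) - \log\pi(\mathbf{a}\mid\mathbf{s})\bigr] \le \log\!\int \exp\!\bigl(Q_\succ^{\pi_\succ}(\mathbf{s},\mathbf{a}')\bigr)\,d\mathbf{a}',
\end{equation}
with equality iff $\pi=\pi'_\succ$. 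Specialising once to $\pi=\pi_\succ$ and once to $\pi=\pi'_\succ$ gives the pointwise inequality $V_\succ^{\pi_\succ}(\mathbf{s}) \le \mathbb{E}_{\mathbf{a}\sim\pi'_\succ}[Q_\succ^{\pi_\succ}(\mathbf{s},\mathbf{a}) - \log\pi'_\succ(\mathbf{a}\mid\mathbf{s})]$ for every $\mathbf{s}$.

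Second, I would plug this into the soft Bellman equation for $Q_\succ^{\pi_\succ}$ and unroll. Writing
\begin{equation}
Q_\succ^{\pi_\succ}(\mathbf{s},\mathbf{a}) = r_\succ(\mathbf{s},\mathbf{a}) + \gamma\,\mathbb{E}_{\mathbf{s}'\sim p}\bigl[V_\succ^{\pi_\succ}(\mathbf{s}')\bigr],
\end{equation}
the inequality above lets me replace the one-step expectation under $\pi_\succ$ by one under $\pi'_\succ$, producing
\begin{equation}
Q_\succ^{\pi_\succ}(\mathbf{s},\mathbf{a}) \le r_\succ(\mathbf{s},\mathbf{a}) + \gamma\,\mathbb{E}_{\mathbf{s}'\sim p,\mathbf{a}'\sim\pi'_\succ}\!\bigl[Q_\succ^{\pi_\succ}(\mathbf{s}',\mathbf{a}') - \log\pi'_\succ(\mathbf{a}'\mid\mathbf{s}')\bigr].
\end{equation}
Iterating this substitution yields a telescoping sum in which every reward and every entropy term is evaluated along trajectories drawn from $\pi'_\succ$, which is by definition $Q_\succ^{\pi'_\succ}(\mathbf{s},\mathbf{a})$. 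A short convergence argument (bounded rewards away from the forbidden set, $\gamma<1$, monotone-convergence on the truncated sums) closes the limit.

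Finally, I would address the subtlety that $Q_\succ^{\pi_\succ}(\mathbf{s},\mathbf{a}) = -\infty$ on the forbidden set $\bar{\mathcal{A}}_\succ(\mathbf{s})$ and that $\pi_\succ$, $\pi'_\succ$ both assign this set probability zero. Because the Boltzmann weights $\exp(Q_\succ^{\pi_\succ}(\mathbf{s},\cdot))$ vanish on $\bar{\mathcal{A}}_\succ(\mathbf{s})$, the partition integral is supported on $\mathcal{A}_\succ(\mathbf{s})$, so all KL/expectation manipulations are performed over the indifference space where the integrand is finite. This is the main place where one has to be careful: the log-sum-exp inequality must be applied to the restricted measure, and one must verify that $\pi'_\succ$ inherits the zero-probability region of $\pi_\succ$ so that the two policies are comparable. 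Once this is handled (it follows directly from the product form of $\pi_\succ$ derived in Sec.~\ref{sec:constraint-policy-transform}), the pointwise improvement $Q_\succ^{\pi'_\succ}(\mathbf{s},\mathbf{a}) \ge Q_\succ^{\pi_\succ}(\mathbf{s},\mathbf{a})$ follows for all $\mathbf{s},\mathbf{a}$, including trivially (as $-\infty\le -\infty$) on the forbidden set.
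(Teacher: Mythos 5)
Your proposal is correct and follows essentially the same route as the paper's proof: establish the one-step inequality $V_\succ^{\pi_\succ}(\mathbf{s}) \le \mathbb{E}_{\mathbf{a}\sim\pi'_\succ}\bigl[Q_\succ^{\pi_\succ}(\mathbf{s},\mathbf{a}) - \log\pi'_\succ(\mathbf{a}\mid\mathbf{s})\bigr]$, substitute it into the soft Bellman equation, and unroll with one-step look-aheads until $Q_\succ^{\pi'_\succ}$ appears, with the forbidden set handled by the observation that all arbiter policies assign it zero mass so the $\ln(c_i)$ terms contribute nothing to the backup. The only difference is one of explicitness: you justify the one-step inequality via the Gibbs variational characterisation of the log-partition function, whereas the paper asserts it directly as the standard soft-policy-improvement step inherited from MaxEnt RL.
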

\begin{proof}
	See supplementary material Sec.~\ref{app:arbiter_policy_improvement}.
\end{proof}


\begin{corollary}[Arbiter policy iteration] 
	Let $\pi_\succ^0$ be the initial arbiter policy and define $\pi_\succ^{l+1} (\cdot \mid \mathbf{s}) \propto \exp (Q_{\succ}^{\pi_\succ^l}(\mathbf{s}, \cdot))$. Assume $|\mathcal{A}| < \infty$ and that $r_n$ is bounded. Then $Q_\succ^{\pi_\succ^l}$ improves monotonically and $\pi_\succ^l$ converges to $\pi_\succ^*$.
\end{corollary}
\begin{proof}
	See supplementary material Sec.~\ref{app:arbiter_policy_iteration}.
\end{proof}

\section{Offline adaptation}
\label{app:offline-adaptation}

Our algorithm, PSQD, relies on the off-policy soft Bellman backup operator $\mathcal{T}$ in Eq.~\eqref{eq:psqd_backup_operator} and the objectives $J_Q(\theta)$ in Eq.~\eqref{eq:q_approx_objective} and $J_\pi(\phi)$ in Eq.~\eqref{eq:pi_objective} which can be computed without online policy rollouts. We can therefore adapt previously learned subtask solutions $Q_i, \pi_i$ offline with retained training data for subtask $i$, $(\mathbf{s}_t, \mathbf{a}_t, r_t, \mathbf{s}_{t+1}) \in \mathcal{D}_i$.
However, since $\pi_i$ was unconstrained during pre-training, $\mathcal{D}_i$ likely contains constraint-violating transitions that are impossible in $\mathcal{M}_{\succ i}$. To account for this, when sampling a transition $(\mathbf{s}_t, \mathbf{a}_t, r_t, \mathbf{s}_{t+1}) \sim \mathcal{D}_i$ during adaptation of the pre-trained $Q_i, \pi_i$, we can check whether $\mathbf{a}_t \in \mathcal{A}_\succ (\mathbf{s}_t)$ and discard all constraint violating transitions. This is like sampling from a new dataset $\mathcal{D}_{\succ i}$ that contains only transitions from $\mathcal{M}_{\succ i}$. Depending on how well $\mathcal{D}_{\succ i}$ covers $\mathcal{M}_{\succ i}$, we can learn the optimal solution for the global task entirely offline, without requiring additional environment interaction. 
This makes our approach maximally sample-efficient for learning complex, lexicographic tasks by re-using data retained from subtask pre-training.

\clearpage
\section{PSQD algorithm details}
\label{app:algo_details}

\begin{figure}
	\centering
	\includegraphics[width=\textwidth]{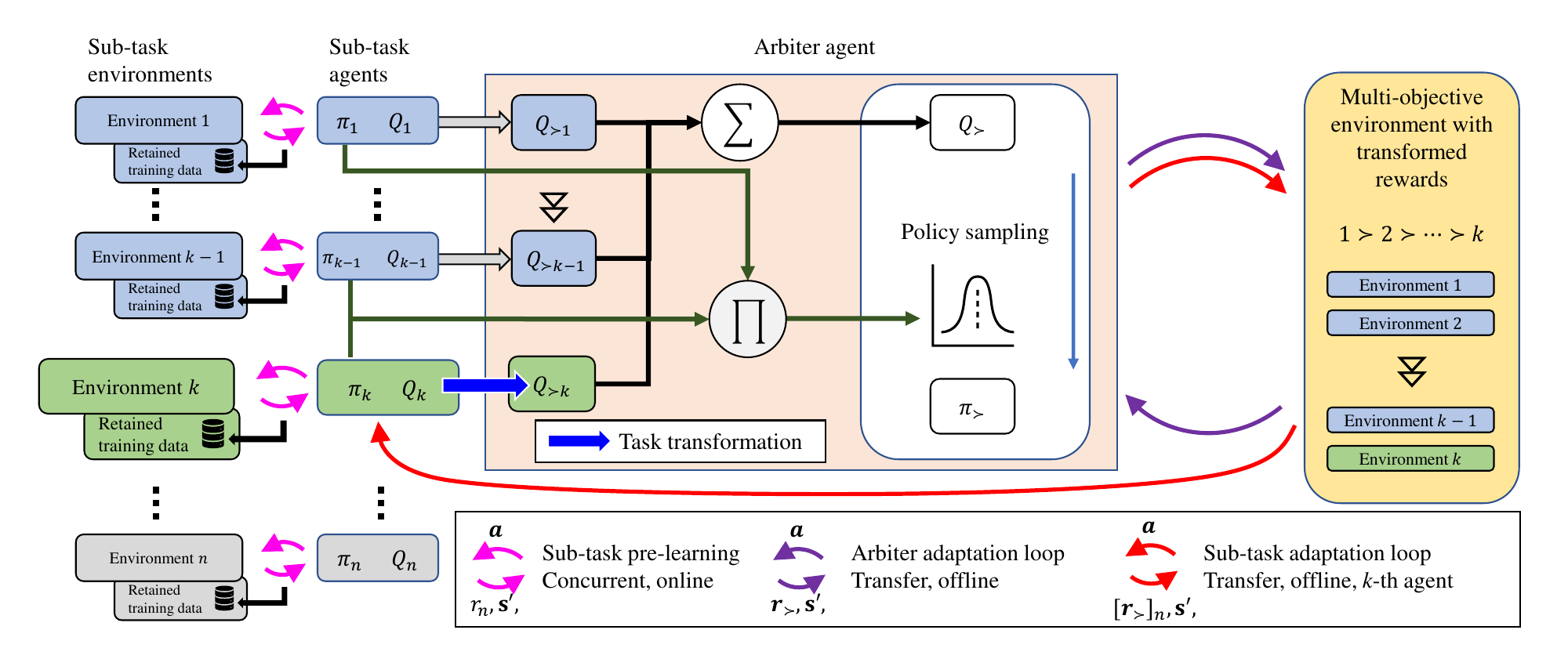}
	\caption{A high-level overview of our method. Starting on the left, $n$ agents individually learn to solve each subtask, we refer to this \includegraphics[height=\baselineskip]{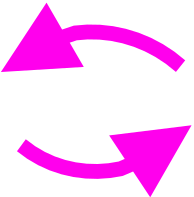} as the subtask pre-training step. In the middle box, the subtask agents are combined into the lexicographic arbiter agent. The subtask adaptation loop that we implement in practice, as described in Sec.~\ref{sec:learning} and \ref{sec:practical-algorithm}, is denoted by \includegraphics[height=\baselineskip]{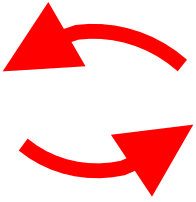}, while the arbiter learning perspective, described in App.~\ref{app:globa_view}, is denoted by \includegraphics[height=\baselineskip]{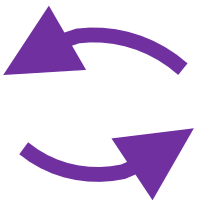}.}
	\label{fig:psqd_overview}
\end{figure}

Here, we provide additional details on our method.
A high-level overview of our framework and learning algorithm is given in Fig.~\ref{fig:psqd_overview}.
As can be seen, we begin by pre-training agents on the untransformed, original subtasks $r_1, \dots, r_n$. During the pre-training step, the agents are learning in isolation and greedily optimize their respective reward signals. Different learning algorithms could be used for pre-training the subtask agents, but in our case, the pre-training step corresponds to running $n$ instance of Soft Q-Learning~\cite{haarnoja2017reinforcement}. The pseudo-code for pre-training is given in Alg.~\ref{alg:sql_pretrain}.

\begin{algorithm}
	\caption{Subtask pre-training with SQL}\label{alg:sql_pretrain}
	\begin{algorithmic}
		\Require Subtask reward functions $r_1, \dots, r_n$
		\State Samplers $\leftarrow \{\}$
		\State Q-functions$\leftarrow \{\}$
		\State Buffers $\leftarrow \{\}$
		\For{$i \in \{1, \dots, n \}$} \Comment{Iterate (concurrently) over subtasks $\{1, \dots, n\}$}
			\State Initialize $\phi_i, \theta_i, \bar{\theta_i}, \mathcal{D}_i$ \Comment{Initialize subtask network parameters and empty buffer}
			\For{$t \in \{1, \dots, T\}$} \Comment{$T$ total steps}
				\State \textbf{Online interaction}
				\State $\zeta_t \sim P$ \Comment{Random noise for ASVGD sampling network}
				\State $\mathbf{a}_t \sim f_i^{\phi_i}(\zeta_t, \mathbf{s}_t)$ \Comment{\textbf{Unconstrained} action selection using sampling network}
				\State $\mathbf{s}_{t+1} \sim p(\mathbf{s}_{t+1} \mid \mathbf{s}_t, \mathbf{a}_t)$
				\State $\mathcal{D} \cup \mathcal{D} \leftarrow \{\mathbf{s}_t, \mathbf{a}_t, r_i(\mathbf{s}_t, \mathbf{a}_t), \mathbf{s}_{t+1}\}$ \Comment Store transition and \textbf{subtask} reward
				\State \textbf{Update networks}
				\State Update $\theta_i$ \Comment{SQL Q-function update}
				\State Update $\phi_i$ \Comment{SQL sampling network update}
				\State $\bar{\theta_i} \leftarrow \tau \theta_i + (1 - \tau) \bar{\theta_i}$ \Comment{Polyak target network update }
			\EndFor
			\State Samplers $\leftarrow$ Samplers $\cup \{ \phi_i \}$
			\State Q-functions $\leftarrow$ Q-functions $\cup \{ \theta_i \}$
			\State Buffers $\leftarrow$ Buffers $\cup \{ \mathcal{D}_i \}$
		\EndFor
		\State \Return Subtask sampling networks $\{\phi_1, \dots, \phi_n\}$, Q-functions $\{\theta_1 \dots \theta_n\}$, and Buffers $\{\mathcal{D}_1, \dots, \mathcal{D}_n\}$
	\end{algorithmic}
\end{algorithm}

After the SQL pre-training step in Alg~\ref{alg:sql_pretrain}, we have access to converged subtask Q-functions, sampling networks for those Q-functions, and populated replay buffers for each subtask $r_i$. 
Next, we perform our adaption step where we finetune the pre-trained Q-functions and sampling networks under the consideration of the lexicographic constraint on higher-priority subtask performance. 
This corresponds to the subtask adaptation loop in Figure~\ref{fig:psqd_overview}.
Importantly, adapting the $i$-th subtask Q-function and sampling network requires that all higher-priority subtasks components have already been adapted to the lexicographic setting because otherwise, their indifference spaces might not overlap (see also App.~\ref{app:incompatible_tasks}).
We therefore employ an iterative adaptation procedure where we start with the second highest task (the highest-priority task is unconstrained) and adapt it under consideration of the lexicographic constraint for the highest-priority task. Then, the third-highest-priority task can be adapted under consideration of the lexicographic constraint for the highest- and second-highest-priority tasks, and so on.

\begin{algorithm}
	\caption{Incremental PSQD subtask adaptation}
	\label{alg:psqd_adapt}
	\begin{algorithmic}
		\Require Subtask reward functions $r_1, \dots, r_n$
		\Require Threshold scalars $\varepsilon_1, \dots, \varepsilon_{n-1}$
		\Require Pre-trained subtask sampling network parameters $\phi_1,\dots, \phi_n$
		\Require Pre-trained subtask Q-function parameters $\theta_1, \dots, \theta_n$
		\Require Populated subtask replay buffers $\mathcal{D}_1, \dots, \mathcal{D}_n$
		\State
		\For{$i \in \{2, \dots, n \}$} \Comment{Iterate over subtasks, starting with second-highest priority task}
		\State $\mathcal{D}  = \mathcal{D} - \{\mathbf{s}_t, \mathbf{a}_t, r_i(\mathbf{s}_t, \mathbf{a}_t), \mathbf{s}_{t+1}\} \forall \mathbf{a}_t   \notin \mathcal{A}_\succ$ \Comment{Discard \textbf{constraint-violating} transitions}
		\For{$t \in \{1, \dots, T\}$} \Comment{$T$ total steps}
		\State
		\State \textbf{Optional online interaction}
		\State \textcolor{gray}{// As described in Sec.~\ref{sec:practical-algorithm}, we perform importance sampling to sample from $\pi_{\succ i}$, the}
		\State \textcolor{gray}{// lexicographically constrained policy for the current subtask. The already adapted}
		\State \textcolor{gray}{// $\phi_1, \dots, \phi_{i-1}$ and the current $\phi_i$ sampling networks are used as proposal distributions,}
		\State \textcolor{gray}{// with the unnormalized target density being given by Eq.~\ref{eq:arbiter-policy}.}
		\State $\mathbf{a}_t \sim \pi_{\succ i}$
		\State $\mathbf{s}_{t+1} \sim p(\mathbf{s}_{t+1} \mid \mathbf{s}_t, \mathbf{a}_t)$
		\State $\mathcal{D} \cup \mathcal{D} \leftarrow \{\mathbf{s}_t, \mathbf{a}_t, r_i(\mathbf{s}_t, \mathbf{a}_t), \mathbf{s}_{t+1}\}$ \Comment Store transition and \textbf{subtask} reward
		\State
		\State \textbf{Update networks}
		\State Update $\theta_i$ using Eq.~\ref{eq:q_approx_objective}
		\State Update $\phi_i$ using Eq.~\ref{eq:pi_objective}
		\State $\bar{\theta}_i \leftarrow \tau \theta_i + (1 - \tau) \bar{\theta}_i$ \Comment{Polyak target network update }
		\EndFor
		\EndFor
		\State \Return Adapted sampling network parameters $\phi_i$ and Q-function parameters $\theta_i$
	\end{algorithmic}
\end{algorithm}

Notice that the online adaptation step in Alg.~\ref{alg:psqd_adapt} is optional, it is also possible to compute the network updates entirely offline, using the filtered versions of the subtask replay buffers from pre-training. 
The filtered versions of the replay buffer are obtained as described in App.~\ref{app:offline-adaptation}, where we ensure that no constrain-violating transitions are used during offline adaptation.
If additional online interaction data should be collected from the environment, this can be done by sampling actions from the constrained policy for the current subtask.
\clearpage

\section{Discussion on semantically incompatible subtasks}
\label{app:incompatible_tasks}
Here, we discuss an interesting edge case, namely, situations where subtasks are semantically incompatible. 
Let us consider an example where we wish to jointly optimize three subtasks: Subtask $r_1$ rewards moving to the left, $r_2 = -r_1$ rewards moving to the right, and $r_3$ rewards moving to the top. 
Clearly, $r_1$ and $r_2$ are semantically incompatible, since $r_2$ is the inverse of $r_1$.
We should first notice that such MORL tasks with semantically incompatible subtasks are inherently ill-posed, since no agent can, at the same time, behave optimally for both $r_1$ and $r_2$. Intuitively, an agent can either move to the left, or to the right, but it can not tend to both directions at the same time. 
We emphasize the \textit{semantic} incompatibility, since, from the MORL algorithm's perspective, tasks can not be ``incompatible''. Even when $r_2 = -r_1$, maximizing the summed reward $r_1 + r_2$ is still valid, even though the optimal behavior for this task might not be aligned with the RL practitioner hoped to obtain.

In the general case, given semantically incompatible subtasks, a non-lexicographic MORL agent that maximizes $r_1 + r_2 + r_3$ will likely fail to progress on either of the incompatible subtasks, as shown by \citep{russell2003q}. The behavior depends on subtask reward scale, the discount factor, and function approximation errors, which can be considerable since deep regression for Bellman optimality tends to overestimate state-action values~\cite{mnih2015human}. Therefore, MORL algorithms without subtask priorities are likely to produce unexpected and undesired behavior when subtasks are semantically incompatible.

Importantly, if we instead optimize the lexicographic task $r_1 \succ r_2 \succ r_3$, most of the above issues are alleviated: The lexicographic problem definition states clearly how the agent ought to behave, since, no matter how different the subtasks might be, near-optimal performance of high-priority tasks is required by definition. For the same reason, lexicographic tasks are also insensitive with respect to subtask reward scale and discount factors, since they are not optimizing a weighted combination of subtask rewards. In fact, lexicographic RL was introduced and motivated by \citep{gabor1998multi} to precisely resolve situations with semantically incompatible subtasks, e.g. ``Buridan's Ass'' dilemma, which is a philosophical paradox that translates directly to conflicting subtasks in MORL.

While lexicographic RL inherently resolves subtask incompatibility via its problem definition, we now clarify how our learning algorithm resolves these situations in practice.
First, we notice that greedy subtask solutions obtained through SQL pre-training in Alg.~\ref{alg:sql_pretrain} can indeed be ``incompatible''. 
Pre-training independently on our exemplary tasks $r_1$ and $r_2 = - r_1$ would result in Q-functions $Q_1^*$ and $Q_2^*$ that assign high-value to inverse parts of the action space (moving left and moving right).
Thus, the intersection of the indifference spaces given by $Q_1^*$ and $Q_2^*$ can be empty, which would imply an empty action space for lower-priority subtasks and an ill-posed learning problem.

Our definition of the lexicographic constraint in Eq.~\ref{eq:general-lexicograpic-constraints}, however, formulates the lexicographic constraints on the on-policy Q-functions for the constraint-respecting arbiter agent. Our learning and adaption procedure in Alg.~\ref{alg:psqd_adapt} accounts for this since it \textit{sequentially} adapts subtask solutions, starting with higher-priority subtasks.
This means that in the first iteration, we adapt the solution of the subtask with second-highest priority to the lexicographic optimality constraint on the highest-priority subtasks.
This changes the greedy subtask agent (i.e. its Q-function) $Q_2^* \rightsquigarrow Q_{\succ 2}^*$ to an agent that is no longer greedy with respect to its original subtask -- 
the adapted agent instead solves $r_2$ as well as possible, while respecting the lexicographic constraint.
As described in App.~\ref{app:lexi_morl_task}, $Q_{\succ 2}^*$ assigns $-\infty$ to all actions that violate the lexicographic constraint, therefore, the high-value region and indifference space of $Q_{\succ 2}^*$ has to be inside the high-value region in $Q_1^*$, i.e. inside the indifference space of the higher-priority task $r_1$.
Thus, although $Q_1^*$ and $Q_2^*$ might be ``incompatible'', the adapted $Q_{\succ 2}^*$ is indeed compatible with $Q_1^*$, even when $r_2 = -r_1$.
Since in our adaption procedure, lower-priority subtasks always use the already adapted subtask solutions for  higher-priority tasks, the intersection of all higher-priority indifference spaces can never be empty.

In summary, lexicographic MORL tasks resolve incompatible subtasks by definition, since lower-priority subtasks are constrained to solutions that are also near-optimal for all higher-priority subtasks. Our adaption procedure accounts for this by adapting subtasks sequentially, which ensures that the intersection of all higher-priority subtasks is never empty.

\section{Proofs} \label{app:C}
In Sec.~\ref{app:arbiter_policy_evaluation} to Sec.~\ref{app:arbiter_policy_iteration} we prove the ``on-arbiter-policy'' view of our algorithm. We prove the subtask view of our algorithm in Sec.~\ref{app:soft_q_iteration}.
These proofs are based on and make use of the following observation. 
Due to the incremental nature of our learning algorithm, the $n-1$ higher-priority tasks have already been learned in an on-arbiter-policy fashion and are fixed during learning of the $n$ subtask Q-function. Thus we have access to the transformed subtask Q-functions $Q_{\succ i}$ for all higher priority tasks $1,\dots, n-1$, which are stationary during learning of the $n$-th subtask Q-function.
As a consequence, the arbiter policy $\pi_\succ$, which we want to improve and for which we want to learn $n$-th subtask Q-function in an on-policy fashion, is already constrained to the global indifference space and can never select constraint-violating actions. Thus, we can rely on existing proofs for MaxEnt RL, since our arbiter policy is just a particular MaxEnt policy.

\subsection{Arbiter policy evaluation}
\label{app:arbiter_policy_evaluation}

\begin{lemma}
	Arbiter policy evaluation. Consider the soft Bellman backup operator 
	$\mathcal{T}^\pi$ and an initial mapping $Q^0_n: \mathcal{S} \times \mathcal{A} \to \mathbb{R}$ with $|\mathcal{A}| < \infty$ and define $Q_n^{l+1} = \mathcal{T}^\pi Q_n^{l}$. The sequence of $Q^l_n$ will converge to $Q_n^\pi$, the soft Q-value of $\pi$, as $ l \to \infty$.
\end{lemma}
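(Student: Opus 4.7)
The plan is to show that under the finite-action assumption the operator $\mathcal{T}^{\pi_\succ}$ is a $\gamma$-contraction in the sup-norm on the space of bounded real-valued functions on $\mathcal{S}\times\mathcal{A}$, and then invoke the Banach fixed-point theorem. This follows the standard MaxEnt policy evaluation argument of \citet{haarnoja2018soft}, adapted to our setting where $\pi_\succ$ happens to be an arbiter policy constrained to the global action indifference space. The crucial point is that $\pi_\succ$ is \emph{fixed} throughout the iteration, so the entropy term $-\log\pi_\succ(\mathbf{a}_t\mid\mathbf{s}_t)$ does not depend on the iterate $Q_n^{l}$ at all.

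First I would rewrite the backup by absorbing the entropy contribution into an ``entropy-augmented reward''
\begin{equation*}
    r^{\pi_\succ}_n(\mathbf{s}_t,\mathbf{a}_t)\;\triangleq\;
    r_n(\mathbf{s}_t,\mathbf{a}_t)
    +\gamma\,\mathbb{E}_{\mathbf{s}_{t+1}\sim p}\bigl[\mathcal{H}(\pi_\succ(\cdot\mid\mathbf{s}_{t+1}))\bigr],
\end{equation*}
so that the update takes the form
\begin{equation*}
    \mathcal{T}^{\pi_\succ}Q(\mathbf{s}_t,\mathbf{a}_t)
    =
    r^{\pi_\succ}_n(\mathbf{s}_t,\mathbf{a}_t)
    +\gamma\,\mathbb{E}_{\mathbf{s}_{t+1}\sim p,\,\mathbf{a}_{t+1}\sim\pi_\succ}\bigl[Q(\mathbf{s}_{t+1},\mathbf{a}_{t+1})\bigr].
\end{equation*}
In this form $\mathcal{T}^{\pi_\succ}$ is exactly an ordinary (linear) Bellman evaluation operator for a standard MDP with the modified reward $r^{\pi_\succ}_n$. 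Since $|\mathcal{A}|<\infty$ and $r_n$ is bounded, the entropy and hence $r^{\pi_\succ}_n$ are bounded, so $\mathcal{T}^{\pi_\succ}$ maps the Banach space of bounded functions into itself.

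Next I would establish the contraction property directly. For any two bounded $Q,Q'$, the reward and entropy terms cancel and
\begin{equation*}
    \bigl|\mathcal{T}^{\pi_\succ}Q(\mathbf{s},\mathbf{a}) - \mathcal{T}^{\pi_\succ}Q'(\mathbf{s},\mathbf{a})\bigr|
    \le \gamma\,\mathbb{E}_{\mathbf{s}'\sim p,\,\mathbf{a}'\sim\pi_\succ}\bigl[\,|Q(\mathbf{s}',\mathbf{a}')-Q'(\mathbf{s}',\mathbf{a}')|\,\bigr]
    \le \gamma\,\|Q-Q'\|_\infty,
\end{equation*}
so $\|\mathcal{T}^{\pi_\succ}Q-\mathcal{T}^{\pi_\succ}Q'\|_\infty\le\gamma\|Q-Q'\|_\infty$. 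The Banach fixed-point theorem then gives existence and uniqueness of a fixed point of $\mathcal{T}^{\pi_\succ}$, together with convergence $Q_n^l\to Q^{\infty}$ in sup-norm from any bounded initialization.

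Finally I would identify the fixed point with $Q_n^{\pi_\succ}$. By definition $Q_n^{\pi_\succ}$ is the soft Q-value of $\pi_\succ$ for reward $r_n$, hence satisfies the soft Bellman consistency equation $Q_n^{\pi_\succ}=\mathcal{T}^{\pi_\succ}Q_n^{\pi_\succ}$; by uniqueness of the fixed point, $Q^{\infty}=Q_n^{\pi_\succ}$, which is the claim. The only mild obstacle is making sure the entropy of $\pi_\succ$ is uniformly bounded so that $r^{\pi_\succ}_n$ is bounded; this is immediate from $|\mathcal{A}|<\infty$ in the tabular statement of the lemma, and in the continuous setting used in Sec.~\ref{sec:practical-algorithm} it is inherited from boundedness of the $Q_i$ (and hence of the densities $\pi_\succ\propto\exp Q_\succ$) on the compact action set.
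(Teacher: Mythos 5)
Your proof takes essentially the same route as the paper's: both absorb the (policy-independent) entropy term into an entropy-augmented reward $r_n^{\pi_\succ}$, reducing $\mathcal{T}^{\pi_\succ}$ to a standard linear policy-evaluation operator whose convergence follows from the usual $\gamma$-contraction argument under the $|\mathcal{A}|<\infty$ boundedness assumption. The only difference is that you spell out the sup-norm contraction and the Banach fixed-point step explicitly (and correctly place a $\gamma$ in front of the entropy expectation), whereas the paper delegates this to the standard policy-evaluation convergence result of \citet{sutton2018reinforcement}; your argument is correct.
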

The proof is straightforward based on the observation that the $n$-th subtask Q-function is simply a soft Q-function for the task $r_n$, learned under the expectation of the global arbiter policy~\citep{russell2003q}, which has zero-probability for constraint-violating actions. 
Thus, the proof is analogous to the one by \citet{haarnoja2018soft} except in our case $\pi$ is a constraint-respecting arbiter policy. We repeat the proof here only for completeness: 
\begin{proof}
	We define the on-policy, entropy augmented reward signal for soft Q-functions as $r_n^\pi(\mathbf{s}_t, \mathbf{a}_t) \triangleq r_n(\mathbf{s}_t, \mathbf{a}_t) + \mathbb{E}_{\mathbf{s}_{t+1}\sim p} [\mathcal{H}(\pi_\succ(\cdot | \mathbf{s}_{t+1}))]$ and rewrite $\mathcal{T}^\pi$ as 
	\begin{equation}
		\label{eq:app_on_policy_q}
		Q_n^\pi(\mathbf{s}_t, \mathbf{a}_t) \leftarrow r_n^\pi(\mathbf{s}_t, \mathbf{a}_t) + \gamma \mathbb{E}_{\mathbf{s}_{t+1} \sim p, \mathbf{a}_{t+1} \sim \pi}[Q_n^\pi(\mathbf{s}_{t+1}, \mathbf{a}_{t+1})].
	\end{equation}
	The standard convergence result for policy evaluation~\citep{sutton2018reinforcement} thus holds. The assumption $|\mathcal{A}| < \infty$ is required to guarantee that the entropy augmented reward is bounded.
\end{proof}

\subsection{Arbiter policy improvement}
\label{app:arbiter_policy_improvement}

We want to show that we can obtain an arbiter policy $\pi_\succ$ that is better or as least as good as any other arbiter policy by softly maximizing the soft global Q-function. Since the arbiter policy maximizes the sum of all (transformed) subtask Q-functions we set $r = r_\succ = \sum_{j=1}^{n-1} r_{\succ j} + r_n$ and $Q_\text{soft}^\pi = Q^\pi_\succ = \sum_{j=1}^{n-1}Q_{\succ j} + Q_n$ for the remainder of this section.

Since we are concerned with finding an improved arbiter policy and since all arbiter policies have zero probability for selecting constraint-violating actions, the $r_{\succ j}$ and $Q_{\succ j}$ terms evaluate to zero and we can only improve $Q^\pi_\text{soft}$ by increasing $r_n$. 
$Q^\pi_\text{soft}$ still correctly assigns a value of $-\infty$ to actions that violate constraints, however, 
this value is known through the constraint on already optimal higher-priority Q-functions and not backed up while learning task $r_n$.
In practice, since arbiter policies can never select constraint-violating actions, the soft policy improvement theorem by \citet{haarnoja2017reinforcement} also holds for us since we are softly maximizing the ordinary task $r_n$ with a particular soft policy.
Arbiter policy improvement for the global tasks thus degrades to maximizing the $n$-th task while respecting priority constraints. 
In the following, we use the short-hand $r_t = r(\mathbf{s}_t, \mathbf{a}_t)$ for a reward at time $t$ for more compact notation.
\begin{theorem}
	Arbiter policy improvement. Given an arbiter policy $\pi$, define a new arbiter policy as $\textcolor{red}{\pi'}(\cdot | \mathbf{s}) \propto  \exp (Q^\pi_\succ(\mathbf{s}, \cdot)),\ \forall \mathbf{s} \in \mathcal{S}$. Then $\textcolor{red}{\pi'}(\cdot | \mathbf{s}) \ge \pi(\cdot | \mathbf{s}),\ \forall \mathbf{s}, \mathbf{a}$.
\end{theorem}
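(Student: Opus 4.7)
The plan is to reduce the claim to the standard soft policy improvement result of Haarnoja et al., with one key preliminary step to verify that the greedy construction of $\pi'$ still yields a valid arbiter policy. Reading the statement together with Theorem~\ref{thm:policy-improvement} in the main text, I interpret the conclusion as the pointwise inequality $Q^{\pi'}_\succ(\mathbf{s},\mathbf{a}) \ge Q^\pi_\succ(\mathbf{s},\mathbf{a})$ on $\mathcal{S}\times\mathcal{A}$ (the symbol $\pi'\ge\pi$ in the appendix restatement is clearly a shorthand/typo for the value dominance). The structural observation that makes everything go through is that $Q^\pi_\succ(\mathbf{s},\mathbf{a}) = \sum_{i=1}^{n-1}\ln c_i(\mathbf{s},\mathbf{a}) + Q_n(\mathbf{s},\mathbf{a})$ is $-\infty$ on $\bar{\mathcal{A}}_\succ(\mathbf{s})$, so $\exp Q^\pi_\succ$ vanishes there and $\pi'\propto\exp Q^\pi_\succ$ places all its mass in $\mathcal{A}_\succ(\mathbf{s})$. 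Hence $\pi'$ is itself an arbiter policy, and the transformed subtask contributions $Q_{\succ 1},\dots,Q_{\succ n-1}$ contribute zero under any expectation with respect to it (as in the derivation of App.~\ref{app:lexi_morl_task}).

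Given that reduction, the core of the proof is the usual KL manipulation. Define $V^\pi_\succ(\mathbf{s}) = \log\int_{\mathcal{A}_\succ(\mathbf{s})} \exp Q^\pi_\succ(\mathbf{s},\mathbf{a})\, d\mathbf{a}$ (finite because $|\mathcal{A}|<\infty$ and $r_n$ is bounded), so $\pi'(\mathbf{a}\mid\mathbf{s}) = \exp(Q^\pi_\succ(\mathbf{s},\mathbf{a}) - V^\pi_\succ(\mathbf{s}))$ on $\mathcal{A}_\succ(\mathbf{s})$. Because $\pi'$ is the exact Boltzmann distribution at temperature $1$,
\begin{equation*}
\mathrm{KL}\!\left(\pi'(\cdot\mid\mathbf{s})\,\middle\|\,\exp(Q^\pi_\succ(\mathbf{s},\cdot) - V^\pi_\succ(\mathbf{s}))\right) = 0 \;\le\; \mathrm{KL}\!\left(\pi(\cdot\mid\mathbf{s})\,\middle\|\,\exp(Q^\pi_\succ(\mathbf{s},\cdot) - V^\pi_\succ(\mathbf{s}))\right),
\end{equation*}
which, after expanding both KL terms, rearranges to
\begin{equation*}
\mathbb{E}_{\mathbf{a}\sim\pi'(\cdot\mid\mathbf{s})}\!\left[Q^\pi_\succ(\mathbf{s},\mathbf{a}) - \log\pi'(\mathbf{a}\mid\mathbf{s})\right] \;\ge\; V^\pi_\succ(\mathbf{s}).
\end{equation*}
Both sides are finite since both $\pi$ and $\pi'$ are supported in $\mathcal{A}_\succ(\mathbf{s})$, where $Q^\pi_\succ$ is finite.

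The final step is the standard telescoping argument. Using the soft Bellman equation $Q^\pi_\succ(\mathbf{s},\mathbf{a}) = r_\succ(\mathbf{s},\mathbf{a}) + \gamma\,\mathbb{E}_{\mathbf{s}'\sim p}[V^\pi_\succ(\mathbf{s}')]$ together with the inequality above applied at $\mathbf{s}'$, one obtains
\begin{equation*}
Q^\pi_\succ(\mathbf{s},\mathbf{a}) \;\le\; r_\succ(\mathbf{s},\mathbf{a}) + \gamma\,\mathbb{E}_{\mathbf{s}'\sim p,\;\mathbf{a}'\sim\pi'}\!\left[Q^\pi_\succ(\mathbf{s}',\mathbf{a}') - \log\pi'(\mathbf{a}'\mid\mathbf{s}')\right].
\end{equation*}
Unrolling this recursion along trajectories generated by $\pi'$ (valid because $\pi'$ respects the priority constraints, so $r_\succ$ equals $r_n$ plus an almost-surely-zero constraint term along these trajectories) yields $Q^\pi_\succ(\mathbf{s},\mathbf{a}) \le Q^{\pi'}_\succ(\mathbf{s},\mathbf{a})$ for all $(\mathbf{s},\mathbf{a})$, which is the desired improvement.

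The only real subtlety, and the step I expect to need the most care in a fully rigorous write-up, is the handling of the extended-real values of $Q^\pi_\succ$ on $\bar{\mathcal{A}}_\succ(\mathbf{s})$: one must argue that the KL inequality and the telescoping both stay within the finite regime because every expectation is taken against an arbiter policy whose support excludes the $-\infty$ region. Once that invariance is established via the product structure of \eqref{eq:arbiter-policy}, the remainder is a direct transcription of the Haarnoja et al.\ soft policy improvement argument, specialized to our prioritized setting.
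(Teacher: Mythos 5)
Your proposal is correct and follows essentially the same route as the paper: reduce to the standard soft policy improvement argument of Haarnoja et al.\ by noting that the $\ln c_i$ terms vanish on the support of any arbiter policy, establish the one-step inequality, and telescope along $\pi'$-trajectories to get $Q^{\pi}_\succ \le Q^{\pi'}_\succ$. You additionally make explicit two things the paper only asserts --- the KL-divergence justification of the one-step inequality and the verification that $\pi'$ is itself an arbiter policy --- which strengthens rather than changes the argument.
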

\begin{proof}
	The soft policy maximization objective
	\begin{equation}
		J(\pi) \triangleq \sum_t \mathbb{E}_{(\mathbf{s}_t, \mathbf{a}_t) \sim \rho_\pi} 
		[
		Q_\text{soft}^\pi(\mathbf{s}_t, \mathbf{a}_t) + \mathcal{H}(\pi(\cdot | \mathbf{s}_t))
		]
	\end{equation}
	can be improved either by increasing the soft Q-value or the entropy term:
	\begin{equation}
		\label{eq:app_soft_policy_improvement_ways}
		\mathbb{E}_{(\mathbf{s}_t, \mathbf{a}_t) \sim \rho_\pi} 
		[
		Q_\text{soft}^\pi(\mathbf{s}_t, \mathbf{a}_t)
		] 
		+ \mathcal{H}(\pi(\cdot | \mathbf{s}_t)) 
		\le
		\mathbb{E}_{(\mathbf{s}_t, \mathbf{a}_t) \sim \rho_{\textcolor{red}{\pi'}}} 
		[
		Q_\text{soft}^\pi(\mathbf{s}_t, \mathbf{a}_t)
		] 
		+ \mathcal{H}(\textcolor{red}{\pi'}(\cdot | \mathbf{s}_t)).
	\end{equation}
	
	Recalling the definition of the soft Q-function
	\begin{equation}
		Q^\pi_\text{soft}(\mathbf{s}_t, \mathbf{a}_t) 
		\triangleq 
		r_0(\mathbf{s}_t, \mathbf{a}_t) 
		+ 
		\mathbb{E}_{(\mathbf{s}_{t+1}, \mathbf{a}_{t+1},\dots) \sim \rho_\pi} 
		\bigg[ 
		\sum_{l=1}^\infty 
		\gamma^l 
		(r(\mathbf{s}_{t+l}, \mathbf{a}_{t+l}) 
		+ 
		\mathcal{H}(\mathbf{a}_{t+l} \mid \mathbf{s}_{t+l}))
		\bigg]
		,
	\end{equation}

	the proof for the (soft) policy improvement theorem now simply expands the right side of \eqref{eq:app_soft_policy_improvement_ways} with one-step look-aheads until one obtains $Q_\text{soft}^{\textcolor{red}{\pi'}}$:
	\begin{equation}
		\begin{split}
			Q_\text{soft}^\pi(\mathbf{s}_0, \mathbf{a}_0) 
			&= r_0
			+ 
			\mathbb{E}_{\mathbf{s}_1} 
			\Big[
			\gamma 
			\big( 
			\mathcal{H}(\pi(\cdot | \mathbf{s}_1)) 
			+ 
			\mathbb{E}_{\mathbf{a}_1 \sim \pi} [Q_\text{soft}^\pi(\mathbf{s}_1, \mathbf{a}_1)] 
			\big)
			\Big] \\
			&\le r_0 
			+ 
			\mathbb{E}_{\mathbf{s}_1} 
			\Big[
			\gamma 
			\big( 
			\mathcal{H}(\textcolor{red}{\pi'}(\cdot | \mathbf{s}_1)) 
			+ 
			\mathbb{E}_{\mathbf{a}_1 \sim \textcolor{red}{\pi'}}[Q_\text{soft}^{\pi}(\mathbf{s}_1, \mathbf{a}_1)] 
			\big)
			\Big] \\ 
			&= r_0
			+ 
			\mathbb{E}_{\mathbf{s}_1} 
			\Big[
			\gamma
			\big(
			\mathcal{H}(\textcolor{red}{\pi'}(\cdot | \mathbf{s}_1))
			+
			r_1
			\big)
			\Big]
			+ \gamma^2
			\mathbb{E}_{\mathbf{s}_2} 
			\Big[
			\mathcal{H}(\pi(\cdot | \mathbf{s}_2)) 
			+ 
			\mathbb{E}_{\mathbf{a}_2 \sim \pi} [Q_\text{soft}^\pi(\mathbf{s}_2, \mathbf{a}_2)] 
			\Big] \\
			&\le r_0
			+ 
			\mathbb{E}_{\mathbf{s}_1} 
			\Big[
			\gamma
			\big(
			\mathcal{H}(\textcolor{red}{\pi'}(\cdot | \mathbf{s}_1))
			+
			r_1
			\big)
			\Big]
			+ \gamma^2
			\mathbb{E}_{\mathbf{s}_2} 
			\Big[
			\mathcal{H}(\textcolor{red}{\pi'}(\cdot | \mathbf{s}_2)) 
			+ 
			\mathbb{E}_{\mathbf{a}_2 \sim \textcolor{red}{\pi'}} [Q_\text{soft}^\pi(\mathbf{s}_2, \mathbf{a}_2)] 
			\Big] \\
			&= r_0
			+ 
			\mathbb{E}_{(\mathbf{s}_1, \mathbf{s}_2) \sim p, (\mathbf{a}_1, \mathbf{a}_2) \sim \textcolor{red}{\pi'}} 
			\Big[
			\gamma
			\big(
			\mathcal{H}(\textcolor{red}{\pi'}(\cdot | \mathbf{s}_1))
			+
			r_1
			\big)
			\Big]
			+ \gamma^2
			\mathcal{H}(\textcolor{red}{\pi'}(\cdot | \mathbf{s}_2)) 
			+ 
			Q_\text{soft}^\pi(\mathbf{s}_2, \mathbf{a}_2) \\
			& \vdots \\
			&\le r_0
			+
			\mathbb{E}_{(\mathbf{s}_{t+1}, \mathbf{a}_{t+1},\dots) \sim \rho_{\textcolor{red}{\pi'}}}
			\bigg[ 
			\sum_{l=1}^\infty 
			\gamma^l 
			(r_{t+l}
			+ 
			\mathcal{H}(\textcolor{red}{\pi'}(\cdot \mid \mathbf{s}_{t+l}))	
			\bigg]	\\
			&= Q_\text{soft}^{\textcolor{red}{\pi'}}(\mathbf{s}_0, \mathbf{a}_0) 
		\end{split}
	\end{equation}
\end{proof}

\subsection{Arbiter policy iteration}
\label{app:arbiter_policy_iteration}
\begin{corollary}[Arbiter policy iteration] 
	Let $\pi_\succ^0$ be the initial arbiter policy and define $\pi_\succ^{l+1} (\cdot \mid \mathbf{s}) \propto \exp (Q_{\succ}^{\pi_\succ^l}(\mathbf{s}, \cdot))$. Assume $|\mathcal{A}| < \infty$ and that $r_n$ is bounded. Then $Q_\succ^{\pi_\succ^l}$ improves monotonically and $\pi_\succ^l$ converges to $\pi_\succ^*$.
\end{corollary}
\begin{proof}
	Let $\pi_l$ be the arbiter policy at iteration $l$. According to Theorem~\ref{thm:policy-improvement}, the sequence $Q_\succ^{\pi_\succ^l}$ improves monotonically. 
	Since arbiter policies only select actions inside the global action indifference space, 
	the $\ln(c_i)$ components of $Q_{\succ}$ evaluate to zero and since	$r_n$ is assumed to be bounded and since the entropy term is bounded by the $|\mathcal{A}| < \infty$ assumption, the sequences converges to some $\pi_\succ^*$.
	$\pi_\succ^*$ is optimal, since according to Theorem~\ref{thm:policy-improvement} we must have $Q_\succ^{\pi_\succ^*} > Q_\succ^{\pi_\succ}, \forall \pi_\succ \ne \pi_\succ^*$.
\end{proof}

\subsection{Prioritized soft Q-learning}
\label{app:soft_q_iteration}
We now prove the local view on our algorithm, by showing that we can define an off-policy backup operator that accurately reflects the global arbiter policy, thus connecting the global on-policy and local off-policy views on our algorithm.
For this, we first note the soft Bellman optimality equations that have already been proven by \citet{haarnoja2017reinforcement}
\begin{equation}
	Q^*_\text{soft}(\mathbf{s}_t, \mathbf{a}_t) = r(\mathbf{s}_t, \mathbf{a}_t) + \gamma \mathbb{E}_{\mathbf{s}_{t+1} \sim p} [V^*_\text{soft}(\mathbf{s}_{t+1})],
\end{equation}
with
\begin{equation}
	\label{eq:app_v_soft_optimal}
	V^*_\text{soft}(\mathbf{s}_t) = \underbrace{\log \int_\mathcal{A} \exp}_{\text{softmax}} \big( \frac{1}{\alpha} Q^*_\text{soft}(\mathbf{s}_t, \mathbf{a}')\big) d \mathbf{a}',
\end{equation}
where the softmax operator represents a policy $\pi_\text{softmax}$ that softly maximizes $Q^*$. Thus, we can also write 
\begin{equation}
	\label{eq:app_optimal_q_soft_on_policy}
	Q^*_\text{soft}(\mathbf{s}_t, \mathbf{a}_t) = r(\mathbf{s}_t, \mathbf{a}_t) + \gamma \mathbb{E}_{\mathbf{s}_{t+1} \sim p, \mathbf{a}_{t+1} \sim \textcolor{red}{\pi_\text{softmax}}} [Q^*_\text{soft}(\mathbf{s}_{t+1}, \mathbf{a}_{t+1})],
\end{equation}
which shows that soft Q-learning is equivalent to Q-learning with softmax instead of greedy max action selection. 

We now show that we can learn the optimal $n$-th subtask Q-function under the arbiter policy in an off-policy, soft Q-learning fashion. In principal, this simply replaces the softmax policy \textcolor{red}{$\pi_\text{softmax}$} with the the arbiter policy $\pi_\succ$ in~\eqref{eq:app_optimal_q_soft_on_policy}. However, the arbiter policy does not softly maximize the $n$-th subtask Q-function like the locally greedy softmax policy in~\eqref{eq:app_v_soft_optimal} would do. Instead, it softly maximizes the sum of subtask Q-functions $Q_\succ(\mathbf{s}, \mathbf{a}) = \sum_{j=1}^{n-1}Q_{\succ j}(\mathbf{s},\mathbf{a}) + Q_n(\mathbf{s},\mathbf{a})$. 
To show that we can approximate the global arbiter oplicy in an off-policy fashion, 
we split the action space of the MDP into two disjoints sub-spaces $\mathcal{A} = \mathcal{A}_\succ(\mathbf{s}) \dot{\cup} \bar{\mathcal{A}}_{\succ}(\mathbf{s})$. 
The first sub-space is the action indifference space $\mathcal{A}_\succ(\mathbf{s})$ that contains all constraint respecting actions in each state, while the second sub-space $\bar{\mathcal{A}}_{\succ}(\mathbf{s})$ contains all constraint violating actions. Thus, we can split the integral in~\eqref{eq:app_v_soft_optimal}
\begin{equation}
	\label{eq:app_v_soft_optimal_split}
	V^*_\text{soft}(\mathbf{s}_t) = \log \int_{\mathcal{A}_\succ(\mathbf{s}_t)} \exp \big( Q^*_\text{soft}(\mathbf{s}_t, \mathbf{a}_\succ)\big) d \mathbf{a}_\succ 
	+ 
	\log \int_{\bar{\mathcal{A}}_{\succ}(\mathbf{s}_t)} \exp \big(Q^*_\text{soft}(\mathbf{s}_t, \bar{\mathbf{a}})\big) d \bar{\mathbf{a}}.
\end{equation}
We now insert our optimal global Q-function and obtain
\begin{equation}
	\label{eq:app_v_soft_optimal_split_global}
	V^*_\succ(\mathbf{s}_t) = \log \int_{\mathcal{A}_\succ(\mathbf{s})} \exp \big( Q^*_\succ(\mathbf{s}_t, \mathbf{a}_\succ)\big) d \mathbf{a}_\succ + \log \int_{\bar{\mathcal{A}}_\succ(\mathbf{s})} \exp \big( Q^*_\succ(\mathbf{s}_t, \bar{\mathbf{a}})\big) d \bar{\mathbf{a}},
\end{equation}
which immediately simplifies to
\begin{equation}
	\label{eq:app_v_soft_optimal_arbiter}
	\begin{split}
		V^*_\succ(\mathbf{s}_t) &= \log \int_{\mathcal{A}_\succ(\mathbf{s})} \exp \big( Q^*_\succ(\mathbf{s}_t, \mathbf{a}_\succ)\big) d \mathbf{a}_\succ \\
		&= \log \int_{\mathcal{A}_\succ(\mathbf{s})} \exp \big( Q^*_{\textcolor{red}{n}}(\mathbf{s}_t, \mathbf{a}_\succ)\big) d \mathbf{a}_\succ,
	\end{split}	
\end{equation}
where the second integral over $\bar{\mathcal{A}}$ disappears because actions in $\bar{\mathcal{A}}$ have zero probability under the arbiter policy while $Q^*_\succ$ becomes $Q^*_n$ because the $n-1$ subtask Q-function all evaluate to zero. 
Thus, the optimal value of a state under the arbiter policy corresponds to the softmax of the untransformed, $n$-th subtask Q-function, in the global indifference space.
Thus for the optimal state-action value function of the arbiter policy we have 
\begin{equation}
	\label{eq:app_q_soft_optimal_arbiter}
	Q^*_\succ(\mathbf{s}_t, \mathbf{a}_t) = r(\mathbf{s}_t, \mathbf{a}_t) + \gamma \mathbb{E}_{\mathbf{s}_{t+1} \sim p} [V^*_\succ(\mathbf{s}_{t+1})],
\end{equation} which is equivalent to
\begin{equation}
	\label{eq:app_optimal_arbiter_q_soft_on_policy}
	Q^*_\succ(\mathbf{s}_t, \mathbf{a}_t) = r(\mathbf{s}_t, \mathbf{a}_t) + \gamma \mathbb{E}_{\mathbf{s}_{t+1} \sim p, \mathbf{a}_{t+1} \sim \textcolor{red}{\pi_\succ}} [Q^*_\succ(\mathbf{s}_{t+1}, \mathbf{a}_{t+1})].
\end{equation}
We can learn $Q^*_\succ$ with the off-policy, soft Bellman backup operator
\begin{equation}
	\mathcal{T}Q(\mathbf{s}, \mathbf{a}) \triangleq r(\mathbf{s}, \mathbf{a}) + \gamma \mathbb{E}_{\mathbf{s}'\sim p} 
	\underbrace{\bigg[
		\log \int_{\mathcal{A}_\succ} \exp \big( Q(\mathbf{s}_t, \mathbf{a}_\succ)\big) d \mathbf{a}_\succ 
		\bigg]}_{V_\succ(\mathbf{s}')}.
\end{equation}
\begin{theorem}[Prioritized soft Q-learning]
	Consider the soft Bellman backup operator $\mathcal{T}$, and an initial mapping $Q^0: \mathcal{S} \times \mathcal{A}_\succ \to \mathbb{R}$ with $|\mathcal{A}_\succ| < \infty$ and define $Q^{l+1} = \mathcal{T}Q^{l}$, then the sequence of $Q^l$ converges to $Q^*_\succ$, the soft Q-value of the optimal arbiter policy $\pi_\succ^*$, as $ l \to \infty$.
\end{theorem}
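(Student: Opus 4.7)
The plan is to establish that $\mathcal{T}$ is a $\gamma$-contraction on the space of bounded functions $Q:\mathcal{S}\times\mathcal{A}_\succ\to\mathbb{R}$ equipped with the supremum norm, and then invoke Banach's fixed point theorem. The unique fixed point will coincide with $Q^*_\succ$ by the derivation culminating in \eqref{eq:app_optimal_arbiter_q_soft_on_policy}.

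First, I would fix two bounded mappings $Q_1, Q_2$ and consider the pointwise difference $(\mathcal{T}Q_1 - \mathcal{T}Q_2)(\mathbf{s},\mathbf{a})$. The reward term cancels, leaving $\gamma\,\mathbb{E}_{\mathbf{s}'\sim p}[V_1(\mathbf{s}') - V_2(\mathbf{s}')]$ where $V_k(\mathbf{s}') = \log\int_{\mathcal{A}_\succ(\mathbf{s}')}\exp(Q_k(\mathbf{s}',\mathbf{a}'))\,d\mathbf{a}'$. The central technical step is the log-sum-exp (soft-max) non-expansiveness inequality
\begin{equation}
\big|V_1(\mathbf{s}') - V_2(\mathbf{s}')\big| \;\le\; \sup_{\mathbf{a}'\in\mathcal{A}_\succ(\mathbf{s}')} \big|Q_1(\mathbf{s}',\mathbf{a}') - Q_2(\mathbf{s}',\mathbf{a}')\big| \;\le\; \|Q_1 - Q_2\|_\infty.
\end{equation}
This follows from the standard argument: writing $\varepsilon = \|Q_1 - Q_2\|_\infty$, one has $Q_1 \le Q_2 + \varepsilon$ pointwise on $\mathcal{A}_\succ$, so $V_1 \le V_2 + \varepsilon$, and symmetrically. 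Taking the supremum over $(\mathbf{s},\mathbf{a})$ and pulling $\gamma$ out yields $\|\mathcal{T}Q_1 - \mathcal{T}Q_2\|_\infty \le \gamma\|Q_1 - Q_2\|_\infty$, establishing the contraction.

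The finiteness assumption $|\mathcal{A}_\succ| < \infty$ together with bounded $r$ ensures that the space of bounded $Q$'s is a complete metric space under $\|\cdot\|_\infty$ and that $\mathcal{T}$ maps it into itself (the integral becomes a finite sum, so boundedness is preserved). Banach's fixed point theorem then yields a unique fixed point $Q^\infty$ with $Q^l \to Q^\infty$ as $l\to\infty$, for any bounded initializer $Q^0$.

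Finally, I would identify $Q^\infty$ with $Q^*_\succ$. Any fixed point of $\mathcal{T}$ satisfies $Q^\infty(\mathbf{s},\mathbf{a}) = r(\mathbf{s},\mathbf{a}) + \gamma\,\mathbb{E}_{\mathbf{s}'\sim p}[\log\int_{\mathcal{A}_\succ(\mathbf{s}')}\exp(Q^\infty(\mathbf{s}',\mathbf{a}'))d\mathbf{a}']$, which is exactly \eqref{eq:app_q_soft_optimal_arbiter} together with \eqref{eq:app_v_soft_optimal_arbiter}, characterizing $Q^*_\succ$ uniquely as the soft Bellman optimality equation for the arbiter policy restricted to the global indifference space. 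Hence $Q^\infty = Q^*_\succ$. The main obstacle is the careful handling of the log-sum-exp contraction on a state-dependent domain $\mathcal{A}_\succ(\mathbf{s}')$; this is handled cleanly because the domain does not depend on the iterate $Q^l$, so the inequality chain above goes through without complications.
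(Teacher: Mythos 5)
Your proposal is correct and follows essentially the same route as the paper's proof in Sec.~\ref{app:soft_q_iteration}: the log-sum-exp non-expansiveness bound $\lvert V_1 - V_2\rvert \le \lVert Q_1 - Q_2\rVert_\infty$ giving a $\gamma$-contraction in the supremum norm, with the fixed point identified as $Q^*_\succ$ via the restriction of the soft Bellman optimality equation to the global indifference space. Your version is marginally more explicit in stating the symmetric direction of the inequality, invoking Banach's fixed point theorem by name, and noting that $\mathcal{A}_\succ(\mathbf{s}')$ is independent of the iterate, but these are presentational refinements rather than a different argument.
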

As we have shown, $V^*_\succ$ is the value function of the global arbiter policy, while $Q^*_\succ = Q^*_n$ is an ordinary soft Q-function (because we are in the indifference space). Thus, the original convergence proof by \citet{haarnoja2017reinforcement} directly applies to $\mathcal{T}$, we repeat it here for completeness with some annotations:
\begin{proof}
	We want to show that $\mathcal{T}$ is a contraction, thus we note that the definition of a contraction on some metric space $M$ with norm $d$ is 
	\begin{equation}
		d(f(x), f(y)) < k d(x, y),
	\end{equation}
	with $ 0\le k \le 1$. Next we define the supremum norm for soft Q-functions as $|| Q_1 - Q_2|| \triangleq \max_{\mathbf{s, a}}| Q_1(\mathbf{s, a}) - Q_2(\mathbf{s, a})|$ and set $\epsilon = ||Q_1 - Q_2||$.
	It follows that
	\begin{equation}
		\begin{split}
			\log \int \exp (Q_1(\mathbf{s}', \mathbf{a}')) d \mathbf{a}' 
			& 
			\le \log \int \exp( Q_2(\mathbf{s}', \mathbf{a}') + \epsilon) d \mathbf{a}' \\
			& = \log \Big ( \exp (\epsilon) \int  \exp(Q_2(\mathbf{s}', \mathbf{a}')) d \mathbf{a}' \Big)\\	
			&= \epsilon + \log \int \exp( Q_2(\mathbf{s}', \mathbf{a}')) d \mathbf{a}'.
		\end{split}
	\end{equation}
	From the last row and using the subtraction property of inequalities, we get
	\begin{equation}
		\log \int \exp (Q_1(\mathbf{s}', \mathbf{a}')) d \mathbf{a}' - \log \int \exp( Q_2(\mathbf{s}', \mathbf{a}')) d \mathbf{a}' \le \epsilon
	\end{equation}
	and immediately see that the soft multi-objective Bellman operator is indeed a contraction:
	\begin{equation}
		||
		\gamma \mathbb{E}_{\mathbf{s}' \sim p} 
		\bigg [
		\log \int_\mathcal{A} \exp \big( Q_1(\mathbf{s}', \mathbf{a}') \big) d \mathbf{a}'
		- 
		\log \int_\mathcal{A} \exp \big( Q_2(\mathbf{s}', \mathbf{a}') \big) d \mathbf{a}' 
		\bigg] 
		|| 
		\le \gamma \epsilon = \gamma ||Q_1 - Q_2||		
	\end{equation}
	Here, in the left side of the inequality, $||\mathcal{T}_\Sigma Q_1 - \mathcal{T}_\Sigma Q_2||_\Sigma$, the $r(\mathbf{s}, \mathbf{a})$ terms cancel out and we can collapse the two expectations $\mathbb{E}_{\mathbf{s}'\sim p}$ into one. The same is true on the right side of the inequality. Thus $\mathcal{T}$ is a contraction with optimal Q-function as fixed point.
\end{proof}
\clearpage
\begin{figure}[h]
	\centering
	\begin{subfigure}[b]{0.7\textwidth}
		\centering
		\includegraphics[width=\textwidth]{imgs/trajectories/Zeroshot_coloredBackground.png}
		\caption{Zero-shot agent for $r_{1\succ2}$. The agent does not collide with the obstacle but is not globally optimal, since it greedily navigates to the top and gets stuck in front of the obstacle.}
	\end{subfigure}
	\begin{subfigure}[b]{0.7\textwidth}
		\centering
		\includegraphics[width=\textwidth]{imgs/trajectories/Online_coloredBackground.png}
		\caption{Adapted agent for $r_{1 \succ 2}$. The agent has adapted to the task priority constraint and navigates around and out of the obstacle (yellow area in the middle).}
	\end{subfigure}
	\begin{subfigure}[b]{\textwidth}
		\centering
		\includegraphics[width=\textwidth]{imgs/trajectories/policy_color_legend.png}
	\end{subfigure}
	\caption{Larger copies of Fig.~\ref{fig:zeroshot_trajectories} and \ref{fig:adapted_trajectories}. The background is colored according to discretized angles of the policy, using the colormap above.}
	\label{fig:larger_trajectory_figure}
\end{figure}
\clearpage
\section{Experiment details} \label{app:experiment_details}
\subsection{2D navigation environment} 
The action space for this environment is in $\mathbb{R}^2$ and corresponds to positional changes in the $x,y$ plane, while the observation space is the current $(x,y)$ coordinate. The agent's position is clipped to the range $[-10, 10]$ for both dimensions. We normalize actions to unit length to bound the action space and penalize non-straight actions. 
The high-priority task $r_1$ corresponds to obstacle avoidance and yields negative rewards in close proximity to the $\cap$-shaped obstacle (see Fig.~\ref{fig:2d_env_(-6,-6)})
\begin{equation}
	r_1(\mathbf{s}) 
	= 
	\begin{cases}
		- \sigma^2 \cdot \exp(-\frac{d^2}{2 \cdot l^2}),
		&
		\text{if}\ d>0 
		\\
		-\beta - \sigma^2 \cdot \exp(-\frac{d^2}{2 \cdot l^2})
		& \text{otherwise},
	\end{cases}
\end{equation}
where $d$ is obstacle distance (inferred from $\mathbf{s}$), $\sigma=1$ and $l=1$ parameterize a squared exponential kernel, and $\beta=10$ is a an additional punishment for colliding with the obstacle.
The auxiliary rewards $r_2$ and $r_3$ respectively yield negative rewards everywhere except in small areas at the top and at the right side of the environment
\begin{equation}
	r_2(\mathbf{s}) 
	= 
	\begin{cases}
		0
		&
		\text{if}\ \mathbf{s}.y > 7
		\\
		-\delta
		& \text{otherwise},
	\end{cases}
\end{equation}
\begin{equation}
	r_3(\mathbf{s}) 
	= 
	\begin{cases}
		0
		&
		\text{if}\ \mathbf{s}.x > 7
		\\
		-\delta
		& \text{otherwise},
	\end{cases},
\end{equation}
where we use $\delta = 5$ in all our experiments.
Thus, the lexicographically prioritized task $r_{1\succ2}$ corresponds to reaching the top without colliding with the obstacle. 
Due to the low dimensionality of the environment we directly exploit the proportionality relationship in \eqref{eq:arbiter-policy} and rely on importance sampling instead of learning a policy network for the 2D navigation experiments. 
\subsection{Franka emika panda environment} 
This environment features a simulated Franka Emika Panda arm, shown in Fig.~\ref{fig:franka}, and based on the Gymnasium Robotics package \citep{gymnasium_robotics2023github}. The action space $\mathcal{A} \in \mathbb{R}^9$ corresponds to joint positions while the state space $\mathcal{S} \in \mathbb{R}^{18}$ contains all joint positions and velocities. In this environment, the high-priority task $r_1$ corresponds to avoiding a fixed volume (red block) in the robots workspace and returns $-10$ when any number of robot joints are inside the volume. The low-priority task $r_2$ corresponds to reaching a fixed end-effector position (green sphere) and yields rewards proportional to the negative distance between the end-effector and the target plus a bonus of 100 for reaching the target. The prioritized task $r_{1\succ2}$ thus corresponds to reaching the target end-effector location while keeping all joints outside of the avoidance volume.

\subsection{Baseline algorithms}\label{app:baseline_details}
Here we provide additional details on the algorithms used in the baseline comparison in Sec.~\ref{sec:baseline_exp}. Since, to the best of our knowledge, PSQD is the first method that solves lexicographic MORL problems with \textit{continuous} action spaces, we can not rely on existing lexicographic RL algorithms for discrete problems as baselines. 
Instead,
we implement lexicographic task priority constraints by simplistic means in the following, state-of-the-art DRL algorithms that support continuous action spaces.  

\paragraph{Prioritized Soft Actor-Critic}
Soft Actor-Critic (SAC)~\citep{haarnoja2018soft, haarnoja2018sac_a_a} learns a univariate Gaussian actor by minimizing the Kullback-Leibler divergence between the policy and the normalized, soft Q-function. This policy improvement step is given by
\begin{equation}
	\pi_\text{new} = 
	\underset{\pi' \in \Pi}{\argmin} 
	\text{D}_\text{KL}
	\bigg( 
	\pi'(\cdot, \mathbf{s}_t) 
	\bigg|\bigg|
	\frac
	{\exp
	\big(
	\frac{1}{\alpha} Q ^{\pi_\text{old}}(\mathbf{s}_t, \cdot)
	\big)}
	{Z^{\pi_\text{old}}(\mathbf{s}_t)}
	\bigg),
	\label{eq:sac_DKL_loss}
\end{equation}
where $\Pi$ is a set of tractable policies (e.g. parameterized Gaussians), and $Z^{\pi_\text{old}}(\mathbf{s}_t)$ is the normalizing constant for the unnormalized density given by $Q ^{\pi_\text{old}}(\mathbf{s}_t, \cdot)$.
To make for a lexicographic, i.e. a task priority-constrained version of SAC, a simple approach is to add a regularization term to \eqref{eq:sac_DKL_loss} to penalize lower-priority subtask policies from diverging from higher-priority subtask policies. Based on this intuition, for our prioritized SAC baseline, we augment the objective in \eqref{eq:sac_DKL_loss} by adding another KL term that measures the divergence between the current subtask policy and the higher-priority subtask policy
\begin{equation}
	\pi_\text{new} = 
	\underset{\pi' \in \Pi}{\argmin} 
	(1 - \beta) 
	\text{D}_\text{KL}
	\bigg( 
	\pi'(\cdot, \mathbf{s}_t) 
	\bigg|\bigg|
	\frac
	{\exp
		\big(
		\frac{1}{\alpha} Q ^{\pi_\text{old}}(\mathbf{s}_t, \cdot)
		\big)}
	{Z^{\pi_\text{old}}(\mathbf{s}_t)}
	\bigg)
	+ 
	\beta
	\text{D}_\text{KL}
	\big( 
	\pi'(\cdot, \mathbf{s}_t) 
	\big|\big|
	\pi_\text{pre}(\cdot, \mathbf{s}_t)
	\big),
	\label{eq:p-sac-DKL-loss}
\end{equation}
where $\beta$ is a weight for the convex combination of the two KL terms and $\pi_\text{pre}$ is the pre-trained, higher-priority policy.
In practice, the parameters of the univariate Gaussian actor for our prioritized SAC are given by a DNN parameterized by $\phi$, that uses the reparametrization trick to minimize the loss
\begin{equation}
	\begin{split}
	J_\pi(\phi) 
	= 
	\mathbb{E}_{\mathbf{s}_t \sim \mathcal{D}} 
	\big[
	\mathbb{E}_{\mathbf{a}_t \sim \pi_\phi}
	[
	(1 - \beta)
	(
	\alpha \log (\pi_\phi(\mathbf{a}_t \mid \mathbf{s}_t))
	-
	Q(\mathbf{s}_t, \mathbf{a}_t)
	) \\
	+ \beta
	(
	\log \pi_\phi(\mathbf{a}_t \mid \mathbf{s}_t) - \log \pi_\text{pre}(\mathbf{a}_t \mid \mathbf{s}_t)
	)
	]
	\big],
\end{split}
\end{equation}
which is the original SAC objective with the additional KL regularization.

\paragraph{Prioritized Proximal Policy Optimization}
To implement a prioritized version of Proximal Policy Optimization (PPO)~\cite{schulman2017ppo}, we apply the same intuition as for the prioritized SAC version, i.e. we regularize the policy to be similar to the pre-trained, higher-priority policy. 
PPO maximizes a clipped version of the surrogate objective
\begin{equation}
	J_\pi(\phi) = \mathbb{E}_t 
	\big[
	\frac
	{\pi_\phi(\mathbf{a}_t \mid \mathbf{s}_t)}
	{\pi_{\phi_\text{old}}(\mathbf{a}_t \mid \mathbf{s}_t)}
	 \hat{A}_t
 	\big],
\end{equation}
where $\hat{A}_t$ is an estimator of the advantage function for task return at timestep $t$ and $\phi_\text{old}$ is the parameter vector of the policy before doing an update.
To encourage learning a policy that respects task priority constraints, we change the maximization to
\begin{equation}
	J_\pi(\phi) = \mathbb{E}_t 
	\bigg[
	\frac
	{\pi_\phi(\mathbf{a}_t \mid \mathbf{s}_t)}
	{\pi_{\phi_\text{old}}(\mathbf{a}_t \mid \mathbf{s}_t)}
	\bigg(
	(1 - \beta) \hat{A}_t 
	-
	\beta
	\text{D}_\text{KL}
	\big( 
	\pi'(\cdot, \mathbf{s}_t) 
	\big|\big|
	\pi_\text{pre}(\cdot, \mathbf{s}_t)
	\big)
	\bigg)
	\bigg],
\end{equation}
such that the update now increases the probability of actions that have positive advantage and similar probability under the current and pre-trained, higher-priority policy.

\paragraph{PSQD Ablation}
For the ablation to PSQD, we also pre-train the higher-priority Q-function for the obstacle avoidance task and rely on a the $c_1(\mathbf{s}, \mathbf{a})$ priority constraint indicator functions. However, instead of using them to project the policy into the indifference space, as PSQD does, we use them as an indicator for a punishment of -100 that is subtracted from the lower-priority task reward whenever the agent selects a constraint-violating action.

\paragraph{Soft Q-Decomposition}
With ``Soft Q-Decomposition'' we refer to a continuous version of \citet{russell2003q}'s discrete Q-Decomposition algorithm that we describe in Sec.~\ref{sec:q-decomposition}. This algorithm concurrently learns the subtask Q-functions $Q_i$ for a MORL task with vectorized reward function, such that the overall Q-function, $Q_\Sigma$, can be recovered from the sum of subtask Q-functions.
A key property of this algorithm is that the constituent Q-functions are learned on-policy for a central arbiter agent that maximizes the sum of rewards. 
To adapt the discrete Q-Decomposition algorithm to continuous action spaces, we combine it with the Soft Q-Learning (SQL)~\citep{haarnoja2017reinforcement} algorithm. However, SQL is a soft version of Q-learning~\citep{watkins1992q} and thereby off-policy, thus we can not directly use SQL to learn the constituent Q-functions.
This is because the SQL update to the Q-function
\begin{equation}
	\label{eq:q_approx_objective_app}
	J_Q(\theta) = 
	\mathbb{E}_{\mathbf{s}_t, \mathbf{a}_t \sim \mathcal{D}} 
	\Bigg[ 
	\frac{1}{2} 
	\Big ( 
	Q^\theta_{n}(\mathbf{s}_t, \mathbf{a}_t) 
	- r_n(\mathbf{s}_t, \mathbf{a}_t) 
	+ \gamma  
	\mathbb{E}_{\mathbf{s}_{t+1} \sim p} 
	\big[ 
	V_n^{\bar{\theta}}(\mathbf{s}_{t+1})
	\big] 
	\Big)^2 
	\Bigg ]
\end{equation}
\begin{equation}
	\label{eq:soft_value_function_app}
	V_n^{\bar{\theta}}(\mathbf{s}_t) 
	= 
	\log
	\int_\mathcal{A}
	\exp
	(
	Q^{\bar{\theta}}_n(\mathbf{s}_t, \mathbf{a}^\prime) 
	)
	\,
	d \mathbf{a}^\prime
	,
\end{equation}
calculates the value of the next state by integrating over the entire action space, thereby finding the softmax of the next state value. In practice, this integral is not computed exactly, but instead approximated with actions sampled from some proposal distribution $q_\mathbf{a'}$
\begin{equation}
	V_n^{\bar{\theta}}(\mathbf{s}_t) 
	= 
	\log \mathbb{E}_{q_\mathbf{a'}}
	\biggl[
	\frac
	{\exp \frac{1}{\alpha} Q^{\bar{\theta}}(\mathbf{s}_t, \mathbf{a}_t)}
	{q_\mathbf{a'}(\mathbf{a'})}
	\biggr],
	\label{eq:v_soft_approx_app}
\end{equation}
which typically is the current policy. 
Thus, to fix the illusion of control in SQL update to the subtask Q-functions, it suffices to use the arbiter policy instead of the greedy subtask policy for $q_\mathbf{a'}$. This way, the approximate soft value function in \eqref{eq:v_soft_approx_app} assigns state values that correspond to the real softmax (``LogSumExp'' expression) of Q-values for actions sampled from the arbiter policy.

\section{Additional results}
Here we provide additional qualitative results from the 2D navigation environment that aim to make for additional intuition on our method and how it solves lexicographic MORL problems.
\subsection{Obstacle indifference space}

To provide further intuition for how our composed agent implements constraints, we provide additional examples for $Q_\succ$. In Fig.~\ref{fig:more_obstacle_indifference_space}, we plot $Q_\succ = \ln(c_1) + \hat{Q}_2^{\pi_\succ}$ at multiple locations in the environment. The black areas indicate $\bar{\mathcal{A}}_{\succ}$, the areas forbidden by the higher-priority obstacle avoidance task, which remain fixed during subsequent adaptation steps.
The non-black areas correspond to $\hat{Q}_2^{\pi_\succ}$ in $\mathcal{A}_\succ$, where the lower-priority tasks can learn and express its preferences.
\begin{figure}[h]
	\centering
	\begin{subfigure}[b]{0.32\textwidth}
		\centering
		\includegraphics[width=\textwidth]{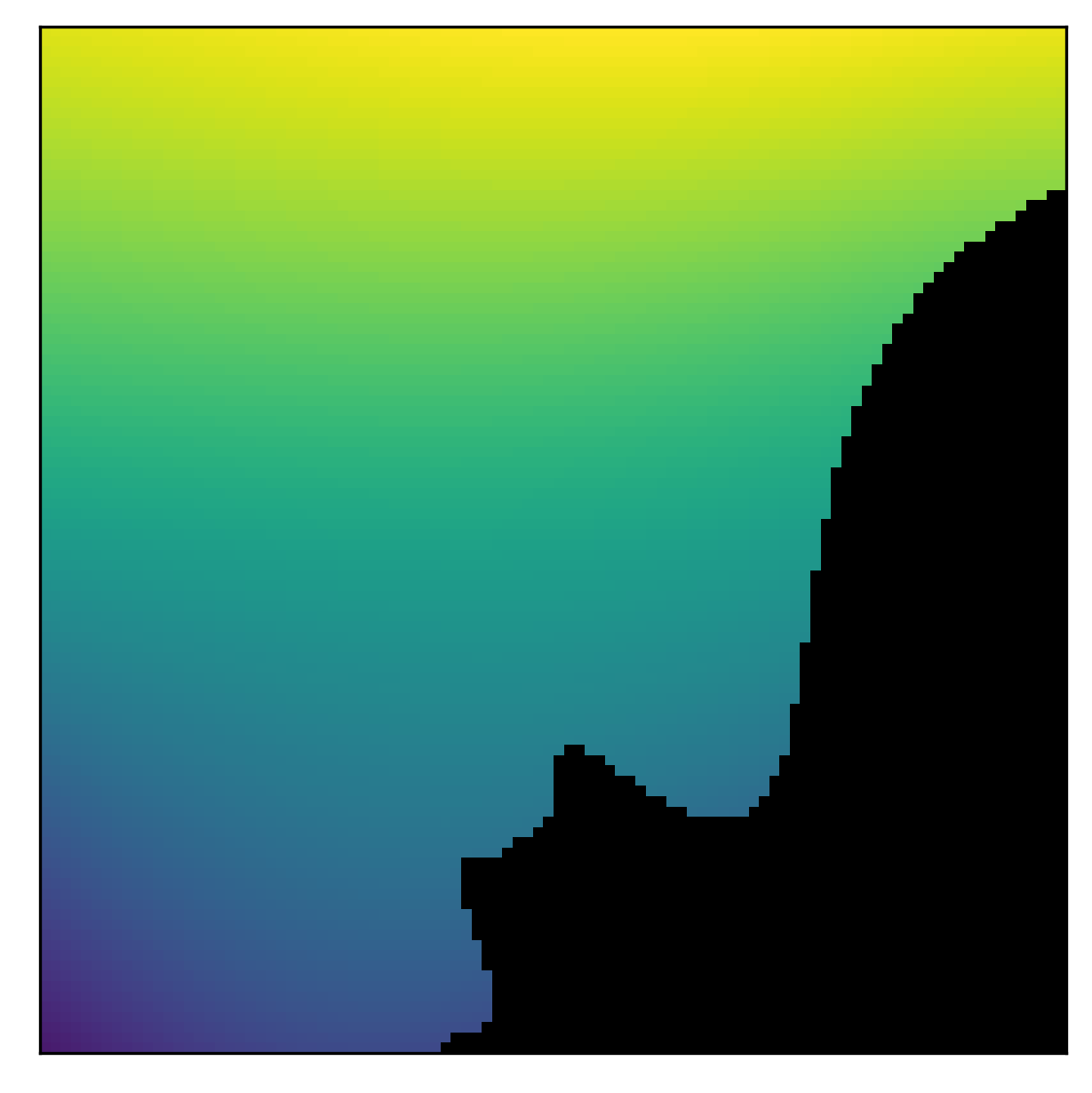}
		\caption{Marker $0$.}
	\end{subfigure}
	\begin{subfigure}[b]{0.32\textwidth}
		\centering
		\includegraphics[width=\textwidth]{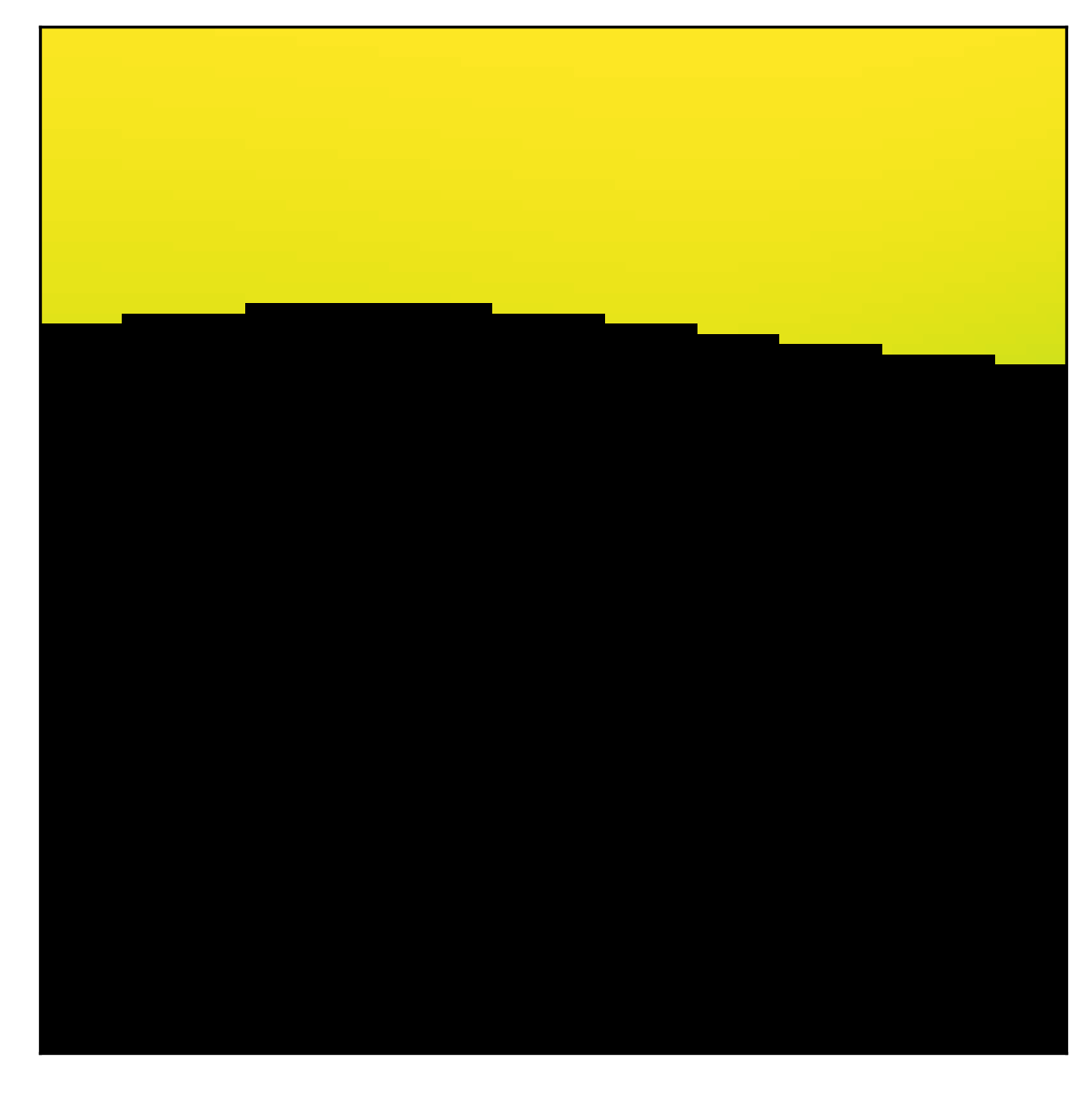}
		\caption{Marker $1$.}
	\end{subfigure}
	\begin{subfigure}[b]{0.32\textwidth}
		\centering
		\includegraphics[width=\textwidth]{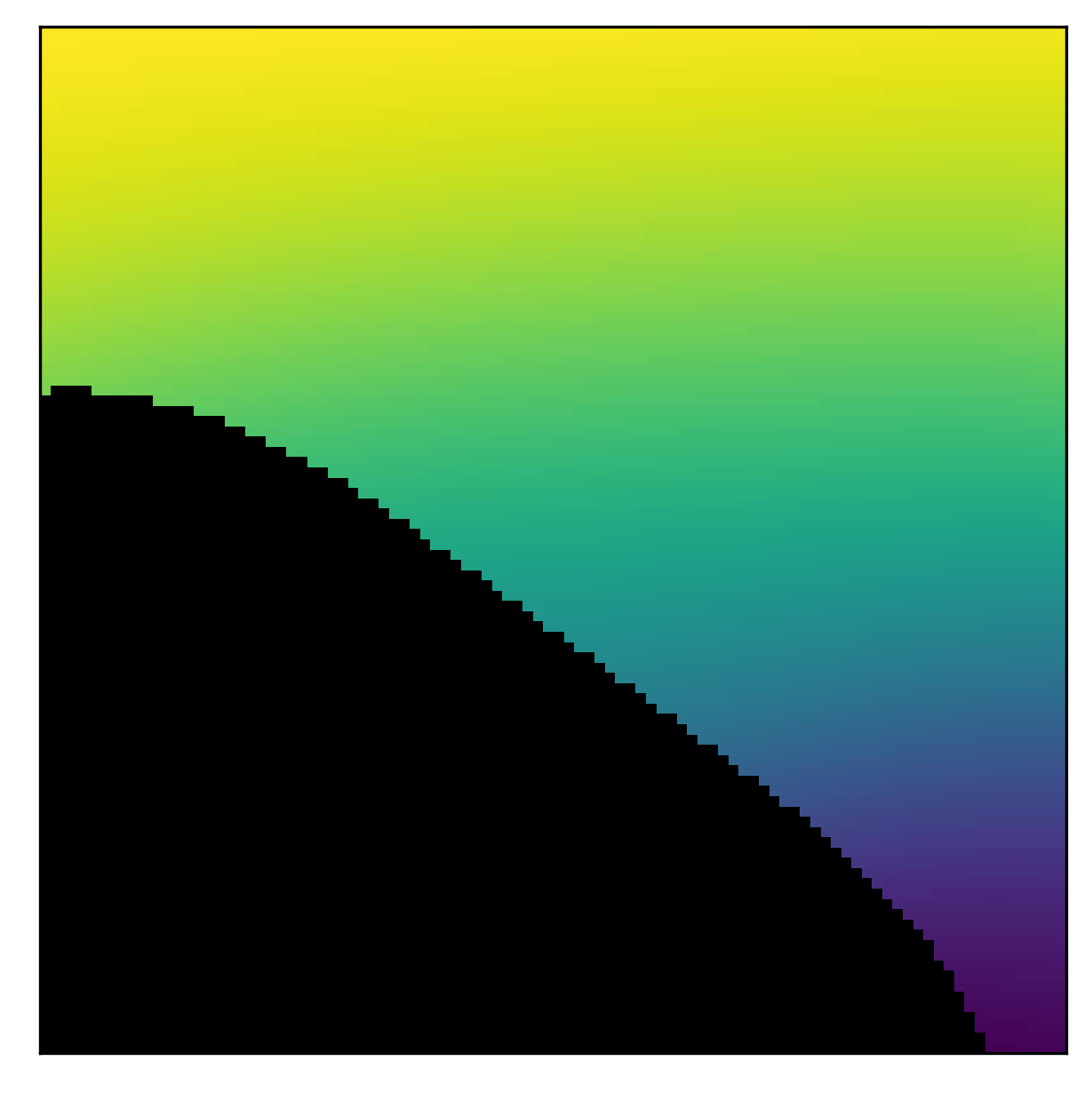}
		\caption{Marker $2$.}
	\end{subfigure}
	\begin{subfigure}[b]{0.32\textwidth}
		\centering
		\includegraphics[width=\textwidth]{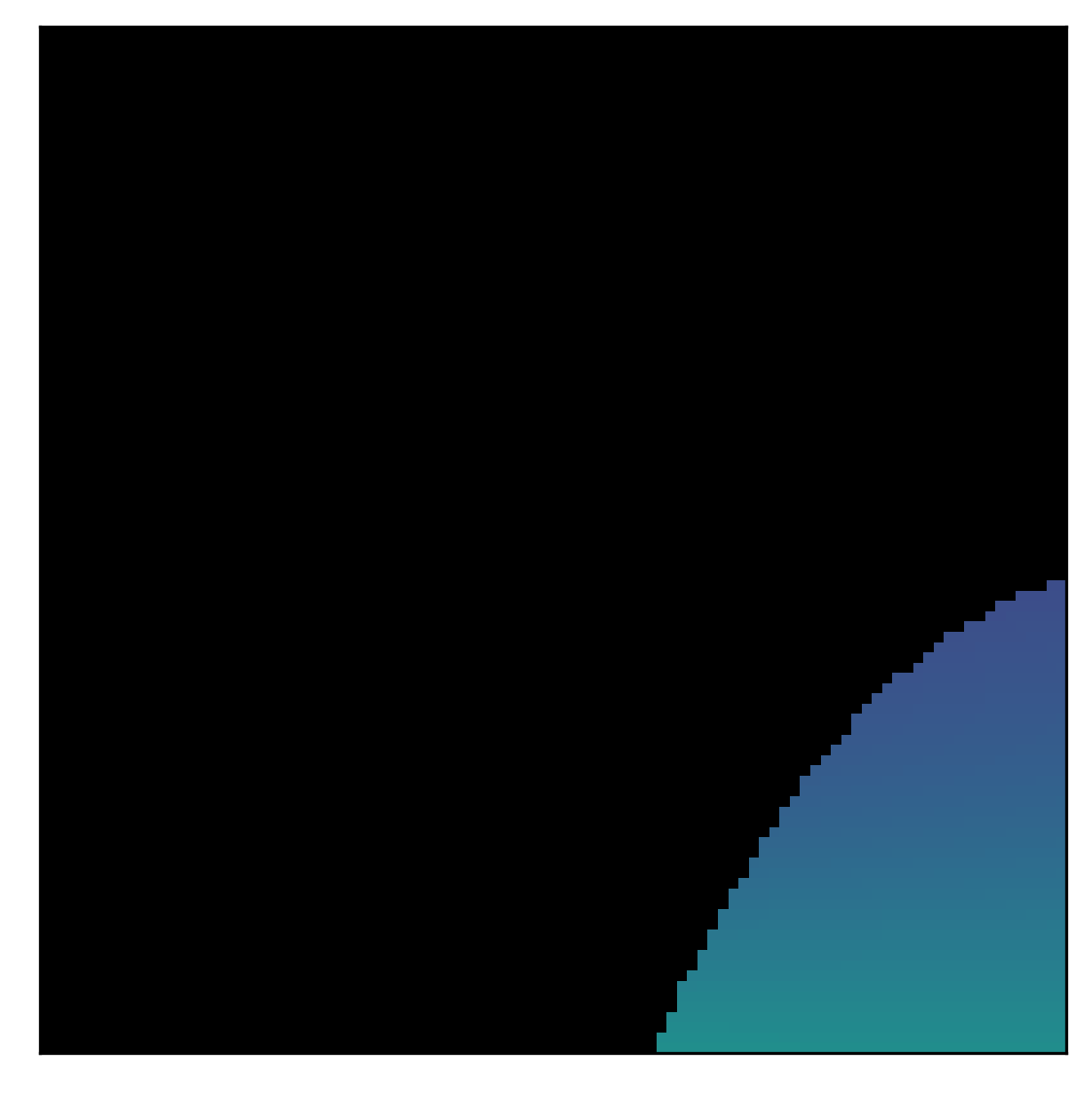}
		\caption{Marker $3$.}
	\end{subfigure}
	\begin{subfigure}[b]{0.32\textwidth}
		\centering
		\includegraphics[width=\textwidth]{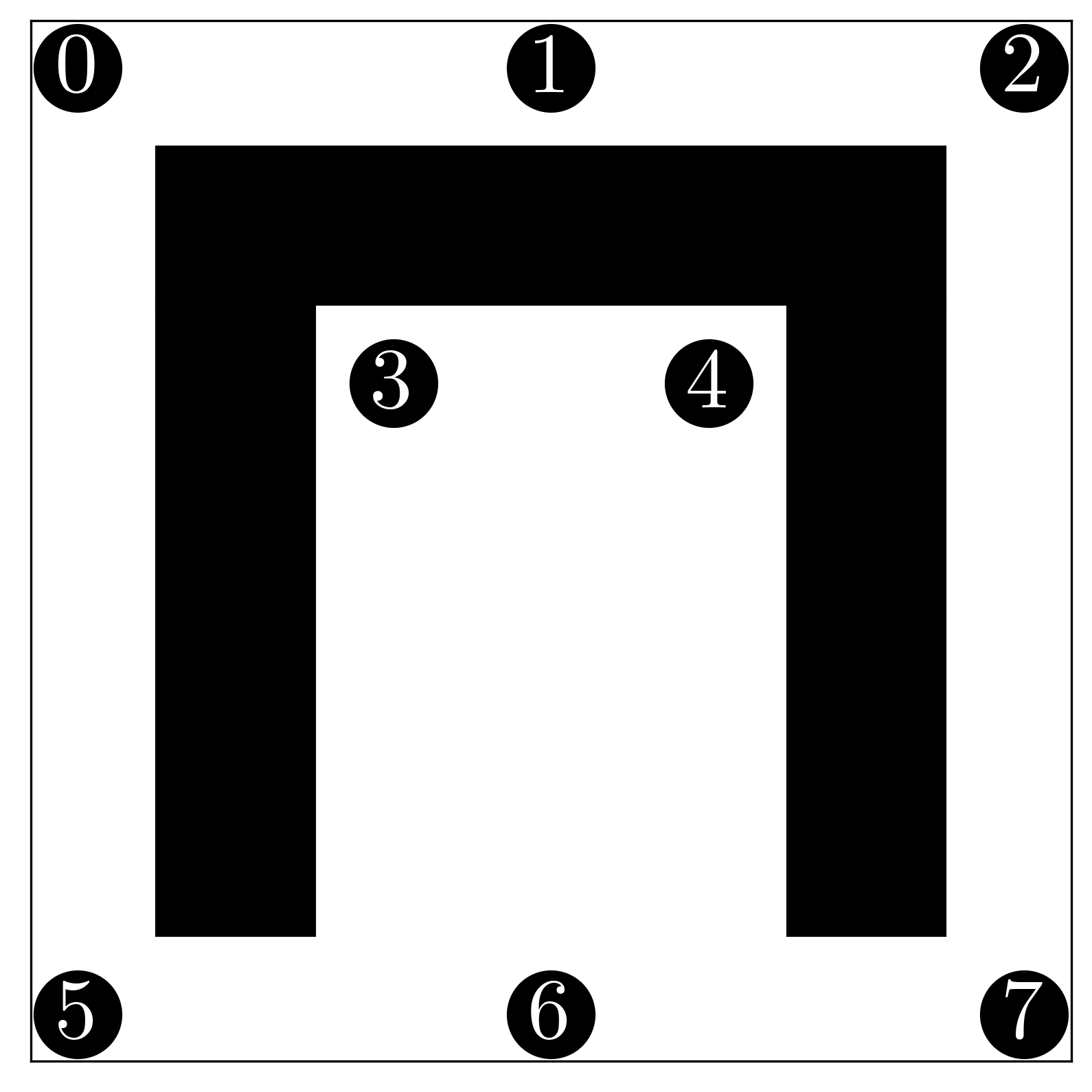}
		\caption{Environment marked positions.}
		\label{fig:env_positions}
	\end{subfigure}
	\begin{subfigure}[b]{0.32\textwidth}
		\centering
		\includegraphics[width=\textwidth]{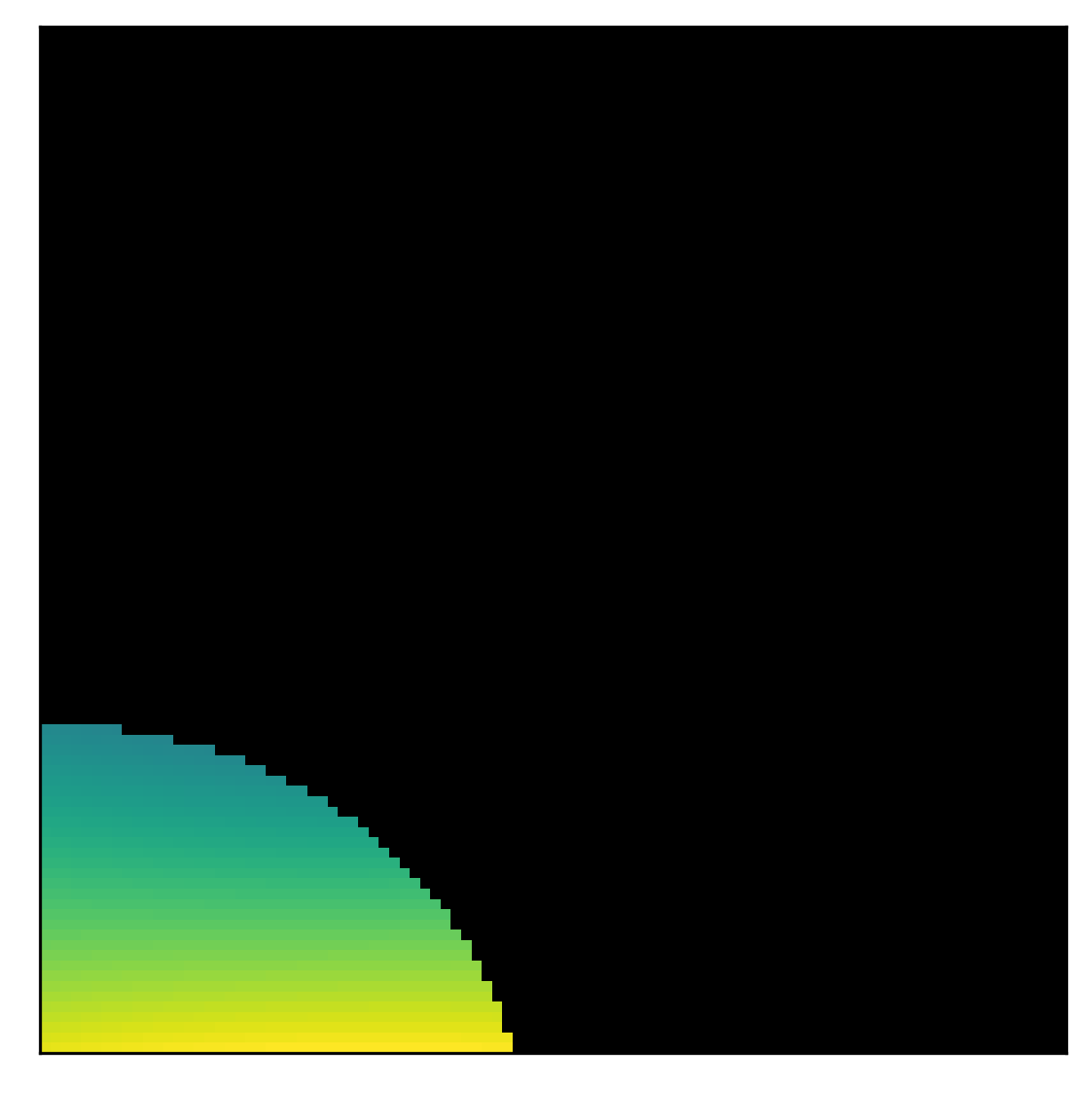}
		\caption{Marker $4$.}
	\end{subfigure}
	\begin{subfigure}[b]{0.32\textwidth}
		\centering
		\includegraphics[width=\textwidth]{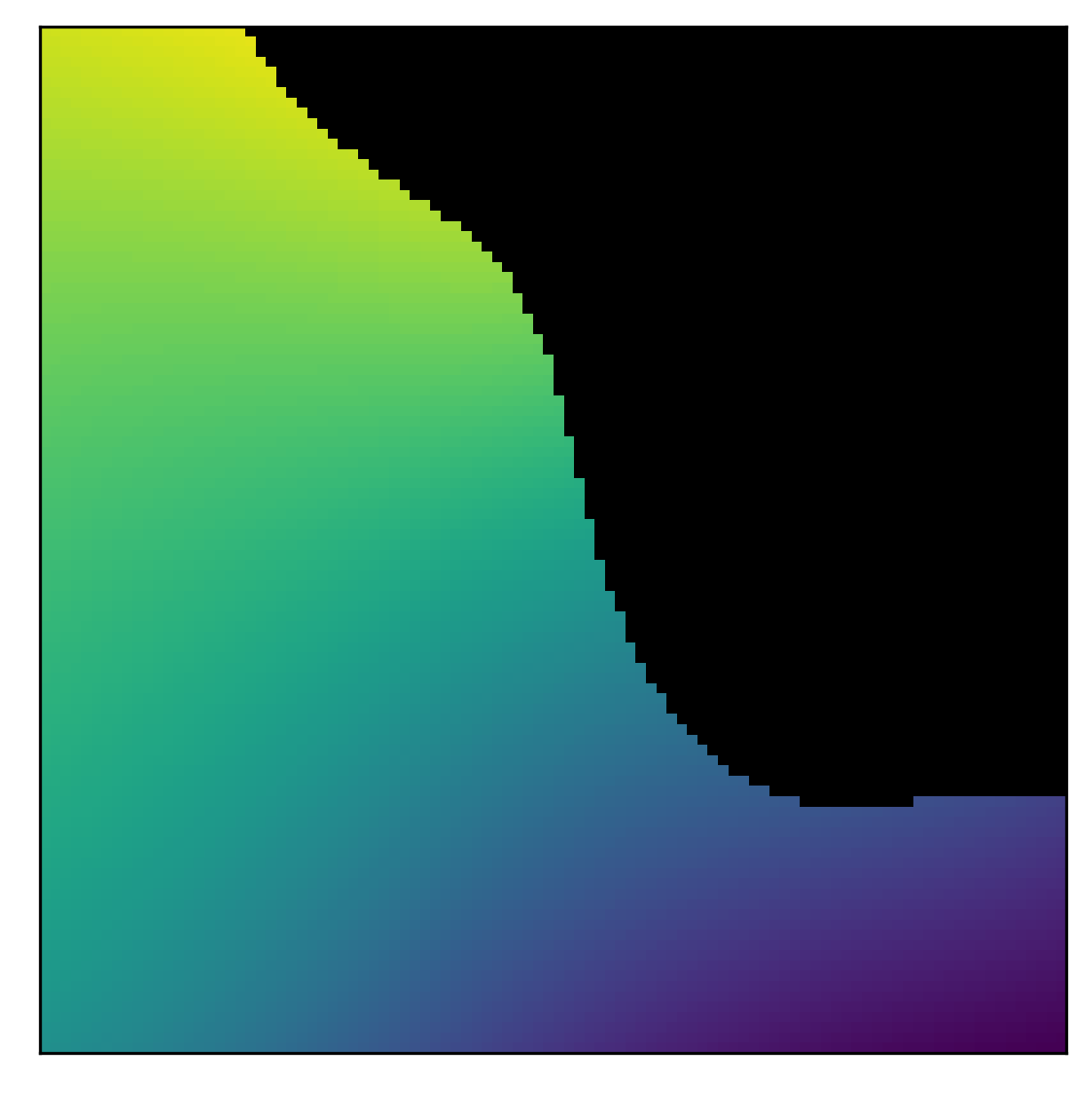}
		\caption{Marker $5$.}
	\end{subfigure}
	\begin{subfigure}[b]{0.32\textwidth}
		\centering
		\includegraphics[width=\textwidth]{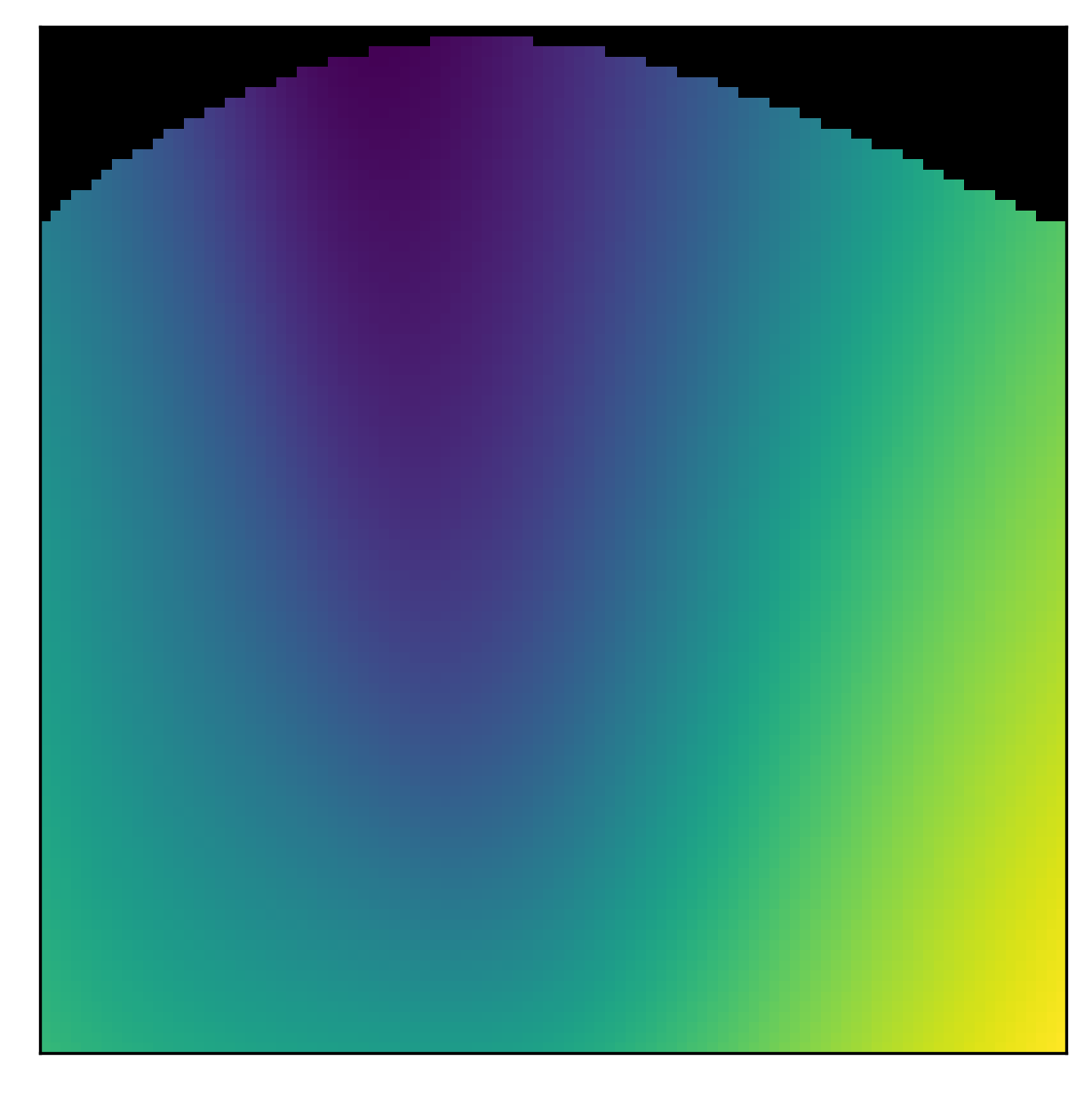}
		\caption{Marker $6$.}
	\end{subfigure}
	\begin{subfigure}[b]{0.32\textwidth}
		\centering
		\includegraphics[width=\textwidth]{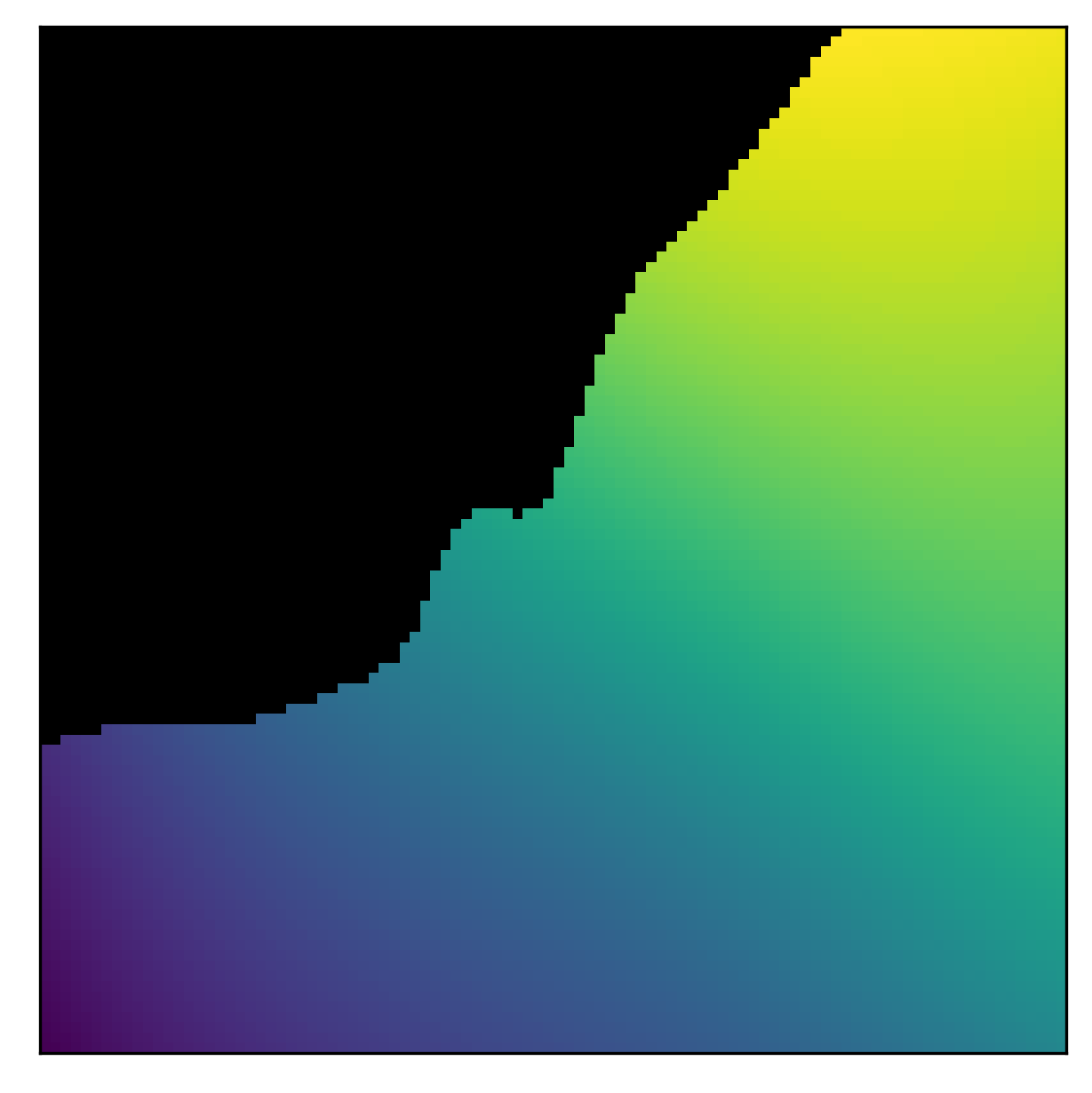}
		\caption{Marker $7$.}
	\end{subfigure}
	\caption{Analysis of the global Q-function in action space: $Q_\succ = \ln(c_1) + \hat{Q}_2^{\pi_\succ}$ at different positions in the 2D environment. The $x$ and $y$ axes of the images correspond to the $x$ and $y$ components of the action space. Brighter colored (yellow) hues indicate high probability, while darker colored (blue) hues indicate low probability under $\pi_\succ$. The black mask shows $\bar{\mathcal{A}}_{\succ}$. As can be seen, the task priority constraint forbids actions that lead to obstacle collisions, since those actions are not within the $\varepsilon_1$ value range of the optimal action for the high-priority task 1.}
	\label{fig:more_obstacle_indifference_space}
\end{figure}
\subsection{Multiple constraints}
\label{app:multiple_constraints}
We illustrate tasks with multiple lexicographic priority constraints in the 2D environment. For this, we consider the prioritized tasks $r_{1\succ2\succ3}$ and $r_{1\succ3\succ2}$. 
Both of these lexicographic MORL tasks assign the highest priority to the obstacle avoidance subtask $r_1$, but prioritize $r_2$ (moving to the top) and $r_3$ (moving to the right) differently. 
The prioritized task $r_{1\succ2\succ3}$ assigns higher priority to reaching the top than reaching the side, while $r_{1\succ3\succ2}$ assigns higher priority to reaching the side than reaching the top.
This is a concrete example of how task priorities can be used to model different, complex tasks. 
The composed, adapted agents corresponding to these lexicographic tasks are visualized in Fig.~\ref{fig:1>2>3} and \ref{fig:1>3>2}. Each white arrow corresponds to one action taken by the agent.

As can be seen, the behavior of the agents for the different prioritized tasks differs noticeably, which is expected for different task prioritizations. The learned agent for $r_{1\succ2\succ3}$ first drives to the top, then to the side, while the agent for $r_{1\succ3\succ2}$ first drives to the side, then to the top.
We can again explain this behavior by inspecting the components of these agents in Fig.~\ref{fig:expl_1>2>3} and \ref{fig:1>3>2}. For this, we place the agent at marker 5 in Fig.~\ref{fig:env_positions} and plot the constraint indicator functions for the higher priority tasks as well as the Q-function for the low priority task, for the entire action space. Both agents are constrained by $\ln(c_1)$ which induces $\bar{\mathcal{A}}_{\succ1}$ and forbids actions that bring the agent into close proximity of the obstacle.
$\ln(c_2)$ in Fig.~\ref{fig:expl_1>2>3} induces $\bar{\mathcal{A}}_{\succ2}$, which constraints $\pi_\succ$ to actions that move up. $\ln(c_3)$ in Fig.~\ref{fig:expl_1>3>2}, on the other hand, restricts the $\pi_\succ$ to actions that move to the side.

\begin{figure}
	\centering
	\begin{subfigure}[b]{\textwidth}
		\centering
		\includegraphics[width=0.96\textwidth]{imgs/trajectories/policy_color_legend.png}
	\end{subfigure}
	\begin{subfigure}[b]{0.49\textwidth}
		\centering
		\includegraphics[width=\textwidth]{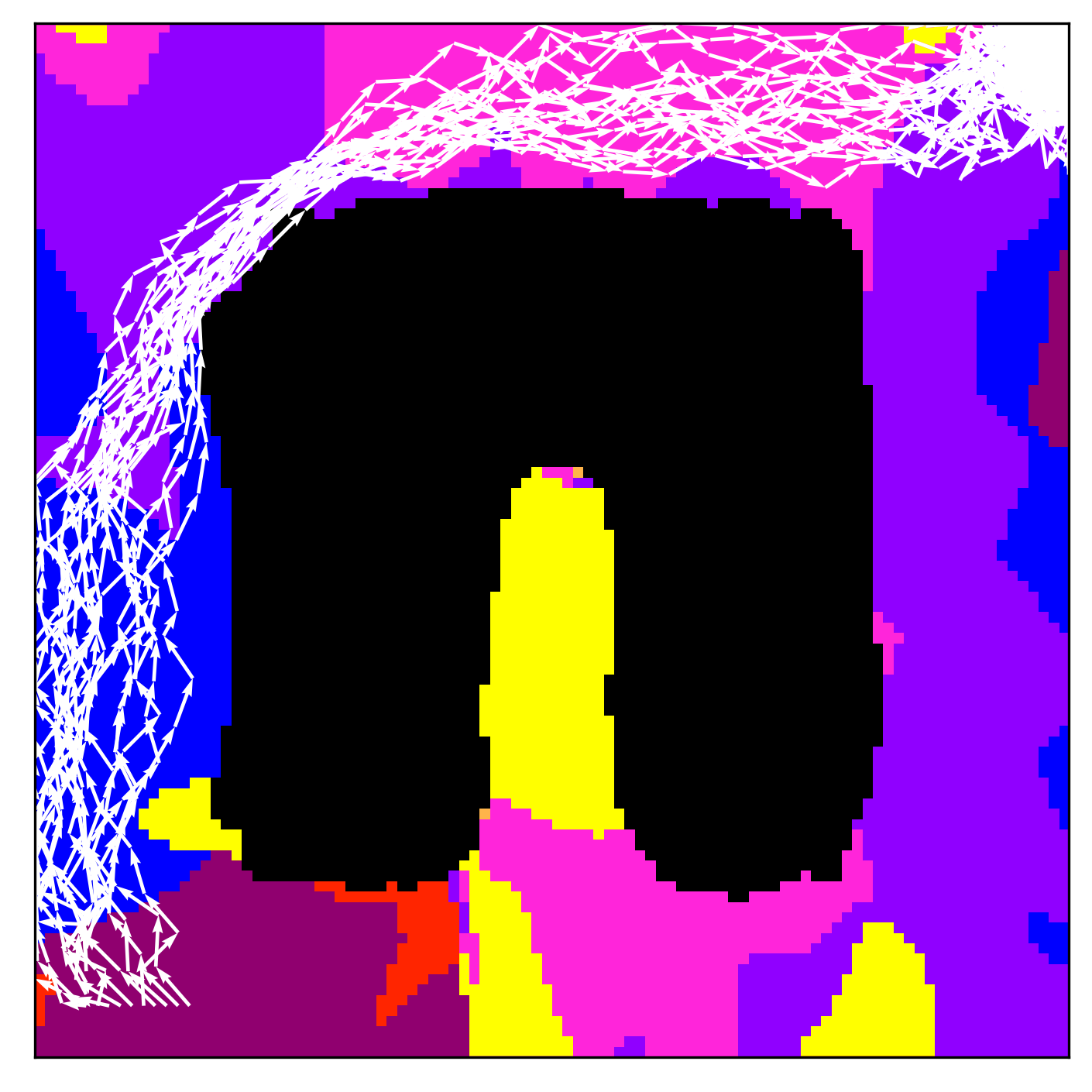}
		\caption{Trajectories and policy for $r_{1\succ2\succ3}$.}
		\label{fig:1>2>3}
	\end{subfigure}
	\begin{subfigure}[b]{0.49\textwidth}
		\centering
		\includegraphics[width=\textwidth]{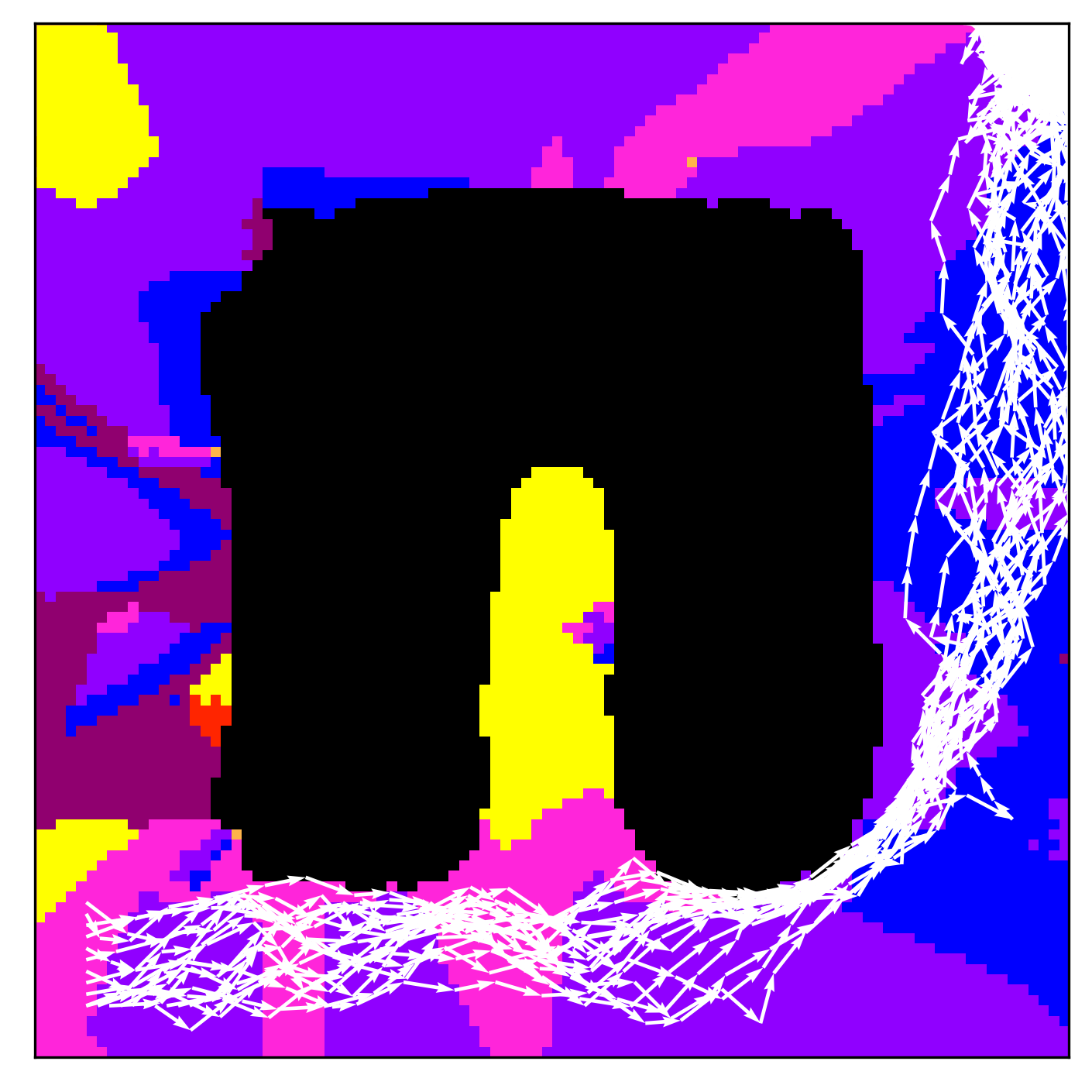}
		\caption{Trajectories and policy for $r_{1\succ3\succ2}$.}
		\label{fig:1>3>2}
	\end{subfigure}
	\begin{subfigure}[b]{0.49\textwidth}
		\centering
		\begin{tikzpicture}
			\node[inner sep=0pt] at (0,0)
			{\includegraphics[width=\textwidth]{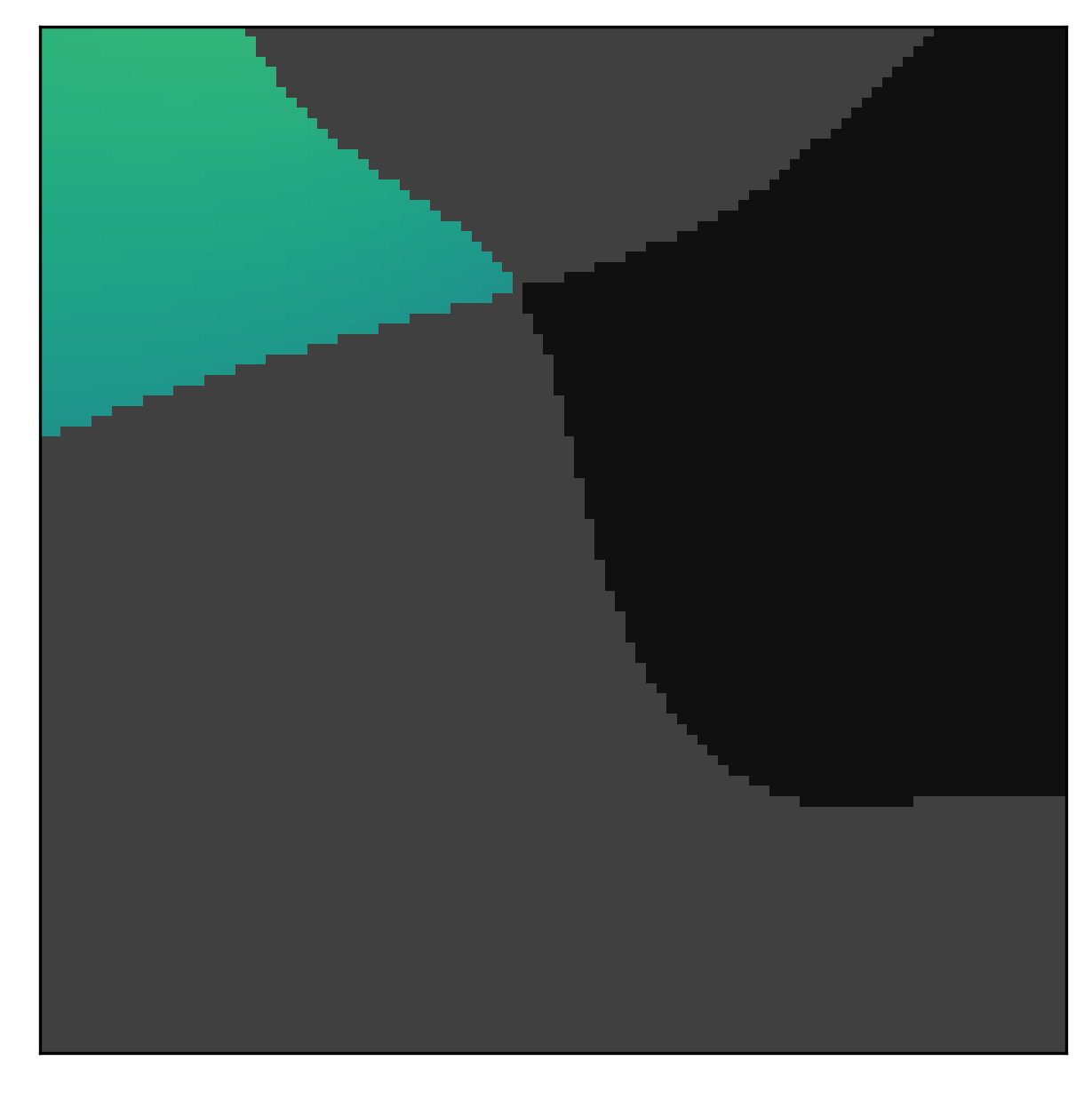}};
			\node at (0.5, 2.5) {\textcolor{white}{$\bar{\mathcal{A}}_{\succ1}$}};
			\node at (-1, -1.5) {\textcolor{white}{$\bar{\mathcal{A}}_{\succ2}$}};
			\node at (2, 0.5) {\textcolor{white}{$\bar{\mathcal{A}}_{\succ1}\cap\bar{\mathcal{A}}_{\succ2}$}};
			\node at (-2, 2) {\textcolor{black}{$Q_3^{\pi_\succ}$}};
		\end{tikzpicture}
		\caption{Indifference space and global policy for $r_{1\succ2\succ3}$.}
		\label{fig:expl_1>2>3}
	\end{subfigure}
	\begin{subfigure}[b]{0.49\textwidth}
		\centering
		\begin{tikzpicture}
			\node[inner sep=0pt] at (0,0)
			{\includegraphics[width=\textwidth]{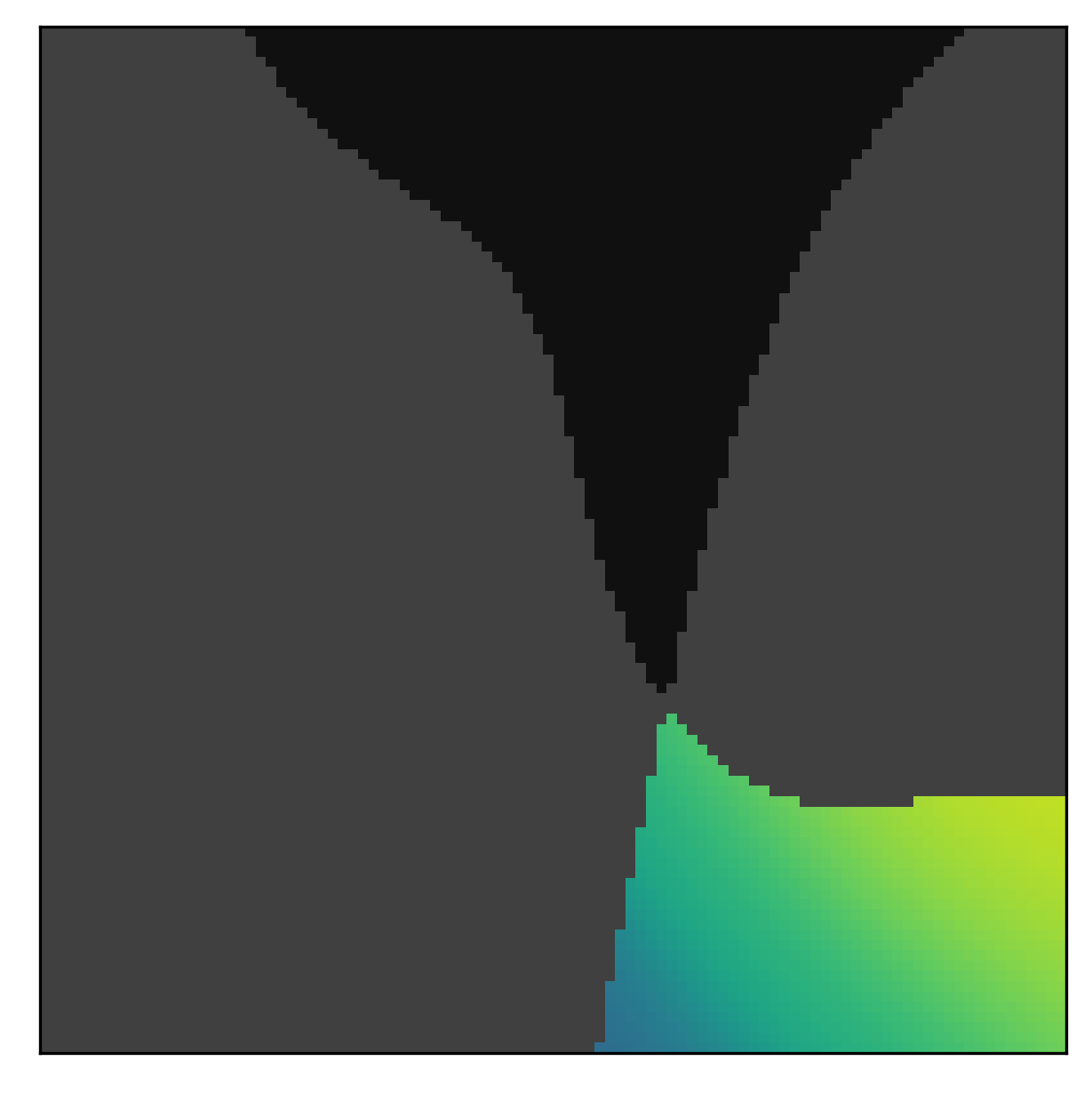}};
			\node at (2.5, 1) {\textcolor{white}{$\bar{\mathcal{A}}_{\succ1}$}};
			\node at (-1.5, 0) {\textcolor{white}{$\bar{\mathcal{A}}_{\succ3}$}};
			\node at (0.5, 2.5) {\textcolor{white}{$\bar{\mathcal{A}}_{\succ1}\cap\bar{\mathcal{A}}_{\succ3}$}};
			\node at (2, -2.5) {\textcolor{black}{$Q_2^{\pi_\succ}$}};
		\end{tikzpicture}
		\caption{Indifference space and global policy for $r_{1\succ3\succ2}$.}
		\label{fig:expl_1>3>2}
	\end{subfigure}
	\caption{Trajectories, policy, action indifference space, and value function for global agents.
		For the bottom row images, the agent was placed at marker 5 in Fig.~\ref{fig:env_positions} (i.e., in the bottom left, close to the starting points of the trajectories). While both tasks are constrained by $\bar{\mathcal{A}_{\succ 1}}$, which prevents the agent from getting too close to the obstacle, the differing prioritization of $r_2$ and $r_3$ induces different global indifference spaces through $\bar{\mathcal{A}_{\succ 2}}$ and $\bar{\mathcal{A}_{\succ 3}}$. The respective lower-priority task expresses its preference in the global indifference space (colored area). As can be seen, the policy in (c) is constrained to actions that move up, due to the optimality constraint on $Q_2$, while in (d), the optimality constraint on $Q_3$ constraints the policy to actions that move right.}
	\label{fig:three-components}
\end{figure}
\clearpage

\subsection{$\varepsilon$ Ablation}
To illustrate how $\varepsilon$ thresholds affect the learned behavior and performance, we perform an ablation study using different values for $\varepsilon_1$.
We again use the 2D navigation environment from App.~\ref{app:experiment_details} and assign high priority to the obstacle avoidance tasks and low priority to reaching the top goal.
We generate values for $\varepsilon_1$ that are roughly log-spaced between 0 and 100.
For this experiment, we pre-train only the high-priority obstacle avoidance task and learn the low-priority goal-navigation agent from scratch.
We report the same metrics as for our baseline comparison in Sec.~\ref{sec:baseline_exp}, meaning during learning of the low-priority tasks $r_2$, we log the episode return of both subtasks, to show the trade-off between the two objectives, depending on the threshold value.

The average results from five different random seeds are shown in Fig.~\ref{fig:2d-epsilon-ablation}, with high-priority task returns shown on the left, and low-priority task returns shown on the right.
Performance on the low-priority subtask roughly falls into three groups, depending on $\varepsilon_1$-threshold. 
With very small values for $\varepsilon_1$, i.e. $\varepsilon_1 = 0.0, 0.1$, or $0.25$, the algorithm can not improve the low-priority subtasks. 
This is expected because the small thresholds induce such small indifference spaces that none of the available actions allow the agent to improve the performance of the low-priority subtask.
At the same time, these small $\varepsilon_1$ thresholds result in optimal performance for the high-priority task, since they allow almost no divergence from the optimal behavior.
Independently of the value for $\varepsilon_1$, as can be seen in the left panel of Fig.~\ref{fig:2d-epsilon-ablation}, the episode returns for $r_1$ are roughly constant, since $Q_1^*$ does not change during learning of $r_2$.

On the other end of the range, i.e. with large $\varepsilon_1$ values of $3.98, 10$, or $100$, the algorithm achieves optimal performance for the low-priority task.
This is because the large $\varepsilon_1$ values induce indifferences space that are not restrictive enough to prevent the agent from hitting and moving through the obstacle, which is reflected by the high costs that these agents obtain under the high-priority task.

Most interesting are the remaining $\varepsilon_1$ values, namely $\varepsilon_1 = 0.63$ and $\varepsilon_1 = 1.58$. With these values, the agent achieves the same near-optimal performance with respect to the high-priority obstacle avoidance task while also managing to improve low-priority subtask performance considerably.
This is because the resulting indifference spaces prevent obstacle collisions (as can be seen in Fig.~\ref{fig:more_obstacle_indifference_space} with $\varepsilon_1 = 1$) but still allow for many actions that can be used to optimize the low-priority task.

In summary, the $\varepsilon_i$ scalars have a strong effect on the behavior and lower-priority subtask performance. 
Too large thresholds are ineffective at forbidding undesired actions, while too small thresholds are too restrictive and prevent the agent from improving subtask performance at all.
However, as described in Sec.~\ref{sec:limitations}, it is straightforward to infer these adequate threshold values by analysis of the higher-priority Q-function, even when the action space is of high dimensionality.

\begin{figure}
	\includegraphics[width=0.7\textwidth]{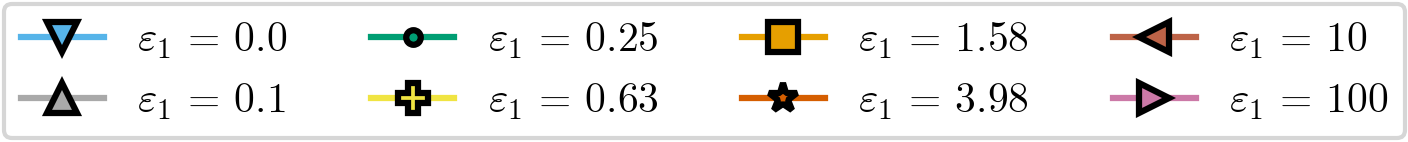}
	\centering
	\begin{subfigure}[b]{0.49\textwidth}
		\centering
		\includegraphics[width=\textwidth]{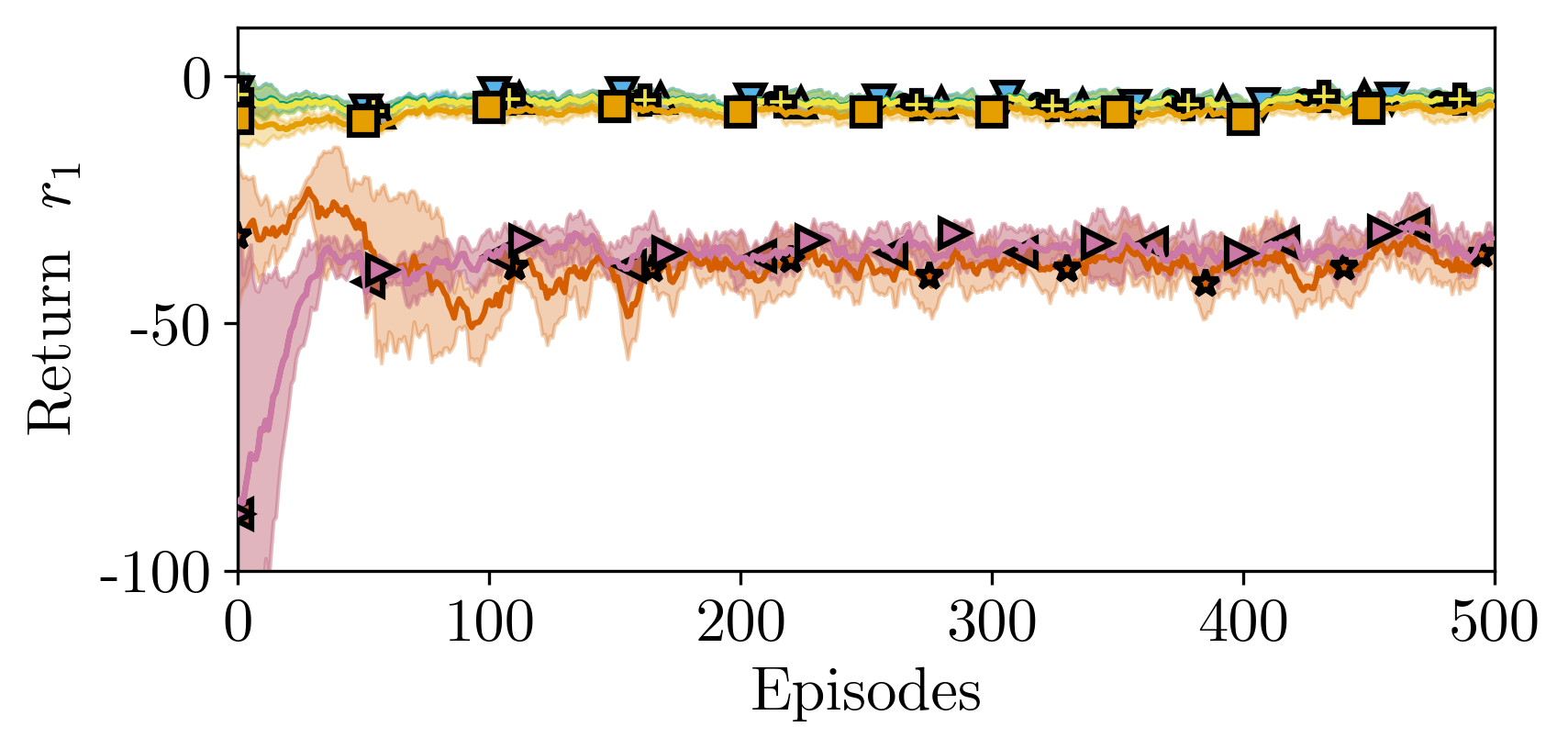}
		\label{fig:2d_env_r1_while_r2_ablation}
	\end{subfigure}
	\hfill
	\begin{subfigure}[b]{0.49\textwidth}
		\centering
		\includegraphics[width=\textwidth]{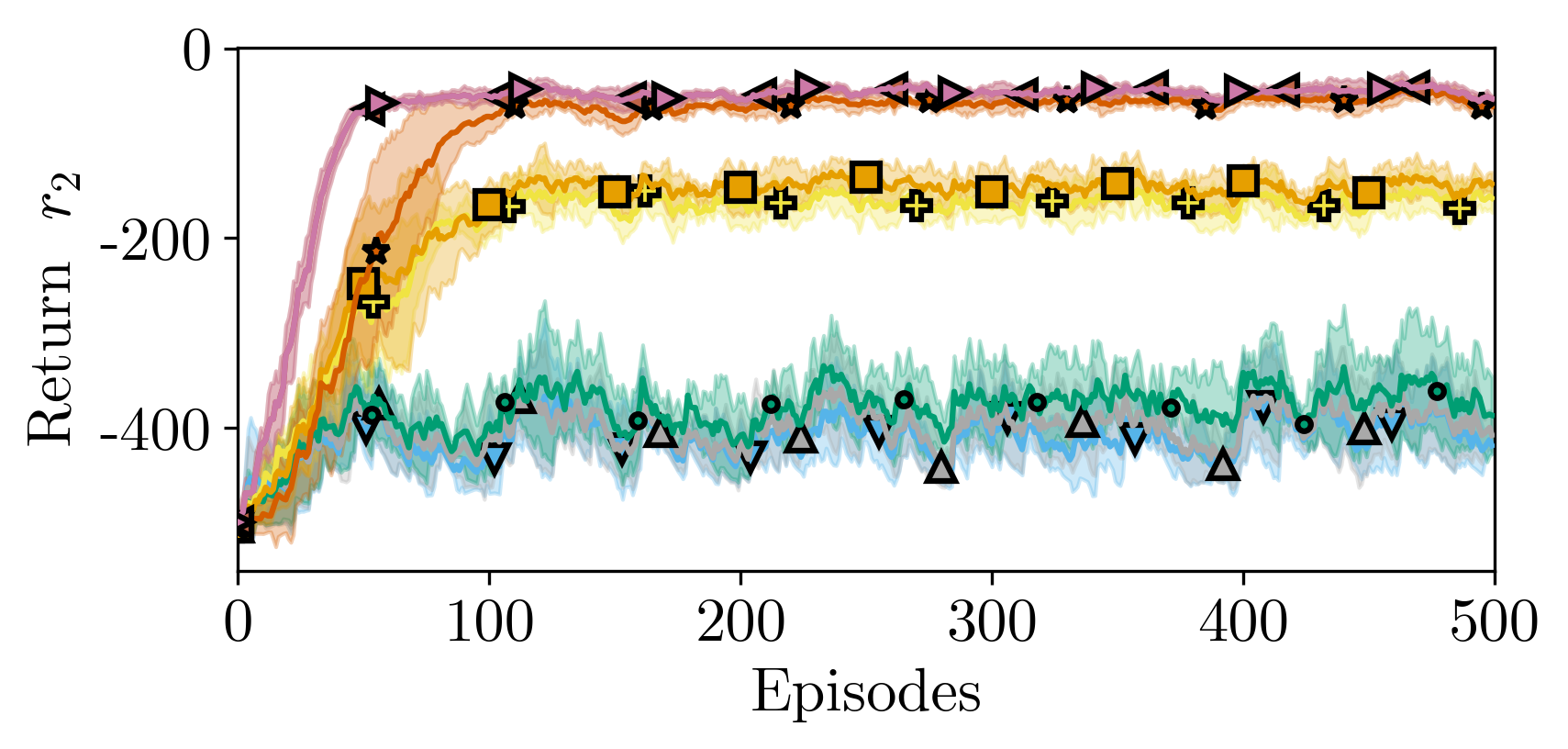}
		\label{fig:2d_env_r2_adaptation_ablation}
	\end{subfigure}
	\hfill
	\vspace{-0.7cm}
	\caption{\textbf{$\varepsilon$ Ablation} in the 2D navigation environment. \textbf{Left}: Episode return of the high-priority obstacle-avoidance task during learning of the low-priority goal-reach task. \textbf{Right}: Episode returns of the low-priority task. High-priority subtask performance decreases and low-priority subtask performance increases with larger $\varepsilon$ values.}
	\label{fig:2d-epsilon-ablation}
	\vspace{-0.5cm}
\end{figure}

\clearpage

\section{Reproducibility}
A GitHub repository with the implementation of the algorithm, experiment setup with hyperparameters, and documentation is available here: \url{https://github.com/frietz58/psqd/}. The repository provides the complete PSQD implementation and can be used to reproduce the results in this paper.

\end{document}